\documentclass{article} % For LaTeX2e
\usepackage{iclr2026_conference,times}

% Optional math commands from https://github.com/goodfeli/dlbook_notation.
%%%%% NEW MATH DEFINITIONS %%%%%

\usepackage{amsmath,amsfonts,bm}

% Mark sections of captions for referring to divisions of figures

% Highlight a newly defined term

% Figure reference, lower-case.

% Figure reference, capital. For start of sentence

% Section reference, lower-case.

% Section reference, capital.

% Reference to two sections.

% Reference to three sections.

% Reference to an equation, lower-case.
\def\eqref#1{equation~\ref{#1}}
% Reference to an equation, upper case

% A raw reference to an equation---avoid using if possible

% Reference to a chapter, lower-case.

% Reference to an equation, upper case.

% Reference to a range of chapters

% Reference to an algorithm, lower-case.

% Reference to an algorithm, upper case.

% Reference to a part, lower case

% Reference to a part, upper case

\def\1{\bm{1}}

% Random variables

% rm is already a command, just don't name any random variables m

% Random vectors

% Elements of random vectors

% Random matrices

% Elements of random matrices

% Vectors

% Elements of vectors

% Matrix

% Tensor
\DeclareMathAlphabet{\mathsfit}{\encodingdefault}{\sfdefault}{m}{sl}
\SetMathAlphabet{\mathsfit}{bold}{\encodingdefault}{\sfdefault}{bx}{n}

% Graph

% Sets

% Don't use a set called E, because this would be the same as our symbol
% for expectation.

% Entries of a matrix

% entries of a tensor
% Same font as tensor, without \bm wrapper

% The true underlying data generating distribution

% The empirical distribution defined by the training set

% The model distribution

% Stochastic autoencoder distributions

 % Laplace distribution

\newcommand{\softmax}{\mathrm{softmax}}

% Wolfram Mathworld says $L^2$ is for function spaces and $\ell^2$ is for vectors
% But then they seem to use $L^2$ for vectors throughout the site, and so does
% wikipedia.

 % See usage in notation.tex. Chosen to match Daphne's book.

%%%%%%%%%%%%%%%%%%%% Figure commands %%%%%%%%%%%%%%%%%%%% 

%\newcommand{\insertWideFigure}[2]{
%    \begin{figure*}[h]
%    \setlength{\abovecaptionskip}{-2pt}
%    \setlength{\belowcaptionskip}{-4pt}
%        \centering
%        \includegraphics[width=\textwidth]{\FIGDIR/#1.pdf}
%	    \vspace{-3mm}
%        \caption{\small #2}
%    	\vspace{-2mm}
%        \label{fig:#1}
%    \end{figure*}
%}

%\newcommand{\matmul}{\textsc{tdpbf16ps}\xspace}

%\newcommnad{\matmul}{\textit{tdpbf16ps}}

%%%%%%%%%%%%%%%%%%%%  TODO commands %%%%%%%%%%%%%%%%%%%% 

\usepackage{amsmath, amsfonts}
\usepackage{algorithmic}
\usepackage{graphicx}
\usepackage{textcomp}
\usepackage{xcolor}
\usepackage{multirow}
\usepackage{mathtools}
\usepackage{tcolorbox}
\usepackage{capt-of}
\usepackage{enumitem}
\usepackage{makecell}
\usepackage{float}
\usepackage{fancyhdr}
\usepackage{colortbl}
\usepackage{arydshln}
\usepackage{xcolor,soul}
\usepackage{tcolorbox}
\usepackage{wrapfig}
\usepackage{titletoc} % for local/partial ToCs
\usepackage{algorithm2e}
\RestyleAlgo{ruled} 
\usepackage{pifont} % http://ctan.org/pkg/pifont
\usepackage{tikz}
\usepackage{booktabs}
\usepackage{listings}
\definecolor{codegreen}{rgb}{0,0.6,0}
\definecolor{codegray}{rgb}{0.5,0.5,0.5}
\definecolor{codepurple}{rgb}{0.58,0,0.82}
\definecolor{backcolour}{rgb}{0.95,0.95,0.92}

\lstdefinestyle{mystyle}{
    backgroundcolor=\color{backcolour},   
    commentstyle=\color{codegreen},
    keywordstyle=\color{magenta},
    stringstyle=\color{codepurple},
    basicstyle=\ttfamily\footnotesize,
    breakatwhitespace=false,         
    breaklines=true,                 
    captionpos=b,                    
    keepspaces=true,                            showspaces=false,                
    showstringspaces=false,
    showtabs=false,                  
    tabsize=2
}

\lstset{style=mystyle}

\usepackage{amsthm}
\newtheorem{definition}{Definition}
\newtheorem{problem}{Problem Formulation}
\newtheorem{theorem}{Theorem}
\theoremstyle{remark}
\newtheorem*{remark}{\underline{Remark}}

\usepackage[utf8]{inputenc}
\DeclareUnicodeCharacter{2460}{\textcircled{1}}
\DeclareUnicodeCharacter{2461}{\textcircled{2}}
% Define custom colors
\definecolor{Reasoning}{HTML}{8dcaf6}
\definecolor{Transition}{HTML}{f8827e}
\definecolor{Execution}{HTML}{fbc08a}

% Define macros using a breakable typewriter wrapper
\newcommand{\Reason}[1]{\textcolor{Reasoning}{\emph{#1}}}
\newcommand{\T}[1]{\textcolor{Transition}{\emph{#1}}}
\newcommand{\Execution}[1]{\textcolor{Execution}{\emph{#1}}}

\DeclareUnicodeCharacter{221A}{\ensuremath{\sqrt{}}} % √
\DeclareUnicodeCharacter{00B0}{\ensuremath{^\circ}} % °
\DeclareUnicodeCharacter{2264}{\ensuremath{\leq}}   % ≤
\DeclareUnicodeCharacter{2265}{\ensuremath{\geq}}   % ≥
\DeclareUnicodeCharacter{2248}{\ensuremath{\approx}}% ≈
\DeclareUnicodeCharacter{2260}{\ensuremath{\neq}}   % ≠
\DeclareUnicodeCharacter{2019}{'}                  % ’ (curly apostrophe)

\definecolor{violet9933FF}{HTML}{9933FF}
\definecolor{green00FF91}{HTML}{00FF91}
\definecolor{blue3333FF}{HTML}{3333FF}
% inline hollow square (about 0.8ex)
\usepackage{tikz}
\newcommand{\hollowsqViolet}{%
  \tikz[baseline=-0.1ex]\draw[draw=violet9933FF,line width=0.95pt]
    (0,0) rectangle (2ex,2ex);%
}

\newcommand{\hollowsqGreen}{%
  \tikz[baseline=-0.1ex]\draw[draw=green00FF91,line width=0.95pt]
    (0,0) rectangle (2ex,2ex);%
}

\newcommand{\hollowsqBlue}{%
  \tikz[baseline=-0.1ex]\draw[draw=blue3333FF,line width=0.95pt]
    (0,0) rectangle (2ex,2ex);%
}

\usepackage{hyperref}
\usepackage{url}
\usepackage{cleveref}
\crefformat{section}{\S#2#1#3} % see manual of cleveref, section 8.2.1
\crefformat{subsection}{\S#2#1#3}
\crefformat{subsubsection}{\S#2#1#3}

\title{Thin\textbf{KV}: Thought-Adaptive KV Cache Compression for Efficient Reasoning Models}

% Authors must not appear in the submitted version. They should be hidden
% as long as the \iclrfinalcopy macro remains commented out below.
% Non-anonymous submissions will be rejected without review.

\author{Akshat Ramachandran$^{1\dagger}$, Marina Neseem$^2$, Charbel Sakr$^2$, Rangharajan Venkatesan$^2$,\\ \textbf{Brucek Khailany}$^2$, \textbf{and} \textbf{Tushar Krishna}$^1$\\
$^{1}$Georgia Institute of Technology, Atlanta, USA \\
$^{2}$NVIDIA Research, Santa Clara, USA\\
$^{1}$\texttt{akshat.r@gatech.edu, tushar@ece.gatech.edu}\\
$^{2}$\texttt{\{mneseem, csakr, rangharajanv, bkhailany\}@nvidia.com} \\
$\dagger$Work done during an internship at NVIDIA}
% The \author macro works with any number of authors. There are two commands
% used to separate the names and addresses of multiple authors: \And and \AND.
%
% Using \And between authors leaves it to \LaTeX{} to determine where to break
% the lines. Using \AND forces a linebreak at that point. So, if \LaTeX{}
% puts 3 of 4 authors names on the first line, and the last on the second
% line, try using \AND instead of \And before the third author name.

\iclrfinalcopy % TODO: Uncomment for camera-ready version, but NOT for submission.
\begin{document}

\maketitle

\begin{abstract}
The long-output context generation of large reasoning models enables extended chain of thought (CoT) but also drives rapid growth of the key–value (KV) cache, quickly overwhelming GPU memory. To address this challenge, we propose ThinKV \footnote{The term may be interpreted either as \emph{Thin} KV or as \emph{Think} KV}, a thought-adaptive KV cache compression framework. ThinKV is based on the observation that attention sparsity reveals distinct thought types with varying importance within the CoT. It applies a hybrid quantization–eviction strategy, assigning token precision by thought importance and progressively evicting tokens from less critical thoughts as reasoning trajectories evolve. Furthermore, to implement ThinKV, we design a kernel that extends PagedAttention to enable efficient reuse of evicted tokens' memory slots, eliminating compaction overheads. Extensive experiments on DeepSeek-R1-Distill, GPT-OSS, and NVIDIA AceReason across mathematics and coding benchmarks show that ThinKV achieves near-lossless accuracy with less than 5\% of the original KV cache, while improving performance with up to 5.8$\times$ higher inference throughput over SoTA baselines.
\end{abstract}

\begin{figure*}[h]
\vspace{-5mm}
  \centering \includegraphics[width = 0.7\linewidth, keepaspectratio]{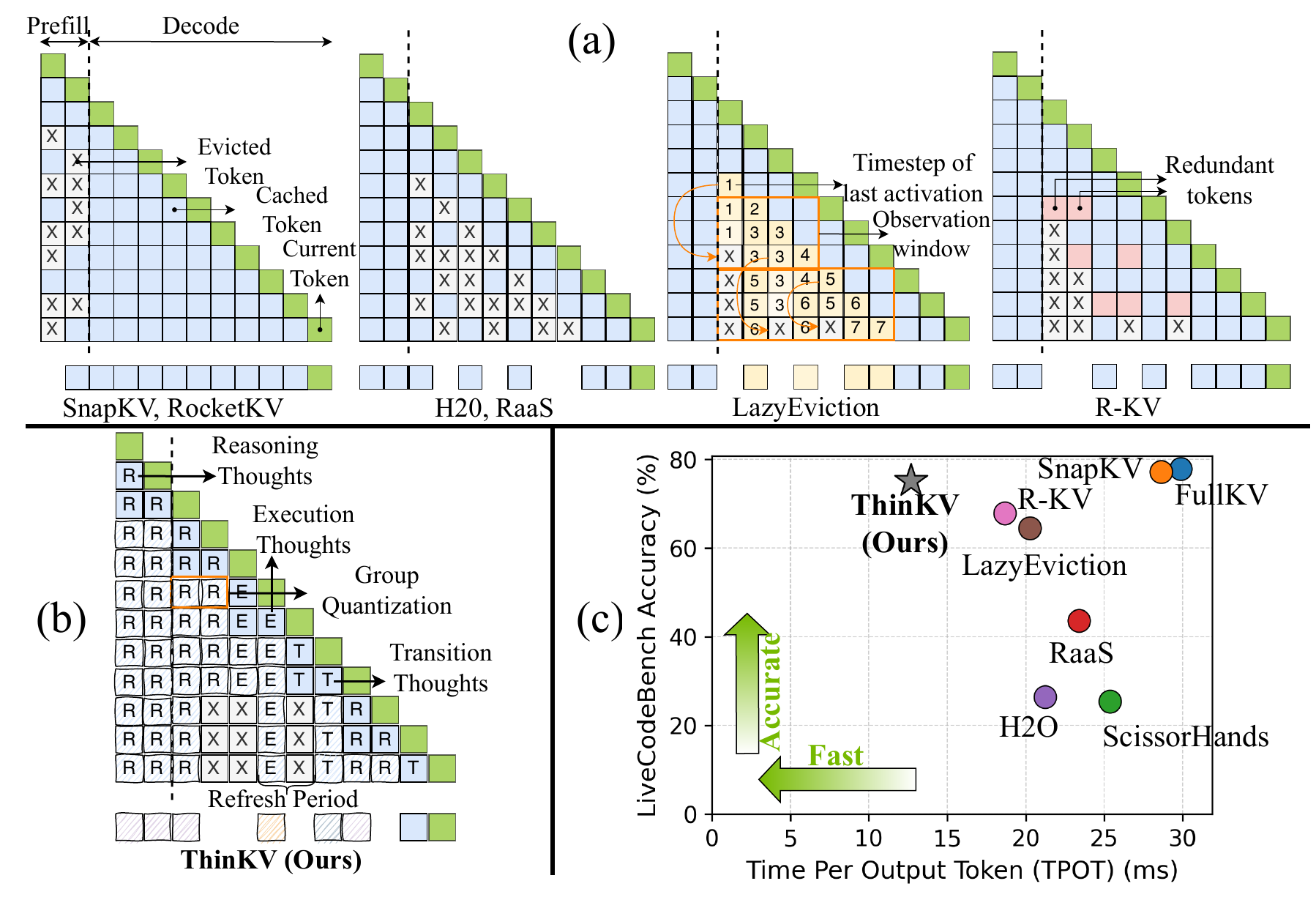}
  \vspace{-6mm}
  \caption{{Illustrative comparison of KV cache compression methods as tokens are generated: (a) Existing techniques: SnapKV, RocketKV, H2O, RaaS, LazyEviction and R-KV, and (b) \textbf{ThinKV (Ours)}. (c) Accuracy vs. TPOT comparison for GPT-OSS-20B evaluated on LiveCodeBench.}}
  \vspace{-6mm}
  \label{fig:teaser}
\end{figure*}

\section{Introduction}
\vspace{-2mm}
\label{sec:introduction}
% long context modeling is important, highlight input long context is available but the new frontier is long output context generation

Long-context modeling \citep{yuan2025native} is a core capability for next-generation LLMs. While early work focused on long-input contexts, the advent of \textbf{Large Reasoning Models (LRMs)}—e.g., OpenAI’s O-series \citep{openai_o1_2024} and DeepSeek-R1 \citep{guo2025deepseek}—has shifted `attention' to \textbf{long-output contexts}, involving generation of thousands of tokens \citep{liu2025pm}. This capability facilitates extended reasoning \citep{zhu2025chain} and long-horizon code generation \citep{seo2025paper2code}.

% Introduce LRMs and talk about challenges
LRMs attain state-of-the-art reasoning accuracy by generating long chains of thought (CoT) \citep{wei2022chain}, producing extended rationales to explore and verify solutions. However, long CoT generation incurs substantial memory overheads due to rapid growth of the key–value (KV) cache during decoding \citep{li2025system}. In code generation \citep{jain2024livecodebench}, for instance, a GPT-OSS-20B \citep{agarwal2025gpt} producing \textbf{$\sim$32K tokens} with batch size 32 requires 50 GB for the KV cache and 40 GB for weights—exceeding the 80 GB of an NVIDIA A100. Since the decode stage is memory-bound \citep{recasens2025mind}, the KV cache becomes the central bottleneck for long-output context generation. KV cache compression thus offers a promising solution. 

\vspace{-2mm}

\subsection[Related Work and Limitations of Existing Compression Techniques]{Related Work and Limitations of Existing Compression Techniques}
\vspace{-2mm}
Existing compression approaches span quantization, eviction, low-rank approximation, and hybrids thereof. Most, however, focus on the prefill phase of long-input tasks \citep{li2024snapkv} (\autoref{fig:teaser}(a)) and are ill-suited for LRMs and long-output generation. A few works study decode-stage compression \citep{zhang2023h2o, shi2025lacache, liu2024kivi} for LLMs, but typically use greedy recency-based eviction (\autoref{fig:teaser}(a)) or uniform quantization, both of which overlook reasoning dynamics, leading to degraded LRM accuracy (\autoref{fig:teaser}(c)). For additional details refer \Cref{sec:appendix_related_works}.

\noindent
\textbf{LRM KV Cache Compression. }Recent work has moved beyond simple recency-based eviction towards methods that partially capture reasoning dynamics (\autoref{fig:teaser}(a)). RaaS \citep{hu2025raas} preserves tokens with re-emergent importance to avoid premature eviction; LazyEviction \citep{zhang2025lazyeviction} delays eviction to retain tokens likely to recur by tracking attention activity; R-KV \citep{cai2025r} combines attention-based importance with redundancy; and PM-KVQ \citep{liu2025pm} progressively reduces token precision during decoding. However, they operate at the token level, making compression decisions that overlook the broader semantic structure of reasoning. This can cause removal of reasoning-critical tokens or limit compression by overweighting less important ones, yielding suboptimal accuracy–efficiency trade-offs (\autoref{fig:teaser}(c)), under high compression.

\noindent
\textbf{System. }Dynamic token eviction creates memory holes, causing internal fragmentation \citep{vllm}. H2O \citep{zhang2023h2o} mitigates this with circular buffers, but these support only contiguous eviction, whereas LRM policies conduct non-contiguous token removal. While other methods \citep{cai2025r} explore gather-based compaction; it requires irregular, index-based memory accesses that contend heavily for HBM bandwidth. Our analysis (\cref{sec:system_observation}) reveals that gather, sharply increases time per output token (TPOT) (\autoref{fig:teaser}(c)), consistent with \citet{vllm}.

\vspace{-2mm}
\subsection{Contributions}
\vspace{-2mm}
Motivated by these limitations, we ask: \emph{Can a KV cache compression framework go beyond token-level heuristics to preserve reasoning-critical information under high compression while maximizing efficiency?} We present \textbf{ThinKV} (\autoref{fig:teaser}(b)), a \textbf{thought-adaptive} hybrid quantization–eviction framework (\cref{sec:hybrid}) with four key components:

\begin{itemize}[align=right,labelsep=2pt,labelwidth=1em,leftmargin=0.5em,nosep]

\item \textbf{Thought Decomposition (\cref{sec:attn_sparsity}, \cref{sec:thought_decomposition}): }We show the CoT in LRMs can be decomposed into distinct thought types, with their differentiation enabled by degree of sparsity in attention weights.

\item \textbf{Think Before you Quantize (TBQ) (\cref{sec:tbq_observation}, \cref{sec:tbq}): }We propose a KV cache quantization scheme that assigns precision to tokens based on the importance of their associated thought type. 
% This enables higher compression by quantizing less important thoughts to lower precision, while maintaining accuracy by preserving reasoning-critical thoughts at higher precision.

\item \textbf{Think Before You Evict (TBE) (\cref{sec:tbe_observation}, \cref{sec:tbe}): }We introduce TBE, a thought-adaptive eviction scheme that leverages inter-thought dynamics to progressively evict tokens.

\item \textbf{Continuous Thinking (\cref{sec:continuous_thinking}): }We design a kernel extending PagedAttention that efficiently reuses evicted memory slots for subsequent tokens without relying on expensive compactions.
\end{itemize}

% Closing statement, results and say empirically we demonstrate such a hybrid scheme achieves higher compression and maximizes throughput
Through algorithm–system co-design, ThinKV delivers aggressive KV cache compression while preserving accuracy and improving inference efficiency (\cref{sec:results}). On mathematics and coding benchmarks with DeepSeek-R1-Distill-Llama, GPT-OSS, and several other LRMs, ThinKV achieves \textbf{near-lossless accuracy with under 5\% of the original KV cache}, outperforming state-of-the-art baselines with up to \textbf{1.68$\times$ lower TPOT} (\autoref{fig:teaser}(c)) and up to \textbf{5.80$\times$ higher throughput}.

\section{Why Quantization+Eviction ?}
\label{sec:hybrid}

The memory footprint of the KV cache can be expressed as 
$\operatorname{Mem}(KV) \propto (I+bL_{\text{gen}})\times a\beta$, where $I$ is the prompt length, $L_{\text{gen}}$ the total number of generated tokens,
\begin{wrapfigure}{R}{0.25\textwidth}
\vspace{-5mm}
  \centering \includegraphics[width = 0.25\textwidth, keepaspectratio]{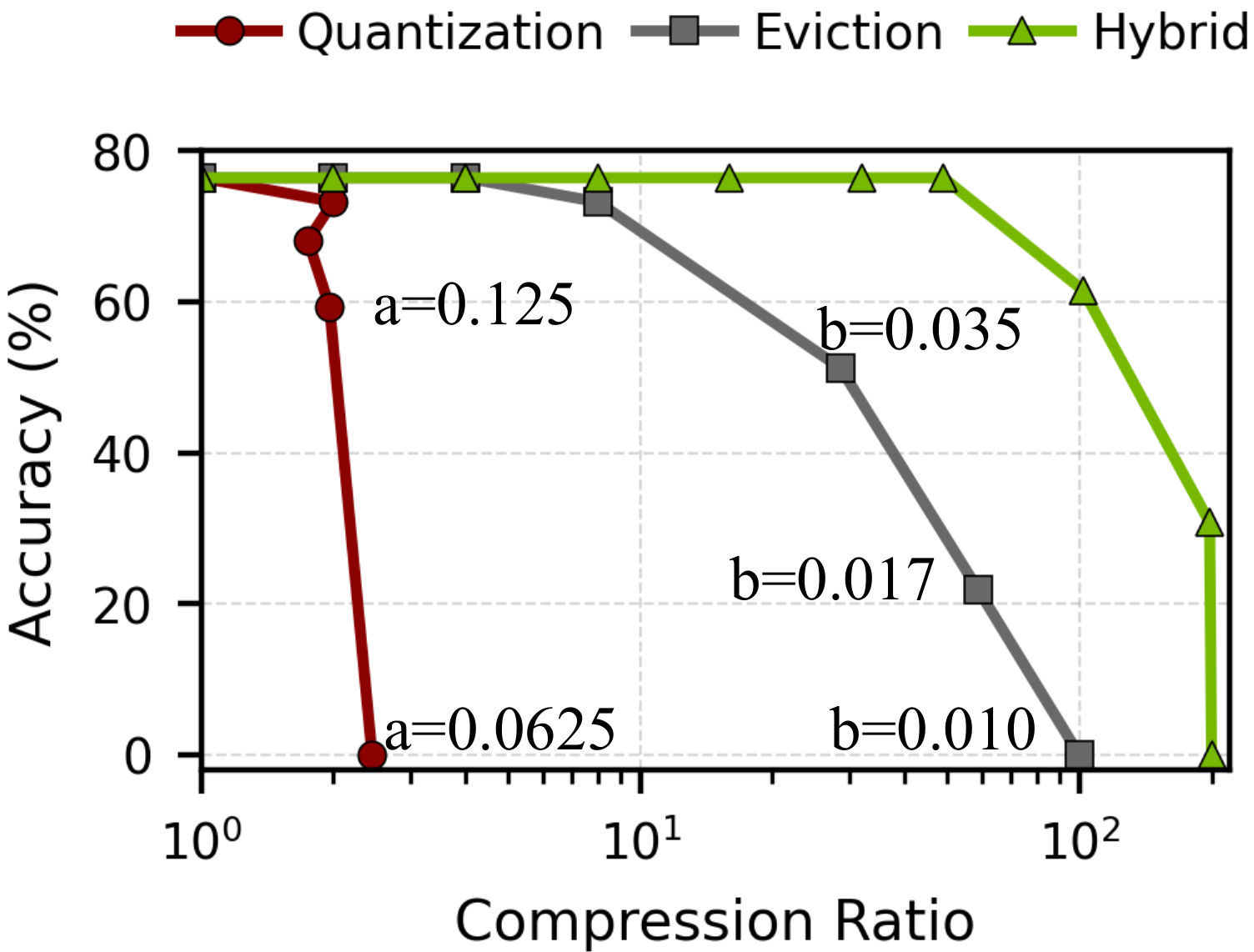}
  \vspace{-8mm}
  \caption{{Accuracy-compression tradeoff of quantization, eviction and hybrid approaches.}}
  \vspace{-6mm}
  \label{fig:hybrid_motivation}
\end{wrapfigure}
$\beta$ denotes the bytes per parameter. The factors $a, b \in [0,1]$ capture memory reductions from quantization and eviction, respectively. Uncompressed KV cache corresponds to $a=1$ (full precision) and $b=1$ (no eviction).

For quantization \citep{ramachandran2024algorithm}, we adopt KIVI \citep{liu2024kivi} as the representative. As shown in \autoref{fig:hybrid_motivation} (GPT-OSS 20B on LiveCodeBench), reducing $a$ fails to proportionally increase compression ratio,   
since in LRMs we find that aggressive quantization inflates $L_{\text{gen}}$, eroding memory savings and simultaneously degrading accuracy.
Under eviction--using R-KV \citep{cai2025r}--reducing $b$ initially increases compression ratio while preserving accuracy. Unlike quantization, eviction does not cause an increase in generation length; however, as $b \rightarrow 0$, accuracy degrades with higher compression.  

Hybrid compression (ThinKV,\cref{sec:methodology}) traces a Pareto-optimal frontier, sustaining high accuracy at much higher compression ratios. We believe that by combining quantization and eviction, it partially regularizes quantization-induced length inflation and maintains accuracy at extreme compression by flexibly trading off token count and precision.

\section{{Motivating Analyses}}
\label{sec:rebuttal_motivation}
In this section, we present three key observations that motivate the design of ThinKV's algorithm.

\subsection{Attention Sparsity for Dynamic Thought Decomposition}
\label{sec:attn_sparsity}
\begin{definition}[LRM Thought Decomposition]
Let $\mathcal{T} = \{c_0,c_1,\dots,c_{|\mathcal{T}|-1}\}$ denote the set of thought categories.  
During generation, an $L$ layer LRM produces a sequence $(y_0,\dots,y_{n-1})$, where each $y_i$ is a discrete token. At decoding step $i$, the cache of layer $\ell$ is denoted by $S_i^{\ell}$, representing the set of stored KV pairs up to that point. Thought decomposition is defined as,
\begin{itemize}[align=right,labelsep=2pt,labelwidth=1em,leftmargin=0.5em,nosep]
    \item For each step $i \in [n]$ and generated token $y_i$, a categorization function associates a category label $c_j$ for $j \in [|\mathcal{T}|]$ as, $\phi : \{y_0,\dots,y_{n-1}\} \to \mathcal{T}, ~~ \phi(y_i) = c_j$. 

    \item Each token generates one KV entry per layer, which is assigned a category as identified above. Formally, $S_{i}^{\ell}\!\setminus S_{i-1}^{\ell}=\{(K_i^{\ell},V_i^{\ell},c_j)\}$, where the KV entry is associated with its thought type $c_j$.

\end{itemize}
\end{definition}

\begin{wrapfigure}{R}{0.35\textwidth}
\vspace{-7mm}
  \centering \includegraphics[width = 0.34\textwidth, keepaspectratio]{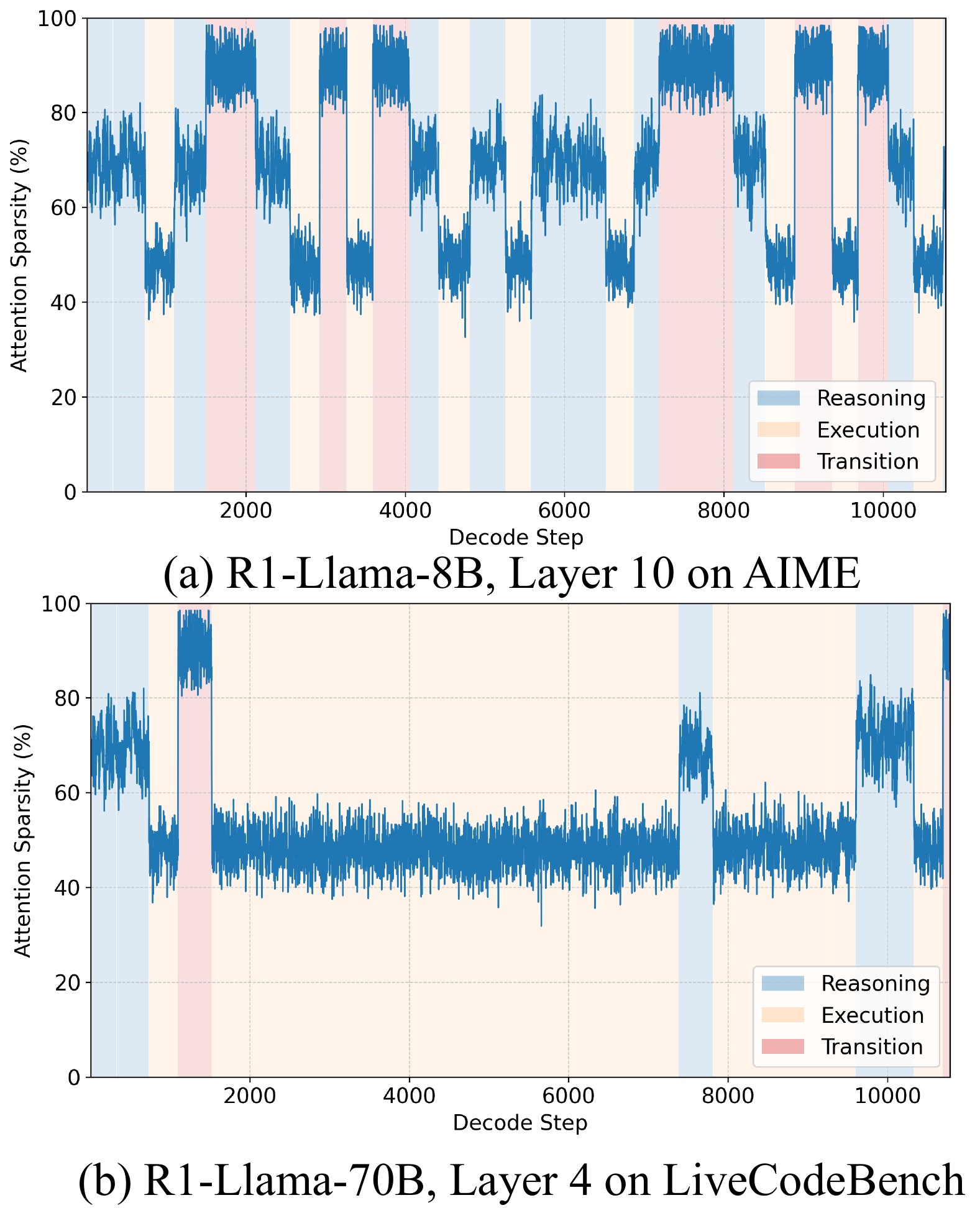}
  \vspace{-4.5mm}
  \caption{{Layer-wise attention sparsity across decode steps for R1-Llama-8B on AIME and R1-Llama-70B on LiveCodeBench.}}
  \vspace{-9mm}
  \label{fig:sparsity_motivation}
\end{wrapfigure}
An exact realization of $\phi$ is nontrivial. Prior works approximate $\phi$ by maintaining a keyword list for each category; \citet{venhoff2025understanding} found ${|\mathcal{T}|}$=8 categories, while \citet{chen2025seal} identified ${|\mathcal{T}|}$=3. However, keyword-based methods fail when models generate lexical variations and tokens outside keyword lists \citep{agarwal2025gpt}. 

We present an empirical observation that enables a generalizable approximation of $\phi$, based on the sparsity pattern \cite{ramachandran2025accelerating} of the normalized attention scores\footnote{The normalized attention scores are defined as $\operatorname{softmax}(qK^\top)$ and sparsity is measured by setting a threshold at $1\%$ of the row-wise maximum, following \citet{zhang2023h2o}.}. \autoref{fig:sparsity_motivation} reports layer-wise sparsity ratios for {two LRMs (R1-Llama-8B and -70B) on AIME and LiveCodeBench prompts respectively}. We draw the following key observations: 

\noindent
\textbf{Observation 1a: }The attention sparsity pattern across decode steps exhibits a tri-modal distribution.

To \textit{only} interpret the sparsity regions, we follow \citet{chen2025seal} and assign representative keywords (\Cref{sec:appendix_keywords}) as illustrative labels. This categorization yields three thought types (${|\mathcal{T}|}=3$): \textbf{reasoning (R)}, involving systematic thinking; \textbf{execution (E)}, encompassing calculations, or code generation; and \textbf{transition (T)}, capturing uncertainty and backtracking behavior.

\noindent
\textbf{Observation 1b: }$\mathcal{T}=\{R, T, E \}$, with \textbf{T} thoughts exhibiting the highest sparsity, followed by \textbf{R} thoughts, while \textbf{E} thoughts have the lowest sparsity.

Some layers exhibit more than three sparsity regimes or ambiguous boundaries (\Cref{sec:attention_sparsity_appendix}). As shown in \cref{sec:results}, fixing $|\mathcal{T}|=3$ and choosing the optimal layer subset $\mathcal{L}^*$ achieves the best accuracy.

\subsection{LRM Thought Importance}
\label{sec:tbq_observation}
\begin{wrapfigure}{R}{0.35\textwidth}
\vspace{-7mm}
  \centering \includegraphics[width = 0.34\textwidth, keepaspectratio]{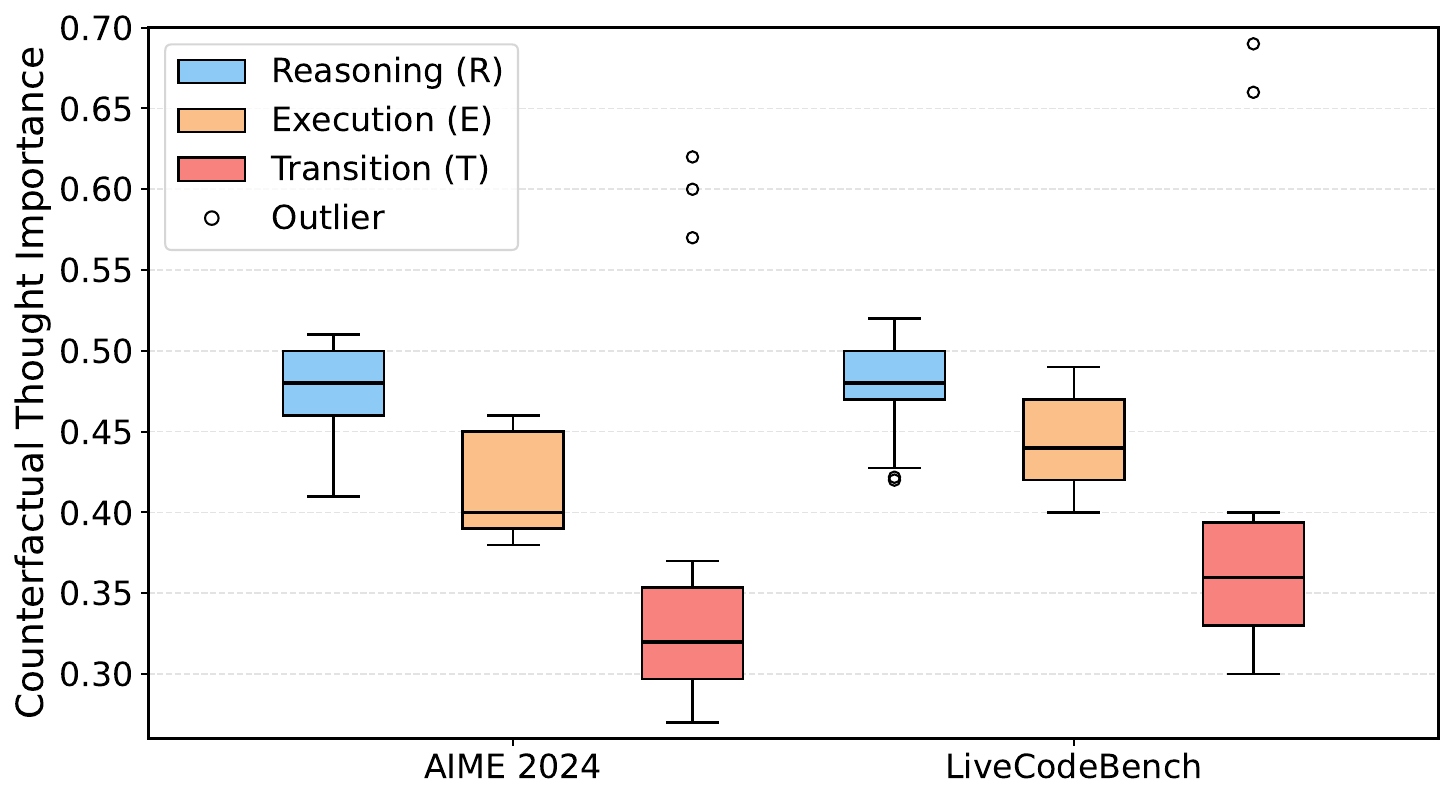}
  \vspace{-5mm}
  \caption{{Counterfactual importance of thought categories for GPT-OSS-20B on AIME and LiveCodeBench.}}
  \vspace{-5mm}
  \label{fig:thought_importance}
\end{wrapfigure}
We examine relative thought importance as the basis for our thought-adaptive quantization scheme. Consider an LRM CoT output consisting of $N$ thought segments ($Y_i$), followed by a final answer $A$. Inspired by \citet{bogdan2025thought}, we measure the counterfactual importance of each segment $Y_i$ by computing the KL divergence between $A$'s distributions obtained with and without $Y_i$, averaged over 50 rollouts. \autoref{fig:thought_importance} presents thought importance for GPT-OSS-20B on AIME and LiveCodeBench.

\noindent
\textbf{Observation 2. }We observe a clear hierarchy of thought importance: \textbf{R $>$ E $>$ T}. Interestingly, we find outlier T thoughts with unusually high importance which correspond to backtracking behavior and removing them causes the model to loop endlessly (see example in \Cref{sec:lrm_output_example}). 

\begin{wrapfigure}{R}{0.35\textwidth}
\vspace{-9mm}
  \centering \includegraphics[width = 0.34\textwidth, keepaspectratio]{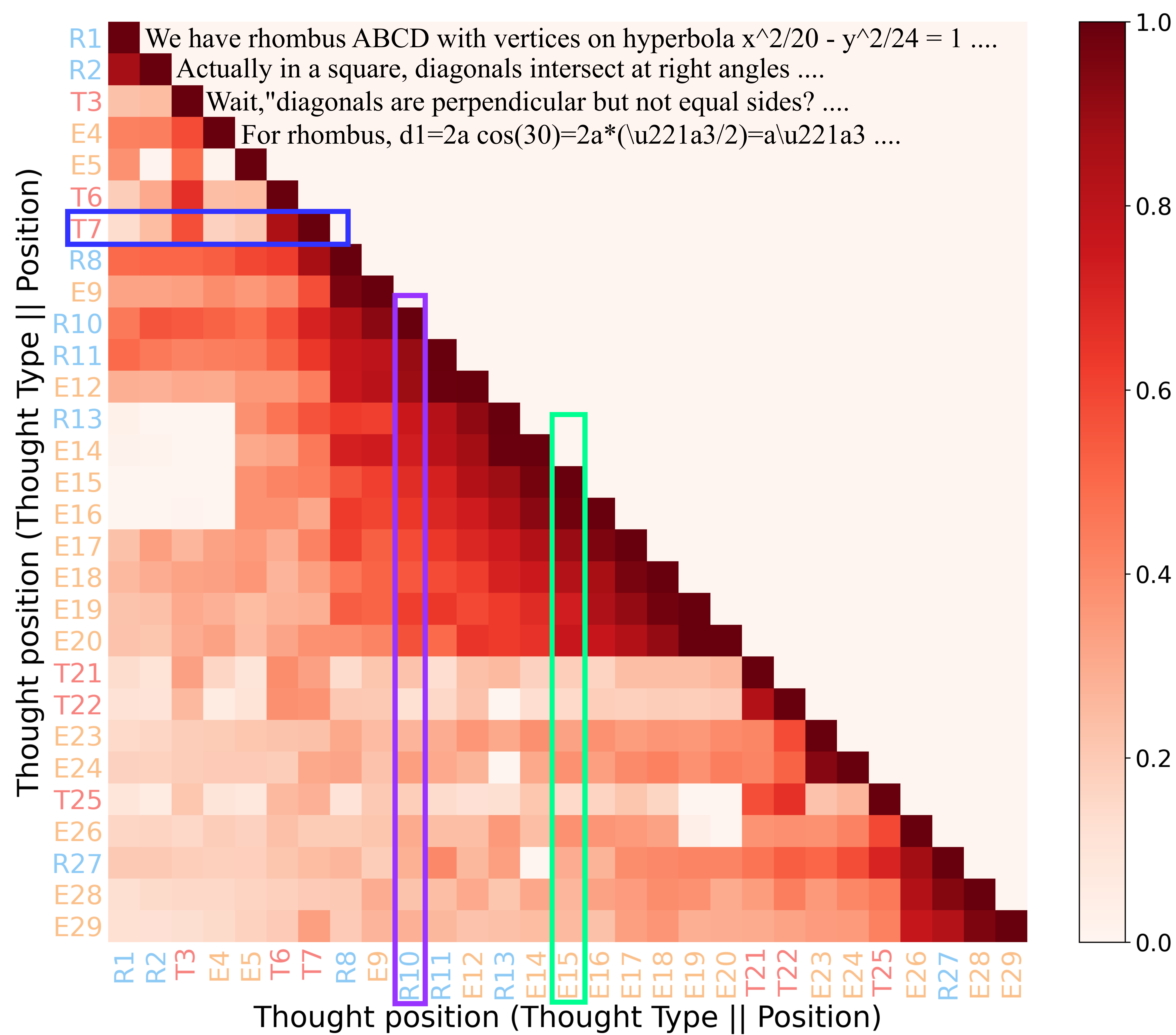}
  \vspace{-4mm}
  \caption{{Pairwise thought associations for GPT-OSS-20B on AIME. $c_j\alpha$ denotes thought segment type and its position in CoT.}}
  \vspace{-8mm}
  \label{fig:thought_association}
\end{wrapfigure}

\subsection{LRM Thought Association}
\label{sec:tbe_observation}
We analyze inter-thought dynamics by measuring pairwise associations \citep{bogdan2025thought}. For $(Y_i,Y_j), j>i$, we suppress attention to $Y_i$ (all layers and heads) and compute the KL divergence of $Y_j$’s logits, averaging over its tokens to obtain a directed association score, indicating the extent $Y_j$ depends on $Y_i$. \autoref{fig:thought_association} illustrates the influence of thought $Y_i$ (X-axis) on subsequent thoughts $Y_j$ (Y-axis) during generation for an AIME prompt (additional visualizations in \Cref{sec:appendix_associations}).

\textbf{Observation 3. }With every \textbf{T} thought, all prior thought segments become progressively less influential (fewer tokens need to be retained), underscoring its role in altering the reasoning trajectory. Note \textbf{R} and \textbf{E} segments highlighted with \hollowsqViolet and \hollowsqGreen, respectively. Additionally, T thoughts are weakly influenced by prior context (high sparsity) (\hollowsqBlue), while E thoughts depend heavily on context bounded between consecutive transitions (low sparsity), bolstering \emph{Observation 1b}.
\section{{ThinKV Methodology}}
\label{sec:methodology}

% Add a line here
In this section, we present ThinKV’s hybrid scheme, which first decomposes tokens into distinct thought types (\cref{sec:thought_decomposition}) and then applies thought-adaptive quantization (\cref{sec:tbq}) and eviction (\cref{sec:tbe}).

\subsection{\texorpdfstring{Attention Sparsity Guided Construction of $\phi$}{Attention Sparsity Guided Construction of phi}}
\label{sec:thought_decomposition}
Building on the observations in \cref{sec:attn_sparsity}, we now detail how ThinKV leverages attention sparsity to dynamically identify thought types, forming the basis of it's adaptive compression strategy.

\noindent
\textbf{Offline Calibration. }We use kernel density estimation (KDE) \citep{parzen1962estimation} to derive the $|\mathcal{T}|-1$ sparsity thresholds ${\Theta}=\{\theta_1, \dots, \theta_{|\mathcal{T}|-1}\}$ that separate thoughts. From a calibration set of $P$ prompts, we estimate KDE per prompt and select the layer subset $\mathcal{L}^*$ that exhibits $|\mathcal{T}|$ modes. We extract $|\mathcal{T}|-1$ thresholds by identifying local minima between modes (statistical term), and compute final thresholds by averaging across all prompts and layers in $\mathcal{L}^*$. Refer \Cref{sec:appendix_calibration} for algorithm.

\noindent
\textbf{Decode-Time Behavior. }During generation, $\phi$ is approximated by averaging sparsity over $\mathcal{L}^*$ and comparing with thresholds $\Theta$ to determine the thought type. From \autoref{fig:sparsity_motivation} and consistent with \citet{chen2025seal}, thought segments\footnote{A contiguous span of tokens assigned to the same thought type.} in the CoT typically span 100–300 tokens. We therefore set a refresh interval of $\tau=128$ steps, updating categories only at these intervals to minimize overhead.
\subsection{Think Before you Quantize (TBQ)}
\label{sec:tbq}
\begin{problem}[Thought-Adaptive Quantization]
\label{def:tbq}
Let $\mathcal{B} = \{b_0, b_1, \dots, b_{|\mathcal{T}|-1}\}$ denote the set of available quantization bit-precisions, ordered such that $b_0 < b_1 < \dots < b_{|\mathcal{T}|-1}$.  
We define a KV cache quantization policy that allocates precision to tokens according to thought importance:
\begin{itemize}[align=right,labelsep=2pt,labelwidth=1em,leftmargin=0.5em,nosep]
    \item Define an importance function $\rho: \mathcal{T} \to \mathbb{N}$ that assigns each thought type $c_j \in \mathcal{T}$ a score $\rho(c_j)$.  
    We then construct a mapping $\psi: \mathcal{T} \to \mathcal{B}$ such that higher importance implies higher precision, i.e., $\rho(c_{j_1}) > \rho(c_{j_2}) \;\;\Rightarrow\;\; \psi(c_{j_1}) \geq \psi(c_{j_2})$.

    \item Each new KV entry $(K_i^{\ell},V_i^{\ell},c_j) \in S_i^{\ell}\!\setminus {S}_{i-1}^{\ell}$ is quantized with bit-precision $\psi(c_j)$, yielding $(\tilde{K}_i^{\ell},\tilde{V}_i^{\ell},c_j)$, where $\tilde{K}_i^{\ell}, \tilde{V}_i^{\ell}$ denote the quantized KV representations.
\end{itemize}
\end{problem}

Building on the observed thought importance in \cref{sec:tbq_observation}, $\rho(R)=2, \rho(E)=1, \rho(T)=0$. We construct $\mathcal{B}=\{2,4,8\}$ with ternary for 2-bit, NVFP4 \citep{alvarez2025nvfp4} for 4-bit, and FP8 for 8-bit. Ternary and NVFP use group quantization with $g=16$ and a shared FP8 (E4M3) scale factor, whereas FP8 employs a per-tensor FP32 scale factor (see \Cref{sec:quant_formats}). We assign \textbf{R}, \textbf{E}, and \textbf{T} thought tokens to 8-, 4-, and 2-bit precision, respectively. Notably, as shown in \cref{sec:results}, \textbf{R} tokens maintain comparable accuracy even at 4-bit, allowing adoption of 4-bit for \textbf{R} in practice without loss of performance. Following \cite{liu2024kivi}, keys are quantized per-channel while values are quantized per-token. A buffer $B_{\texttt{buf}}$ of size $g$ stores tokens in full precision until the group size is reached, after which they are group quantized. \autoref{fig:thinkv} (TBQ) presents an example with $g=4$. 
\begin{figure*}[t]
  \centering \includegraphics[width = 0.8\linewidth, keepaspectratio]{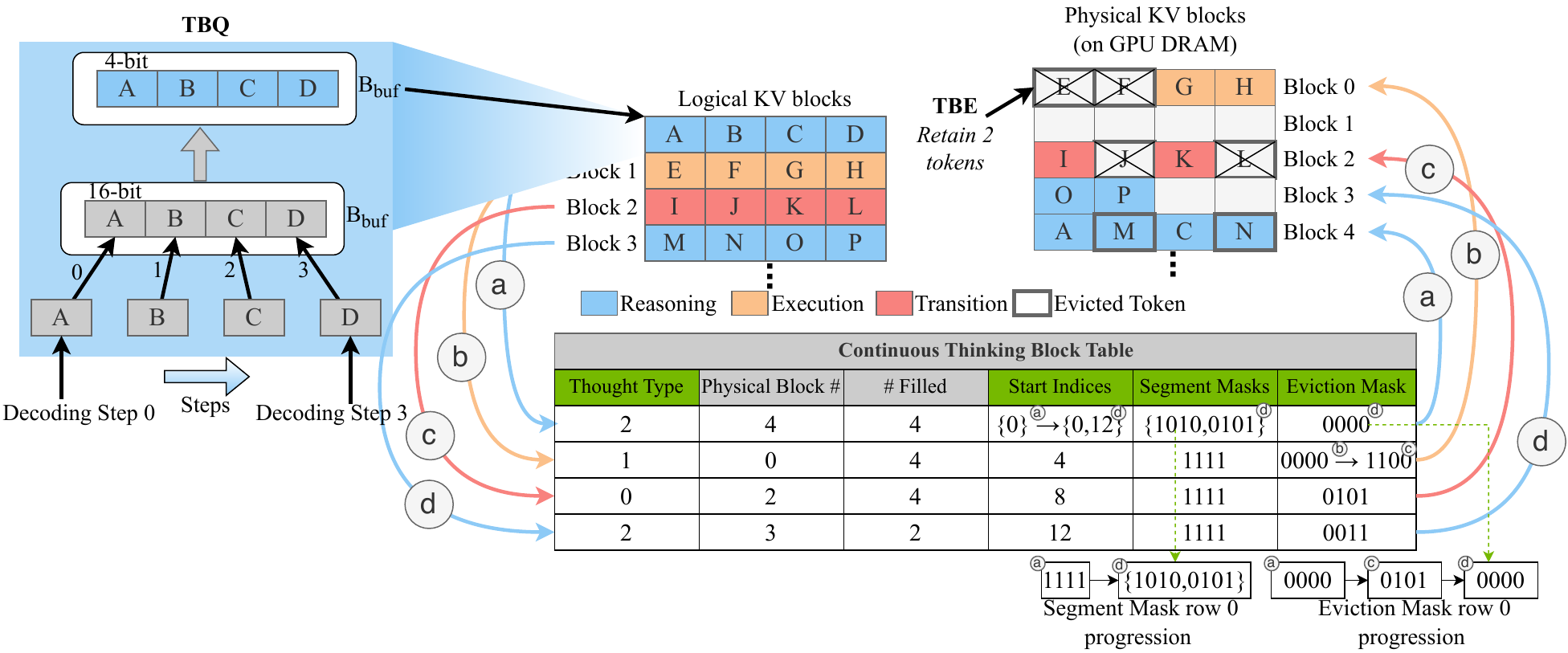}
  \vspace{-4.7mm}
  \caption{Walkthrough of ThinKV with $\tau = g = \text{block size} = 4$, $R=\{2\}$.}
  \vspace{-6mm}
  \label{fig:thinkv}
\end{figure*}

\subsection{Think Before you Evict (TBE)}
\label{sec:tbe}
\begin{problem}[Thought-Adaptive Eviction]
\label{def:tbe}
Let $k$ be the cache budget, $S_i^{\ell}(c_j) \subseteq S_i^{\ell}$ be the KV entries of a thought segment of type $c_j$ and $\mathcal{R}=\{R_0, R_1, \dots, R_{m-1} \}$ denote the set of $m$ retention rates, in descending order, where $R_n$ specifies the number of tokens to be preserved when a segment is selected for eviction the $n$-th time. Eviction policy $\pi: S_i^{\ell}(c_j) \mapsto S_i^{\ell *}(c_j)$ is defined as,

\begin{itemize}[align=right,itemindent=1em,labelsep=2pt,labelwidth=1em,leftmargin=0pt,nosep]

    \item \textbf{Case 1:} If a reasoning trajectory-changing thought $c_t$ is generated, $\pi$ progressively evicts preceding thoughts such that $|S_i^{\ell*}(c_j)| = \min{(|S_i^{\ell}(c_j)|,R_n})$, where $n$ identifies number of times preceding thought $c_j$ has been selected for eviction (i.e., the number of trajectory changes in reasoning).

    \item \textbf{Case 2: }If no $c_t$ thoughts are generated, but $|S_i^{\ell}| > k$, we find the oldest and least important thought segment to apply $\pi$ until $|S_i^{\ell*}| \leq k$.

\end{itemize}
\end{problem}

Following from the observation in \cref{sec:tbe_observation}, transition thoughts are the reasoning trajectory-changing thoughts $c_t$. Since we employ a refresh period of $\tau=128$, every thought segment contains 128 tokens. Therefore, following Problem Formulation \autoref{def:tbe}, we define the retention schedule as $\mathcal{R}=\{64, 32, 16, 8, 4\}$ for all thought types, with a minimum retention of 4 tokens per segment (see \autoref{fig:ablation2}(a)). At each transition thought $c_t$, the eviction policy $\pi$ anneals preceding segments (including previous transitions) by reducing them to the next lowest retention level in $\mathcal{R}$ (see \autoref{fig:thinkv}). With successive transitions, all previous thought segments are progressively shrunk until the minimum retention value is reached. If no $c_t$ occurs or all segments before the current $c_t$ are already at their minimum, $\pi$ evicts from the oldest and least important segment to its next lowest retention level in $\mathcal{R}$. TBE is a proactive eviction scheme that operates at the granularity of thought segments, evicting large sets of low-importance tokens as opportunities arise rather than waiting for cache saturation and stepwise per-token removal. This strategy reduces eviction frequency and, as shown in \cref{sec:results}, minimizes overhead.

% https://stackoverflow.com/questions/46409846/using-k-means-with-cosine-similarity-python Cosine similarity vs K means
\noindent
\textbf{Eviction Policy ($\pi$). }We apply K-means clustering to post-RoPE key embeddings \citep{he20252}, with K$=\min(|S_i^{\ell}(c_j)|, R(m,c_j))$. The cluster centroids correspond to keys that are retained, and the corresponding value tokens are preserved. An illustration is provided in \Cref{sec:appendix_kmeans}.

\section{ThinKV System Implementation}
We introduce \emph{Continuous Thinking} (CT), an extension of PagedAttention \citep{vllm} to enable in-place memory reuse of evicted KV tokens, without expensive gather-based compactions. 
\label{sec:continuous_thinking}
\subsection{The Cost of Gather-Based Compaction}
\label{sec:system_observation}
\begin{wrapfigure}{R}{0.35\textwidth}
\vspace{-12mm}
  \centering \includegraphics[width = 0.34\textwidth, keepaspectratio]{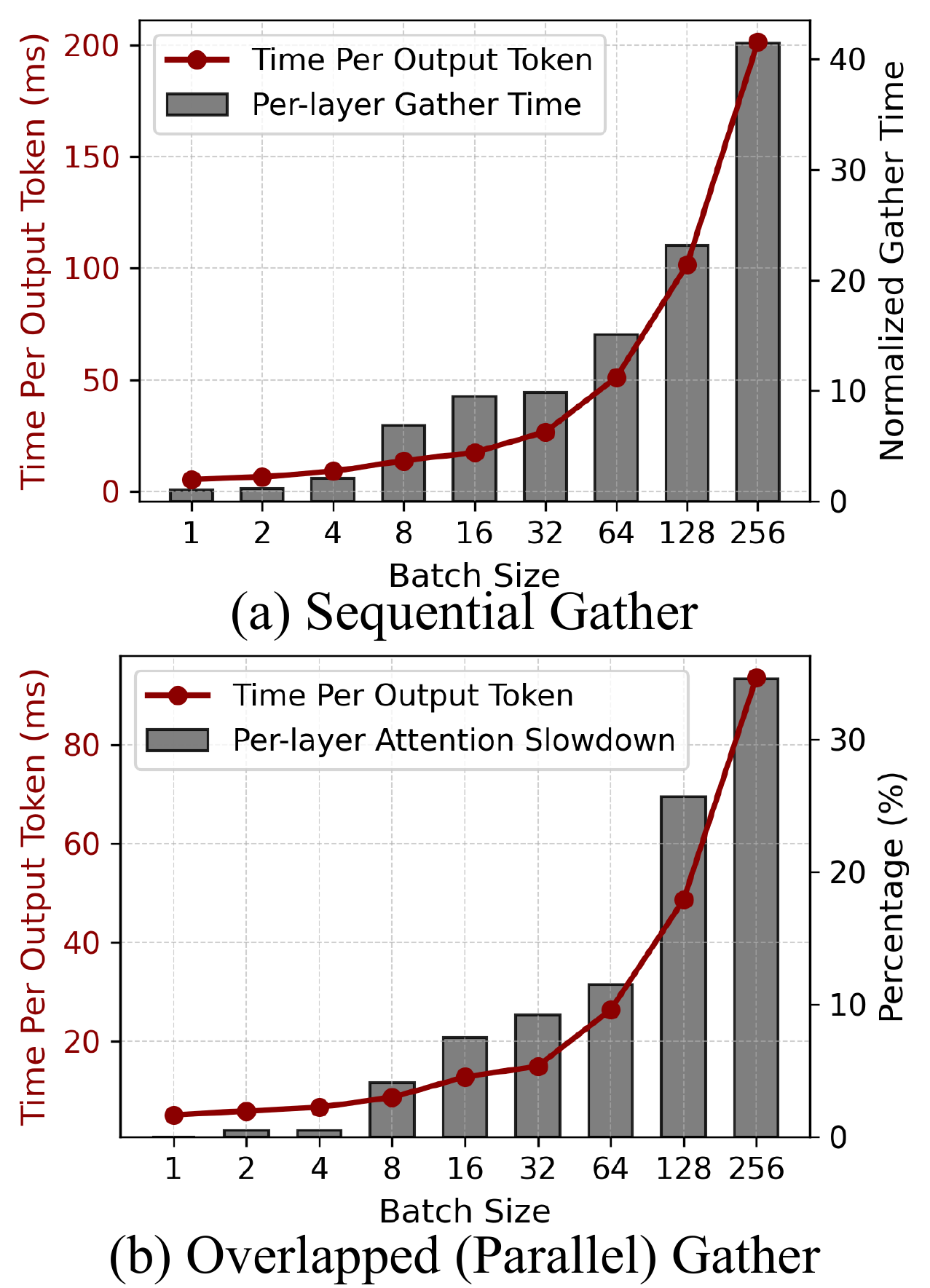}
  \vspace{-5mm}
  \caption{{Performance of sequential and overlapped gather kernel on R1-Llama-8B.}}
  % Make it normalized TPOT
  \vspace{-9mm}
  \label{fig:system_motivation}
\end{wrapfigure}
Existing LRM eviction methods drop non-contiguous tokens from arbitrary positions within the CoT, causing internal fragmentation that requires gather-based compaction. To quantify its overhead, we study R-KV \cite{cai2025r} with a 1024-token budget. We implement two Triton gather kernels: (a) a sequential variant and (b) an overlapped variant employing separate CUDA streams to run concurrently. \autoref{fig:system_motivation} reports kernel performance on DeepSeek-R1-Distill-Llama-8B.

\noindent
\textbf{Observation 4a (Sequential). }Per-layer gather overhead grows sharply with batch size (\autoref{fig:system_motivation}(a)), causing up to $37\times$ TPOT slowdown. 

\noindent
\textbf{Observation 4b (Overlapped). }At small batch sizes, the gather cost is effectively hidden, yielding lower TPOT relative to the sequential case. As batch size grows, however, overlapped gather begins to interfere with subsequent-layer's attention, as shown in \autoref{fig:system_motivation}(b). Specifically, contention arises on HBM bandwidth, where the gather kernel’s KV writes conflict with the attention kernel’s KV reads. This contention inflates attention time (up to $\sim$35\% slowdown), and thus causes higher TPOT.

% (observed on A100 and H200 GPUs using NVIDIA NSight (\cite{nvidia2025nsight})).

\subsection{Continuous Thinking (CT)}
\noindent
\textbf{Block Table. }PagedAttention maintains a block table for each request and each layer. \autoref{fig:thinkv} (see \Cref{sec:thinkv_walkthrough} for detailed walkthrough) shows the modified block table, recording the following information (new fields in green),

\vspace{-1mm}
\begin{itemize}[align=right,labelsep=2pt,labelwidth=1em,leftmargin=0.5em,nosep]
    \item \emph{Physical block \#} and \emph{\# Filled}: KV block index in GPU memory and its token count. 

    \item \emph{{Thought type}}: Thought type of tokens in a block; CT implements thought-aware paging.  

    \item \emph{{Start indices}}: Records the start position of the thought segment of tokens in the physical block.
    
    \item \emph{{Segment masks}:} If there are multiple start indices, the segment mask is a bit vector (length=block size) that marks the locations corresponding to each start index with a $1$.

    \item \emph{{Eviction mask}}: A bit vector marking positions of tokens evicted by TBE with $1$s. 
\end{itemize}

\noindent
\textbf{TBE with CT. }The CT kernel collaborates with TBE to perform eviction. As shown in \autoref{fig:thinkv}, TBE selects segments for progressive eviction using the \emph{thought type} and \emph{start index} fields. Tokens marked for eviction are not immediately removed; instead, they are soft-marked in the \emph{eviction mask}, with actual removal deferred until new tokens arrive to overwrite into the evicted slots. 

\noindent
\textbf{Efficient Memory Management. }When new tokens of a thought type are generated, the CT kernel uses the \emph{eviction mask} to identify reclaimable slots in existing blocks of the same type. The \emph{start index} of the new thought segment is appended to the existing block table entry, and the \emph{segment mask} updated to mark its token positions. By reusing slots in place, CT avoids compaction and eliminates fragmentation. Moreover, tokens need not be reordered during attention computation, since attention is permutation-invariant (\Cref{sec:permutation_invariance}). Therefore, our modifications leave the PagedAttention kernel for attention computation unchanged enabling seamless integration with serving frameworks.

\section{Evaluation}
\vspace{-2mm}
\label{sec:results}
\subsection{Experimental Setup}

\noindent
\textbf{Models and Datasets. }We evaluate on DeepSeek-R1-Distill-Llama (8B and 70B), DeepSeek-R1-Distill-Qwen-14B, GPT-OSS (20B and 120B), QwQ-32B, AceReason-Nemotron-14B, and MobileLLM-R1-950M. Evaluations span mathematics (MATH-500 \citep{lightman2023lets}, AIME \citep{maa_aime_2024}, GSM8K \citep{cobbe2021training}) and code generation (LiveCodeBench \citep{jain2024livecodebench}). For calibration, we randomly sample 100 prompts from s1K \citep{muennighoff2025s1}.

\begin{figure*}[t]
\vspace{-2mm}
  \centering \includegraphics[width = 0.9\linewidth, keepaspectratio]{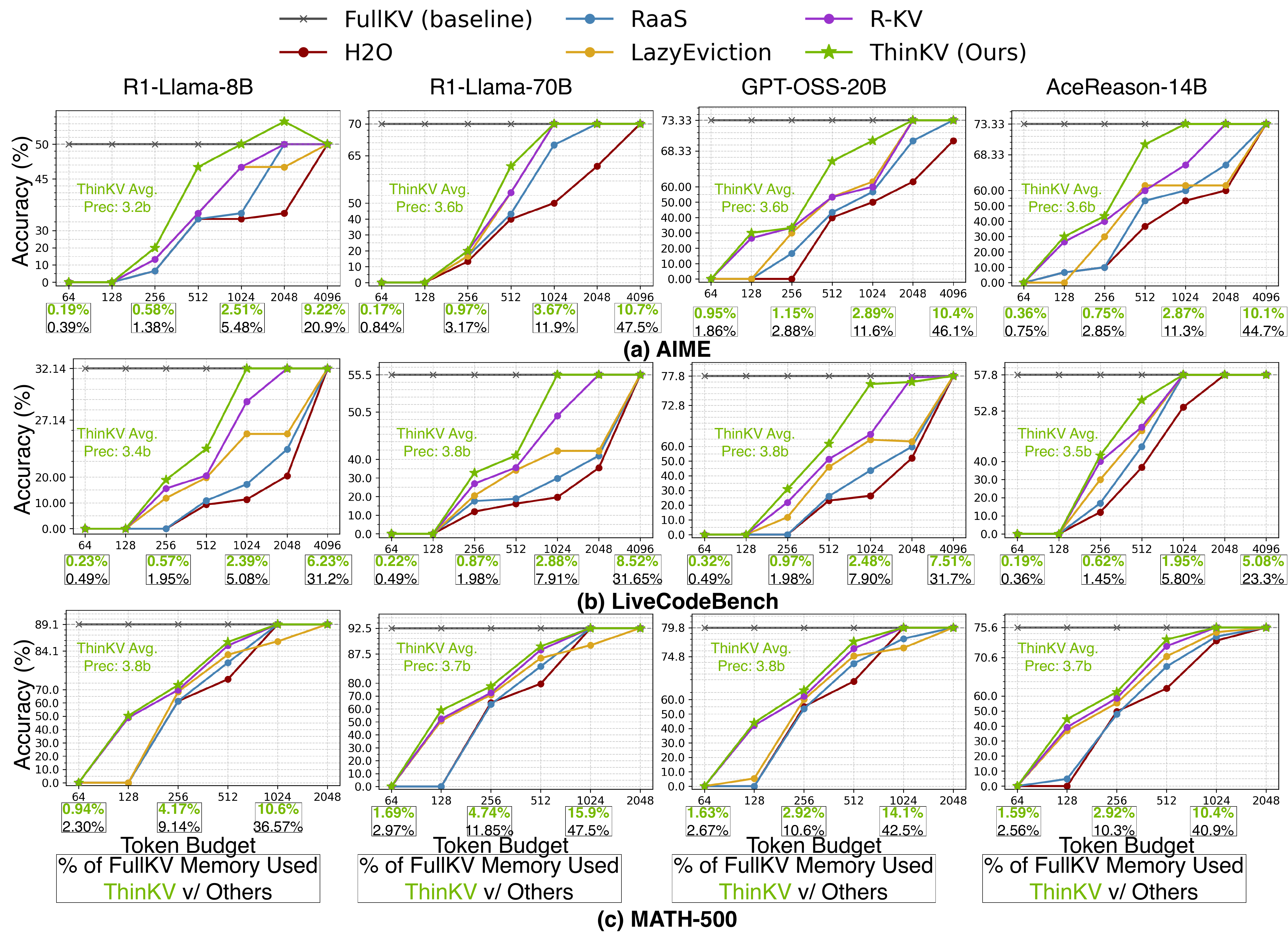}
  \vspace{-5mm}
  \caption{ThinKV compared with SoTA eviction baselines, reported as pass@1 accuracy.}
  \vspace{-6mm}
  \label{fig:eviction_main_results}
\end{figure*}

\noindent
\textbf{Hyperparameters. }We set number of thoughts $|\mathcal{T}|=3$, optimal calibration layers $|\mathcal{L}^*|=4$, refresh rate $\tau=128$, group size $g=16$, retention rates $\mathcal{R}=\{64,32,16,8,4\}$ and CT block size = 8. \textbf{R} and \textbf{E} thoughts are quantized to 4-bits and \textbf{T} thoughts to 2-bits. 

\noindent
\textbf{Baselines. }We compare accuracy against eviction baselines, H2O \citep{zhang2023h2o} (LLMs) and RaaS \citep{hu2025raas}, R-KV \citep{cai2025r}, LazyEviction \citep{zhang2025lazyeviction} (LRMs), as well as quantization baselines, KIVI \citep{liu2024kivi} (LLMs) and PM-KVQ \citep{liu2025pm} (LRMs).

\noindent
\textbf{System Optimizations. }We implement ThinKV in a hardware-friendly manner for GPUs. We design optimized CUDA kernels for group quantization and following \citet{liu2024kivi}, we fuse dequantization with matrix multiplication to reduce overhead. Two T tokens at $2$-bits are packed into a $4$-bit format, consistent with R/E tokens, ensuring aligned memory. TBE’s K-means–based eviction is accelerated on GPUs with CUDA, following \citet{krulis2020kmeans}. CT is fully implemented in Triton, extending the PagedAttention kernel of \citet{triton2522}.

\noindent
\textbf{Evaluation Setup. }All experiments are conducted on $1\times$NVIDIA A100 80GB GPU and $1\times$ NVIDIA GH200 Superchip. Following \citet{cai2025r}, we constrain maximum generation length to $32K$ tokens. {For accuracy evaluation, for each question, we generate 8 independent responses and compute pass@1 accuracy as $\text{pass@1} = \frac{1}{k} \sum_{i=1}^{k} p_i$, where $p_i$ denotes whether the $i$-th sampled response is correct \cite{ramachandran2024clamp}. All throughput and latency numbers are obtained by averaging across 3 independent runs. }Importantly, in our experiments we treat prefill-tokens as \textbf{R} type (see \autoref{fig:teaser}(b)). Refer \autoref{sec:appendix_experiments} for additional details. 

\subsection{Main Results}
% Make it memory compression ratio
\begin{wrapfigure}{R}{0.45\textwidth}
\vspace{-9mm}
\centering

% --------- First table: Quantization ----------
\begin{minipage}{\linewidth}
\centering
\captionof{table}{Comparison of ThinKV with KV quantization baselines.}
\renewcommand*{\arraystretch}{1.0}
\setlength\tabcolsep{2pt}
\resizebox{\linewidth}{!}{%
\begin{tabular}{clc|cc}
\Xhline{2\arrayrulewidth}
Model & Method & Bit-Width & AIME & LiveCodeBench \\
\Xhline{2\arrayrulewidth}
\multirow{4}{*}{R1-Qwen-14B}
& \cellcolor[HTML]{D3D3D3}Baseline & \cellcolor[HTML]{D3D3D3}16-16 & \cellcolor[HTML]{D3D3D3}53.33 & \cellcolor[HTML]{D3D3D3}47.90 \\ \cline{2-5}
& KIVI & 2-2 & 40.00 & 34.56 \\
& PM-KVQ & 3.2-3.2 & 43.33 & 41.97 \\
& \cellcolor[HTML]{D5E8D4}\textbf{ThinKV (k=1024)} & \cellcolor[HTML]{D5E8D4}3.5-3.5 & \cellcolor[HTML]{D5E8D4}\textbf{50.00} & \cellcolor[HTML]{D5E8D4}\textbf{45.84} \\
\Xhline{1\arrayrulewidth}
\multirow{4}{*}{QwQ-32B}
& \cellcolor[HTML]{D3D3D3}Baseline & \cellcolor[HTML]{D3D3D3}16-16 & \cellcolor[HTML]{D3D3D3}73.33 & \cellcolor[HTML]{D3D3D3}55.45 \\ \cline{2-5}
& KIVI & 2-2 & 60.56 & 40.75 \\
& PM-KVQ & 3.5-3.5 & 67.86 & 46.68 \\
& \cellcolor[HTML]{D5E8D4}\textbf{ThinKV (k=1024)} & \cellcolor[HTML]{D5E8D4}3.4-3.4 & \cellcolor[HTML]{D5E8D4}\textbf{70.28} & \cellcolor[HTML]{D5E8D4}\textbf{50.47} \\
\Xhline{2\arrayrulewidth}
\end{tabular}}
\label{tab:kvq_results}
\end{minipage}

\vspace{-3mm} % space between the two tables

% --------- Second table: Throughput ----------
\begin{minipage}{\linewidth}
\centering
\captionof{table}{Throughput (tokens/s) comparison on GPUs. $^*$Mem. ftprnt: Memory footprint (\%) normalized to FullKV.}
\renewcommand*{\arraystretch}{1.0}
\setlength\tabcolsep{2pt}
\resizebox{\linewidth}{!}{%
\begin{tabular}{lcc|cc|cc}
\Xhline{2\arrayrulewidth}
\multirow{2}{*}{Method} & \multirow{2}{*}{Tok. Budget} & \multirow{2}{*}{Mem. ftprnt (\%)$^*$} & \multicolumn{2}{c|}{A100} & \multicolumn{2}{c}{GH200} \\
 &  &  & Batch & Tok/s & Batch & Tok/s \\
\Xhline{2\arrayrulewidth}
% \cellcolor[HTML]{D3D3D3}FullKV & \cellcolor[HTML]{D3D3D3}-- & \cellcolor[HTML]{D3D3D3}100 & \cellcolor[HTML]{D3D3D3}1 & \cellcolor[HTML]{D3D3D3}31.9 & \cellcolor[HTML]{D3D3D3}1 & \cellcolor[HTML]{D3D3D3}37.5 \\
\cellcolor[HTML]{D3D3D3}FullKV & \cellcolor[HTML]{D3D3D3}-- & \cellcolor[HTML]{D3D3D3}100 & \cellcolor[HTML]{D3D3D3}13 & \cellcolor[HTML]{D3D3D3}297.5 & \cellcolor[HTML]{D3D3D3}19 & \cellcolor[HTML]{D3D3D3}453.9 \\ 
% R\mbox{-}KV (seq) & 1024 & 5.24 & 1 & 45.8 & 1 & 59.2 \\
R\mbox{-}KV (seq) & 1024 & 5.48 & 268 & 1450.5 & 350 & 2425.8 \\ 
% R\mbox{-}KV (ovl) & 1024 & 5.24 & 1 & 46.3 & 1 & 60.8 \\
R\mbox{-}KV (ovl) & 1024 & 5.48 & 268 & 2320.9 & 350 & 4311.3 \\ 
% \cellcolor[HTML]{D5E8D4}\textbf{ThinKV} & \cellcolor[HTML]{D5E8D4}\textbf{1024} & \cellcolor[HTML]{D5E8D4}\textbf{2.17} & \cellcolor[HTML]{D5E8D4}\textbf{1} & \cellcolor[HTML]{D5E8D4}\textbf{46.1} & \cellcolor[HTML]{D5E8D4}\textbf{1} & \cellcolor[HTML]{D5E8D4}\textbf{61.5} \\
\cellcolor[HTML]{D5E8D4}\textbf{ThinKV} & \cellcolor[HTML]{D5E8D4}\textbf{1024} & \cellcolor[HTML]{D5E8D4}\textbf{2.51} & \cellcolor[HTML]{D5E8D4}\textbf{711} & \cellcolor[HTML]{D5E8D4}\textbf{8412.2} & \cellcolor[HTML]{D5E8D4}\textbf{938} & \cellcolor[HTML]{D5E8D4}\textbf{10578.5} \\
\Xhline{1\arrayrulewidth}
\multicolumn{7}{c}{\emph{Iso-batch, Iso-compression comparison}} \\
\Xhline{1\arrayrulewidth}
R\mbox{-}KV (seq) & 1024 & 5.48 & 256 & 1769.3 & 256 & 2489.8 \\
R\mbox{-}KV (ovl) & 1024 & 5.48 & 256 & 3539.3 & 256 & 5318.7 \\
\cellcolor[HTML]{D5E8D4}\textbf{ThinKV w/o TBQ} & \cellcolor[HTML]{D5E8D4}\textbf{1024} & \cellcolor[HTML]{D5E8D4}\textbf{5.78} & \cellcolor[HTML]{D5E8D4}\textbf{256} & \cellcolor[HTML]{D5E8D4}\textbf{5298.4} & \cellcolor[HTML]{D5E8D4}\textbf{256} & \cellcolor[HTML]{D5E8D4}\textbf{8079.9} \\
\Xhline{2\arrayrulewidth}
\end{tabular}}
\label{tab:throughput_results}
\end{minipage}

\vspace{-5mm}
\end{wrapfigure}

\textbf{Accuracy Comparison with Eviction Baselines. }
In \autoref{fig:eviction_main_results}, we evaluate diverse LRM families on reasoning datasets with KV cache budgets ranging from 64 to 4096 tokens. The average generation lengths are 9,020 tokens on AIME, 14,166 on LiveCodeBench, and 2,468 on MATH-500. On challenging reasoning benchmarks such as AIME and LiveCodeBench, \textbf{ThinKV achieves competitive accuracy with a cache budget of 1024 tokens, accounting for $\mathbf{<3.67\%}$ of FullKV memory}, whereas other methods require $>12\%$ to reach similar accuracy. For R1-Llama-8B and AceReason-14B on AIME, ThinKV sustains \textbf{$\mathbf{<4\%}$ drop using only $\mathbf{\sim 1.3\%}$ of the KV cache}. ThinKV's hybrid quantization–eviction and thought-adaptive scheme, enables superior accuracy while sustaining higher compression. ThinKV operates at an average precision of $3.4$ bits, with harder problems achieving lower precision due to more frequent transition thoughts.

\begin{wraptable}{R}{0.32\textwidth}
\vspace{-8mm}
\begin{minipage}{\linewidth}
\centering
\caption{{ThinKV throughput on R1-Llama-8B (A100-80GB, 32K generation) with 2048 token budget.}}
\renewcommand*{\arraystretch}{1.0}
\setlength\tabcolsep{3pt}
\resizebox{\linewidth}{!}{%
\begin{tabular}{l|c|c|c|c}
\Xhline{2\arrayrulewidth}
Method & Acc. & \shortstack{Batch \\ Size (max)} & \shortstack{Token \\ Budget} & Throughput \\
\Xhline{2\arrayrulewidth}
\cellcolor[HTML]{D3D3D3}FullKV 
& \cellcolor[HTML]{D3D3D3}50 
& \cellcolor[HTML]{D3D3D3}13
& \cellcolor[HTML]{D3D3D3}--
& \cellcolor[HTML]{D3D3D3}297.5 \\
\cellcolor[HTML]{D5E8D4}\textbf{ThinKV}
& \cellcolor[HTML]{D5E8D4}\textbf{50}
& \cellcolor[HTML]{D5E8D4}\textbf{290}
& \cellcolor[HTML]{D5E8D4}\textbf{2048}
& \cellcolor[HTML]{D5E8D4}\textbf{4688.4} \\
\Xhline{2\arrayrulewidth}
\end{tabular}}
\label{tab:1_perc_performance}
\end{minipage}
\vspace{-4mm}
\end{wraptable}

\textbf{Accuracy Comparison with Quantization Baselines. }We summarize our findings in \autoref{tab:kvq_results}, using $k=1024$ for ThinKV. KIVI applies uniform INT quantization across all tokens, while PM-KVQ progressively reduces precision to a final 2-bit representation. Both approaches treat all tokens as equally important, leading to substantial accuracy degradation on LRMs. In contrast, ThinKV’s thought-adaptive quantization (TBQ) assigns precision based on thought-type importance, achieving \textbf{minimal accuracy loss with an average precision of $3.4$ bits}.

\textbf{Throughput Analysis. }\autoref{tab:throughput_results} reports end-to-end throughput on two GPUs for a R1-Llama-8B performing continuous generation of $32K$ tokens. As baselines, we include two R-KV variants: one performing sequential gather (seq) and the other overlapped gather (ovl). FullKV and R-KV use FlashAttention \citep{dao2023flashattention}, while ThinKV employs the CT kernel. For each method, we report the maximum batch size achievable on different GPUs. At batch size 1, all techniques achieve comparable performance with only marginal improvements over FullKV \citep{cai2025r}. The main throughput gains come from \textbf{ThinKV’s ability to sustain larger and more efficient batch inference}. Specifically, ThinKV’s hybrid scheme attains a higher compression ratio, supporting up to \textbf{$3\times$ larger batch sizes than R-KV and yielding throughput gains of up to $5.8\times$ over R-KV (seq) and $3.6\times$ over R-KV (ovl)}. To isolate CT kernel’s impact on ThinKV throughput at larger batch sizes, we conduct an iso-batch, iso-compression (ThinKV w/o TBQ) comparison with a batch size=256. \textbf{ThinKV achieves up to $3.2\times$ and $1.6\times$ higher throughput} than R-KV (seq) and R-KV (ovl), respectively, due to the elimination of gather-based compaction. 

In \autoref{tab:throughput_results}, we report results using a 1024-token budget with R4E4T2 precision, which limits accuracy loss to $\leq1$\% for most LRMs and datasets. For more sensitive models, we also evaluate a conservative 2048-token budget that preserves accuracy across all settings. As shown in \autoref{tab:1_perc_performance}, ThinKV at 2048 tokens increases the maximum batch size from \textbf{13 to 290} and achieves a $\mathbf{15.8\times}$ throughput gain over FullKV, demonstrating substantial acceleration under accuracy-preserving constraints.

\textbf{E2E System Throughput versus User Latency Analysis. }Motivated by the dynamic-serving analyses in \citet{vllm, yu2022orca}, we evaluate ThinKV under multi-user concurrency.

\begin{wrapfigure}{R}{0.30\textwidth}
\vspace{-1mm}
  \centering \includegraphics[width = 0.29\textwidth, keepaspectratio]{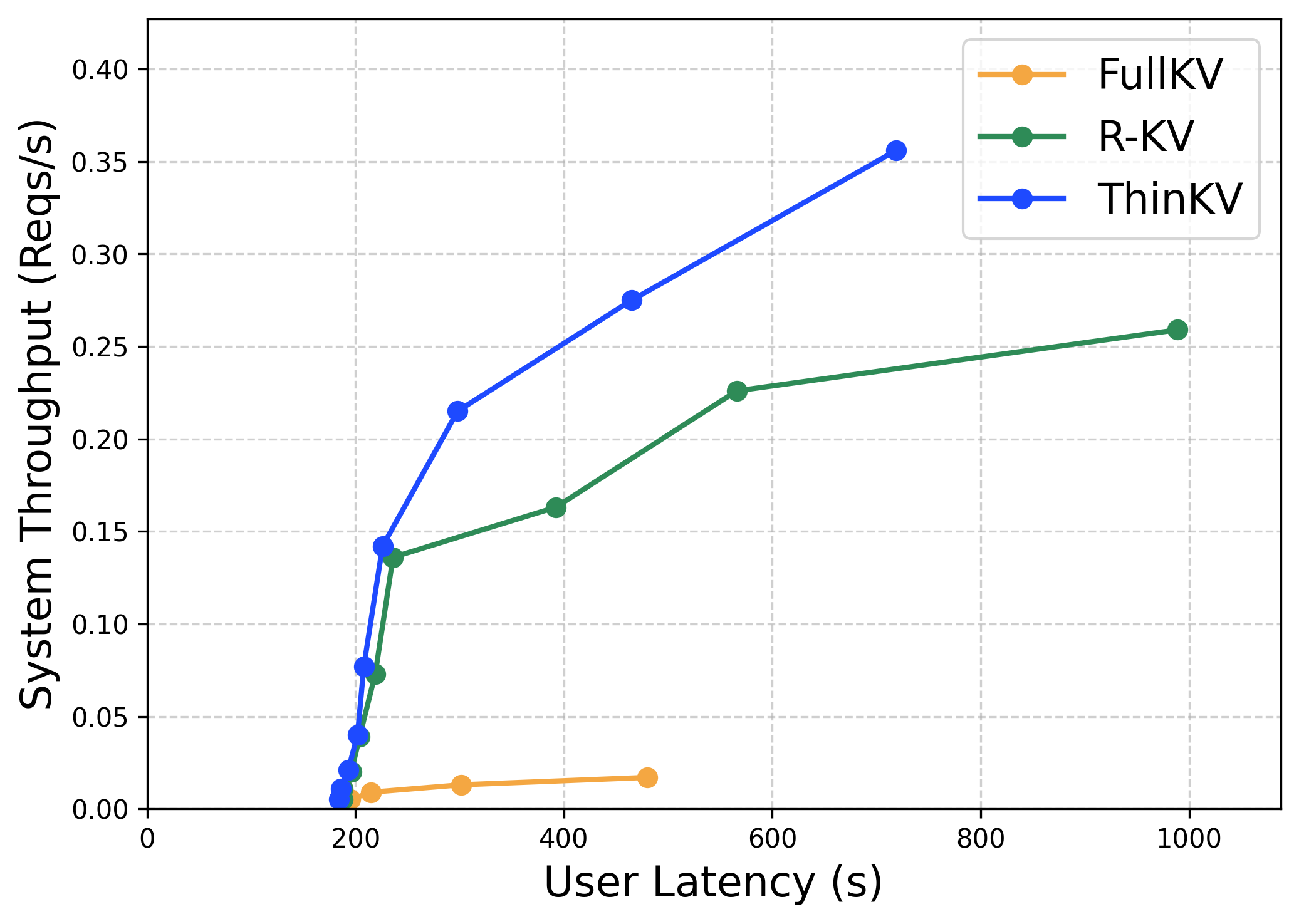}
  \vspace{-5mm}
  \caption{{vLLM system throughput versus user-latency comparison.}}
  \vspace{-4mm}
  \label{fig:system_throughput}
\end{wrapfigure}
For a batch size of $B$, we issue $B$ parallel requests to emulate $B$ active users and measure the achieved system throughput (requests/s) together with the average end-to-end latency experienced by each user. 
The goal of this experiment is to evaluate performance when $B$ concurrent requests are actively being served. We report our findings in \autoref{fig:system_throughput} for R1-Llama-8B on A100-80GB GPU. We randomly sample $B$ AIME prompts and employ a cache budget of 1024 tokens. FullKV cannot sustain batch sizes beyond $B=8$. Under an iso-batch comparison at $B=8$, ThinKV achieves up to \textbf{58\% lower latency} while sustaining higher request loads. At $B=256$, again under iso-batch conditions, ThinKV achieves \textbf{38\% higher reqs/s} and \textbf{27\% lower latency} compared to R-KV. 
% These results demonstrate that ThinKV not only improves per-request efficiency but also scales more effectively under heavy concurrency, making it a robust choice for practical large-scale serving workloads.

\begin{figure*}[t]
\vspace{-2mm}
  \centering \includegraphics[width = 0.8\linewidth, keepaspectratio]{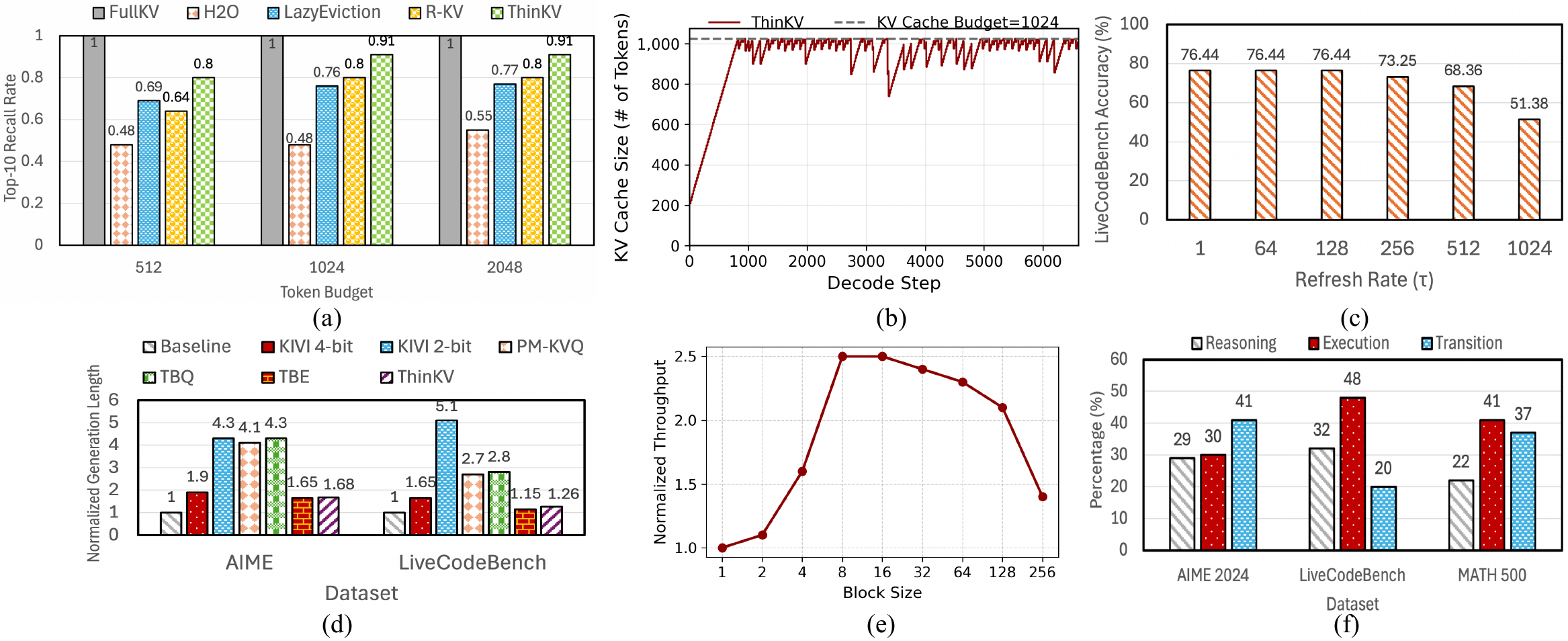}
  \vspace{-4.5mm}
  \caption{{ThinKV ablation experiments: (a) recall rate of tokens with Top-10 attention scores for R1-Llama-8B on AIME, (b) ThinKV eviction curve. Impact of (c) refresh rate ($\tau$) for GPT-OSS-20B model on LiveCodeBench, (d) Impact of compression on generation length for R1-Llama-8B, (e) impact of block-size on throughput, (f) \% breakdown of thoughts for R1-Llama-8B.}}
  \vspace{-6mm}
  \label{fig:ablations}
\end{figure*}

% \vspace{-2mm}
\subsection{Discussions and Ablations}
\label{sec:ablations}

\textbf{Impact of ThinKV Components. }In \autoref{tab:components}, we ablate the accuracy, throughput, and latency contributions of ThinKV’s components on GPT-OSS-20B using LiveCodeBench. For a fair comparison, we employ an iso-batch comparison with batch size of 8. TBQ, operating at an average precision of 3.5 bits, maintains accuracy comparable to FullKV. However, as shown in \autoref{fig:ablations}(d), its substantial generation-length inflation negates most of the compression gains, yielding only a modest $1.1\times$ improvement in throughput. 
\begin{wraptable}{R}{0.42\textwidth}
\vspace{-8mm}
\begin{minipage}{\linewidth}
\centering
\caption{{Impact of ThinKV components on accuracy, performance (iso-batch) for GPT-OSS-20B on LiveCodeBench.}}
\renewcommand*{\arraystretch}{1.0}
\setlength\tabcolsep{3pt}
\resizebox{\linewidth}{!}{
\begin{tabular}{l|c|c|c|c|c}
\Xhline{2\arrayrulewidth}
Method & \shortstack{Avg. Precision \\ / Eviction Budget} & Accuracy & \shortstack{Batch \\ Size} & \shortstack{Norm. \\ Throughput} & \shortstack{Norm. \\ Latency} \\
\Xhline{2\arrayrulewidth}

\cellcolor[HTML]{D3D3D3}FullKV
& \cellcolor[HTML]{D3D3D3}-- 
& \cellcolor[HTML]{D3D3D3}77.8
& \cellcolor[HTML]{D3D3D3}8
& \cellcolor[HTML]{D3D3D3}1.0$\times$
& \cellcolor[HTML]{D3D3D3}1.0$\times$ \\

TBQ 
& 3.5 
& 77.8
& 8
& 1.1$\times$
& 0.98$\times$ \\

TBE 
& 512 
& 62.5
& 8
& 1.78$\times$
& 0.36$\times$ \\

TBE 
& 1024 
& 76.9
& 8
& 1.48$\times$
& 0.38$\times$ \\

TBE 
& 2048 
& 77.8
& 8
& 1.27$\times$
& 0.44$\times$ \\

\cellcolor[HTML]{D5E8D4}\textbf{ThinKV (TBQ+TBE)}
& \cellcolor[HTML]{D5E8D4}\textbf{3.8, 1024}
& \cellcolor[HTML]{D5E8D4}\textbf{76.4}
& \cellcolor[HTML]{D5E8D4}\textbf{8}
& \cellcolor[HTML]{D5E8D4}\textbf{1.51$\times$}
& \cellcolor[HTML]{D5E8D4}\textbf{0.42$\times$} \\

\Xhline{2\arrayrulewidth}
\end{tabular}}
\label{tab:components}
\end{minipage}
\vspace{-4mm}
\end{wraptable}

% \begin{wraptable}{R}{0.3\textwidth}
% \vspace{-8mm}
% \begin{minipage}{\linewidth}
% \centering
% \caption{\textcolor{blue}{Impact of ThinKV components for GPT-OSS-20B on LiveCodeBench.}}
% \renewcommand*{\arraystretch}{1.0}
% \setlength\tabcolsep{3pt}
% \resizebox{\linewidth}{!}{%
% \begin{tabular}{l|c|c}
% \Xhline{2\arrayrulewidth}
% Method & \shortstack{Avg. Precision \\ / Eviction Budget} & Accuracy \\
% \Xhline{2\arrayrulewidth}
% \cellcolor[HTML]{D3D3D3}FullKV 
% & \cellcolor[HTML]{D3D3D3}-- 
% & \cellcolor[HTML]{D3D3D3}77.8 \\
% TBQ & 3.5 & 77.8 \\
% TBE & 1024 & 76.9 \\
% \cellcolor[HTML]{D5E8D4}\textbf{ThinKV (TBQ+TBE)} 
% & \cellcolor[HTML]{D5E8D4}3.8, 1024 
% & \cellcolor[HTML]{D5E8D4}\textbf{76.4} \\
% \Xhline{2\arrayrulewidth}
% \end{tabular}}
% \label{tab:components}
% \end{minipage}
% \vspace{-4mm}
% \end{wraptable}

% % \begin{wraptable}{R}{0.3\linewidth}
% % \vspace{-4mm}
% % \centering
% % \caption{Ablation of ThinKV components.}
% % \renewcommand*{\arraystretch}{1.0}
% % \setlength\tabcolsep{3pt}
% % \resizebox{\linewidth}{!}{%
% % \begin{tabular}{l|c|c}
% % \Xhline{2\arrayrulewidth}
% % Method & \shortstack{Precision \\ / Eviction Budget} & Accuracy (\%) \\
% % \Xhline{2\arrayrulewidth}
% % \cellcolor[HTML]{D3D3D3}FullKV 
% % & \cellcolor[HTML]{D3D3D3}-- 
% % & \cellcolor[HTML]{D3D3D3}77.8 \\
% % TBQ & 3.5 & 77.8 \\
% % TBE & 1024 & 76.9 \\
% % \cellcolor[HTML]{D5E8D4}\textbf{ThinKV (TBQ+TBE)} 
% % & \cellcolor[HTML]{D5E8D4}3.8, 1024 
% % & \cellcolor[HTML]{D5E8D4}\textbf{76.4} \\
% % \Xhline{2\arrayrulewidth}
% % \end{tabular}}
% % \label{tab:components}
% % \vspace{-3mm}
% % \end{wraptable}

TBE at smaller eviction budgets (e.g., 512) achieves large performance gains---up to $1.78\times$ higher throughput and $0.36\times$ lower latency---but at the cost of noticeable accuracy loss. At larger eviction budgets, TBE approaches near-lossless accuracy while still providing throughput improvements of up to $1.48\times$. ThinKV (TBQ+TBE) combines both mechanisms, delivering strong compression with only a marginal accuracy reduction. We would like to note that TBQ’s average precision is lower than ThinKV’s because its inflated generation length introduces more transition tokens. Importantly, ThinKV achieves up to $1.51\times$ higher throughput and $0.42\times$ lower latency by avoiding the severe generation-length inflation exhibited by TBQ (see \autoref{fig:ablations}(d)).

\textbf{Thought-Adaptive vs. Token-Level Heuristics. }To understand why ThinKV outperforms baselines, we analyze average recall rate of tokens with Top-10 attention scores \citep{tang2024quest} on R1-LLama-8B. Recall rate is the fraction of important tokens (Top-10) preserved by a compression method relative to those under full attention at each decoding step. As shown in \autoref{fig:ablations}(a), ThinKV sustains recall rates close to FullKV across token budgets compared to R-KV and LazyEviction that rely on token-level heuristics that overlook reasoning structure.

\noindent
\textbf{Compression Increases Generation Length. }In \autoref{fig:ablations}(d), our R1-Llama-8B results show that pure quantization can inflate generation length by up to $5.1\times$. In contrast, eviction-based approaches do not induce such drastic inflation. ThinKV (TBQ+TBE) inherits this desirable stabilizing behavior and avoids the severe length expansion seen in quantization-only baselines.
% , TBE counteracts TBQ’s tendency to elongate reasoning trajectories, effectively acting as a regularizer against generation-length drift.}   

\noindent
\textbf{TBQ Precision. }In \autoref{fig:ablation2}(b), we study the effect of quantizing \textbf{R}, \textbf{T}, and \textbf{E} thoughts at different precisions for R1-Llama-8B on AIME and R1-Llama-70B on LiveCodeBench, using the notation R$x$E$y$T$z$ with $x,y,z \in \mathcal{B}=\{2,4,8\}$. We adopt R4E4T2 in all experiments due to its high accuracy and higher compression (also see \Cref{sec:quant_formats}).  

\textbf{Eviction Behavior. }ThinKV’s eviction strategy enforces proactive eviction (coarse-grained) in contrast to the fine-grained, stepwise eviction of H2O, R-KV. \autoref{fig:ablations}(b) shows ThinKV's eviction behavior. As shown in \autoref{tab:thinKV_vs_RKV_breakdown_grouped}, with ThinKV, the number of times a layer performs eviction across decode steps is minimal, 4.59\% compared to R-KV's 82.93\%, because R-KV waits for the budget to be exceeded to evict one token per decode step.

\begin{wraptable}{r}{0.45\textwidth}
\vspace{-9mm}
\centering
\caption{Per-layer time breakdown (\%) and call rates across decode steps.}
\renewcommand*{\arraystretch}{1.05}
\setlength\tabcolsep{6pt}
\resizebox{\linewidth}{!}{%
\begin{tabular}{l|cc|cc}
\Xhline{2\arrayrulewidth}
\Xhline{2\arrayrulewidth}
\multirow{2}{*}{\textbf{{Operation}}} 
& \multicolumn{2}{c|}{\cellcolor[HTML]{D5E8D4}\textbf{ThinKV}} 
& \multicolumn{2}{c}{\cellcolor[HTML]{D3D3D3}\textbf{R-KV}} \\
& \cellcolor[HTML]{D5E8D4}\textbf{\shortstack{Time \\ Breakdown (\%)}} 
& \cellcolor[HTML]{D5E8D4}\textbf{\shortstack{\# of \\ Calls (\%)}} 
& \cellcolor[HTML]{D3D3D3}\textbf{\shortstack{Time \\ Breakdown (\%)}} 
& \cellcolor[HTML]{D3D3D3}\textbf{\shortstack{\# of \\ Calls (\%)}} \\

\Xhline{2\arrayrulewidth}
Thought Refresh   & \cellcolor[HTML]{D5E8D4}3.80  & \cellcolor[HTML]{D5E8D4}0.7 & \cellcolor[HTML]{D3D3D3}\textemdash & \cellcolor[HTML]{D3D3D3}\textemdash \\
R-KV Eviction     & \cellcolor[HTML]{D5E8D4}\textemdash & \cellcolor[HTML]{D5E8D4}\textemdash & \cellcolor[HTML]{D3D3D3}10.46 & \cellcolor[HTML]{D3D3D3}82.93 \\
Gather Time       & \cellcolor[HTML]{D5E8D4}0  & \cellcolor[HTML]{D5E8D4}0 & \cellcolor[HTML]{D3D3D3}22.45 & \cellcolor[HTML]{D3D3D3}82.93 \\
TBE Eviction   & \cellcolor[HTML]{D5E8D4}10.30 & \cellcolor[HTML]{D5E8D4}4.59 & \cellcolor[HTML]{D3D3D3}\textemdash & \cellcolor[HTML]{D3D3D3}\textemdash \\
Attention         & \cellcolor[HTML]{D5E8D4}40.38 & \cellcolor[HTML]{D5E8D4}100 & \cellcolor[HTML]{D3D3D3}38.65 & \cellcolor[HTML]{D3D3D3}100 \\
MLP   & \cellcolor[HTML]{D5E8D4}45.52 & \cellcolor[HTML]{D5E8D4}100 & \cellcolor[HTML]{D3D3D3}28.44 & \cellcolor[HTML]{D3D3D3}100 \\
\Xhline{2\arrayrulewidth}
\end{tabular}}
\vspace{-5mm}
\label{tab:thinKV_vs_RKV_breakdown_grouped}
\end{wraptable}

\textbf{Overhead Analysis. }In \autoref{tab:thinKV_vs_RKV_breakdown_grouped}, we report operation-level breakdowns for R1-Llama-8B. {The dequantization overhead of TBQ is included as part of the attention time.}  While TBE and thought refresh comprise $\sim14\%$ of per-layer execution, their infrequent invocation ensures layers run overhead-free $95\%$ of the time. Evidently, for R-KV, eviction and gather emerges as a major bottleneck ($32.91\%$) since it is invoked in nearly every decoding step.

\begin{wrapfigure}{R}{0.35\textwidth}
\vspace{-4mm}
\begin{minipage}{\linewidth}
  \centering
  \includegraphics[width=\linewidth, keepaspectratio]{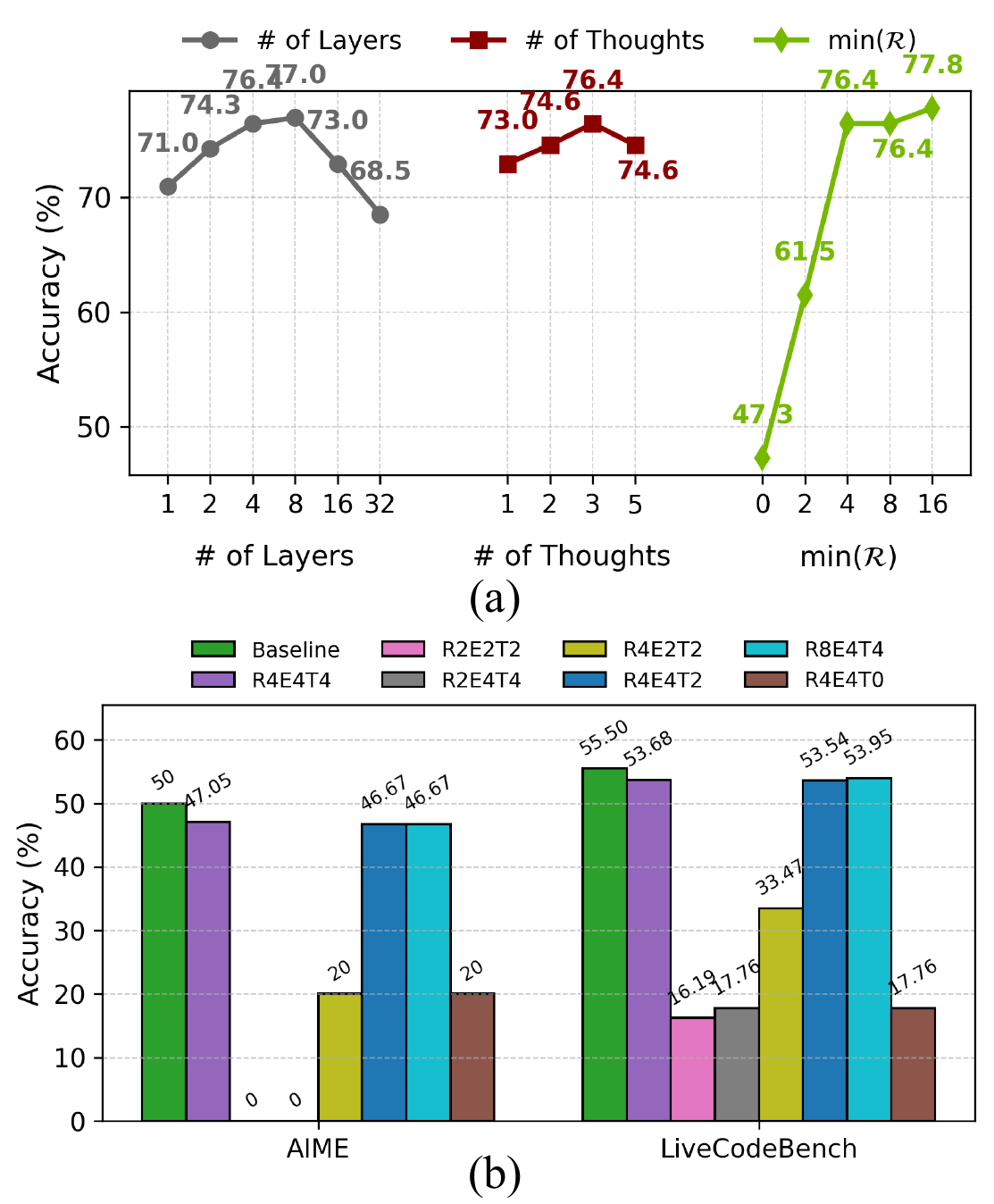}
  \vspace{-8mm}
  \caption{{(a) Impact of $|\mathcal{L}^*|$, $|\mathcal{T}|$ and $\min{\mathcal{R}}$ on LiveCodeBench accuracy for R1-Llama-8B, (b) analysis of precision assignment for R1-Llama-8B on AIME and R1-Llama-70B on LiveCodeBench.}}
  \label{fig:ablation2}
\end{minipage}
\vspace{-10mm}
\end{wrapfigure}

\noindent
\textbf{Refresh Rate. }In \autoref{fig:ablations}(c), we ablate different choices of refresh rate ($\tau$) for a GPT-OSS-20B model on LiveCodeBench. $\tau=128$ offers the best trade-off between accuracy and overhead. Accuracy drops with larger $\tau$ as it skips thought changes and reduces opportunities to correct mispredictions.

\noindent
\textbf{Optimal \# of Layers. }In \autoref{fig:ablation2}(a), we ablate different $|\mathcal{L}^*|$ for R1-Llama-8B on LiveCodeBench. We select $|\mathcal{L}^*|=4$ as it best balances accuracy and efficiency. Using all layers ($|\mathcal{L}^*|=32$) degrades accuracy (\cref{sec:attn_sparsity}).

\noindent
\textbf{\# of Thought Types. }In \autoref{fig:ablation2}(a), we show that $|\mathcal{T}|=3$ yields the best accuracy on R1-Llama-8B evaluated on LiveCodeBench. For each $|\mathcal{T}|$, we select layers exhibiting $|\mathcal{T}|$ sparsity modes (can be less than $|\mathcal{L}^*|$) and quantize according to thought importance. When $|\mathcal{T}|<3$, there is no notion of trajectory-changing thoughts. Therefore, eviction occurs only upon exceeding the KV budget (case 2 in Problem Formulation \autoref{def:tbe}). See \Cref{sec:appendix_llm} for generalization to LLMs with $|\mathcal{T}|=1$.      

\noindent
\textbf{Minimum Token Retention. }In \autoref{fig:ablation2}(a), we show why the minimum retention ($\mathcal{R}$) per thought segment is set to 4. Complete eviction ($\min \mathcal{R}=0$) severely degrades accuracy, as the model loses track of reasoning trajectories and results in an endless reasoning loop. Retaining a minimal subset $\min \mathcal{R}=4$ preserves semantic structure of reasoning and offers the best trade-off. 

\noindent
\textbf{\% Breakdown of Thoughts.} \autoref{fig:ablations}(f) shows the distribution of \textbf{R}, \textbf{T}, and \textbf{E} thoughts for R1-Llama-8B. Complex datasets (AIME) exhibit more transitions, than simpler ones (MATH-500).

\noindent
\textbf{Block Size. }In \autoref{fig:ablations}(e), we evaluate the effect of different block sizes on throughput. Block sizes of 8–16 deliver the best performance. Larger blocks, however, may pack more thought segments per block, incurring substantial metadata overhead in block table and increase eviction time.

\vspace{-2mm}

% At batch size 1, R-KV (seq/ovl) and ThinKV yield marginal gains over FullKV, attributable to reduced attention cost from the compressed KV cache.  Notably, at low batch sizes, ThinKV offers limited advantage over R-KV (ovl), despite avoiding expensive compactions. This is due to the inherent $\sim 12\%$ slowdown of PagedAttention relative to FlashAttention \citep{prabhu2025vattention}.

\section{Conclusion}
We introduced ThinKV, a thought-adaptive KV cache compression framework for LRMs. Exploiting attention sparsity, ThinKV decomposes chains of thought into reasoning, execution, and transition segments, enabling joint thought-aware quantization and eviction that sustains accuracy under high compression. On the system side, our Continuous Thinking kernel manages memory efficiently under dynamic decode-time eviction without costly compactions. This algorithm–system co-design delivers near-lossless accuracy with $<$5\% of the original KV cache, while enabling up to $5.8\times$ throughput gains and substantially larger batch sizes across diverse reasoning benchmarks.

\section*{Acknowledgements}
We thank Reena Elangovan, Steve Dai, Yaosheng Fu, Zoey Song, Yingyan (Celine) Lin, Ben Keller, Andre Green, Souvik Kundu and Mingyu Lee for insightful discussions and valuable feedback that helped improve this work.

\bibliography{iclr2026_conference}

@article{yuan2025native,
  title={Native sparse attention: Hardware-aligned and natively trainable sparse attention},
  author={Yuan, Jingyang and Gao, Huazuo and Dai, Damai and Luo, Junyu and Zhao, Liang and Zhang, Zhengyan and Xie, Zhenda and Wei, YX and Wang, Lean and Xiao, Zhiping and others},
  journal={arXiv preprint arXiv:2502.11089},
  year={2025}
}

@inproceedings{ramachandran2025ouromamba,
  title={OuroMamba: A Data-Free Quantization Framework for Vision Mamba},
  author={Ramachandran, Akshat and Lee, Mingyu and Xu, Huan and Kundu, Souvik and Krishna, Tushar},
  booktitle={Proceedings of the IEEE/CVF International Conference on Computer Vision},
  pages={21177--21186},
  year={2025}
}

@inproceedings{ramachandran2024algorithm,
  title={Algorithm-hardware co-design of distribution-aware logarithmic-posit encodings for efficient dnn inference},
  author={Ramachandran, Akshat and Wan, Zishen and Jeong, Geonhwa and Gustafson, John and Krishna, Tushar},
  booktitle={Proceedings of the 61st ACM/IEEE Design Automation Conference},
  pages={1--6},
  year={2024}
}

@article{bai2024longwriter,
  title={Longwriter: Unleashing 10,000+ word generation from long context llms},
  author={Bai, Yushi and Zhang, Jiajie and Lv, Xin and Zheng, Linzhi and Zhu, Siqi and Hou, Lei and Dong, Yuxiao and Tang, Jie and Li, Juanzi},
  journal={arXiv preprint arXiv:2408.07055},
  year={2024}
}

@article{yang2025qwen3,
  title={Qwen3 technical report},
  author={Yang, An and Li, Anfeng and Yang, Baosong and Zhang, Beichen and Hui, Binyuan and Zheng, Bo and Yu, Bowen and Gao, Chang and Huang, Chengen and Lv, Chenxu and others},
  journal={arXiv preprint arXiv:2505.09388},
  year={2025}
}

@inproceedings{cheng2025qaq,
  title={Qaq: Quality adaptive quantization for llm kv cache},
  author={Cheng, Wen and Dong, Shichen and Qin, Jiayu and Wang, Wei},
  booktitle={Proceedings of the IEEE/CVF International Conference on Computer Vision},
  pages={2542--2550},
  year={2025}
}

@inproceedings{yu2022orca,
  title={Orca: A distributed serving system for $\{$Transformer-Based$\}$ generative models},
  author={Yu, Gyeong-In and Jeong, Joo Seong and Kim, Geon-Woo and Kim, Soojeong and Chun, Byung-Gon},
  booktitle={16th USENIX Symposium on Operating Systems Design and Implementation (OSDI 22)},
  pages={521--538},
  year={2022}
}

@article{saad2025intelligence,
  title={Intelligence per Watt: Measuring Intelligence Efficiency of Local AI},
  author={Saad-Falcon, Jon and Narayan, Avanika and Akengin, Hakki Orhun and Griffin, J and Shandilya, Herumb and Lafuente, Adrian Gamarra and Goel, Medhya and Joseph, Rebecca and Natarajan, Shlok and Guha, Etash Kumar and others},
  journal={arXiv preprint arXiv:2511.07885},
  year={2025}
}

@misc{vllm_pr_16160,
  author       = {vLLM PR 16160},
  title        = {vLLM Pull Request \#16160},
  howpublished = {\url{https://github.com/vllm-project/vllm/pull/16160}},
  year         = {2025},
  note         = {Accessed: 2025-02-18}
}

@article{seo2025paper2code,
  title={Paper2code: Automating code generation from scientific papers in machine learning},
  author={Seo, Minju and Baek, Jinheon and Lee, Seongyun and Hwang, Sung Ju},
  journal={arXiv preprint arXiv:2504.17192},
  year={2025}
}

@article{guo2025deepseek,
  title={Deepseek-r1: Incentivizing reasoning capability in llms via reinforcement learning},
  author={Guo, Daya and Yang, Dejian and Zhang, Haowei and Song, Junxiao and Zhang, Ruoyu and Xu, Runxin and Zhu, Qihao and Ma, Shirong and Wang, Peiyi and Bi, Xiao and others},
  journal={arXiv preprint arXiv:2501.12948},
  year={2025}
}

@misc{openai_o1_2024,
  author       = {OpenAI},
  title        = {OpenAI o1},
  year         = {2024},
  howpublished = {\url{https://openai.com/o1/}}
}

@article{ramachandran2025accelerating,
  title={Accelerating llm inference with flexible n: M sparsity via a fully digital compute-in-memory accelerator},
  author={Ramachandran, Akshat and Kundu, Souvik and Raha, Arnab and Kundu, Shamik and Mathaikutty, Deepak K and Krishna, Tushar},
  journal={arXiv preprint arXiv:2504.14365},
  year={2025}
}

@inproceedings{ramachandran2024clamp,
  title={Clamp-vit: Contrastive data-free learning for adaptive post-training quantization of vits},
  author={Ramachandran, Akshat and Kundu, Souvik and Krishna, Tushar},
  booktitle={European Conference on Computer Vision},
  pages={307--325},
  year={2024},
  organization={Springer}
}

@article{agarwal2025gpt,
  title={gpt-oss-120b \& gpt-oss-20b Model Card},
  author={Agarwal, Sandhini and Ahmad, Lama and Ai, Jason and Altman, Sam and Applebaum, Andy and Arbus, Edwin and Arora, Rahul K and Bai, Yu and Baker, Bowen and Bao, Haiming and others},
  journal={arXiv preprint arXiv:2508.10925},
  year={2025}
}

@article{zhu2025chain,
  title={Chain-of-thought matters: improving long-context language models with reasoning path supervision},
  author={Zhu, Dawei and Wei, Xiyu and Zhao, Guangxiang and Wu, Wenhao and Zou, Haosheng and Ran, Junfeng and Wang, Xun and Sun, Lin and Zhang, Xiangzheng and Li, Sujian},
  journal={arXiv preprint arXiv:2502.20790},
  year={2025}
}

@article{liu2025pm,
  title={Pm-kvq: Progressive mixed-precision kv cache quantization for long-cot llms},
  author={Liu, Tengxuan and Li, Shiyao and Yang, Jiayi and Zhao, Tianchen and Zhou, Feng and Song, Xiaohui and Dai, Guohao and Yan, Shengen and Yang, Huazhong and Wang, Yu},
  journal={arXiv preprint arXiv:2505.18610},
  year={2025}
}

@article{li2025system,
  title={From system 1 to system 2: A survey of reasoning large language models},
  author={Li, Zhong-Zhi and Zhang, Duzhen and Zhang, Ming-Liang and Zhang, Jiaxin and Liu, Zengyan and Yao, Yuxuan and Xu, Haotian and Zheng, Junhao and Wang, Pei-Jie and Chen, Xiuyi and others},
  journal={arXiv preprint arXiv:2502.17419},
  year={2025}
}

@article{wei2022chain,
  title={Chain-of-thought prompting elicits reasoning in large language models},
  author={Wei, Jason and Wang, Xuezhi and Schuurmans, Dale and Bosma, Maarten and Xia, Fei and Chi, Ed and Le, Quoc V and Zhou, Denny and others},
  journal={Advances in neural information processing systems},
  volume={35},
  pages={24824--24837},
  year={2022}
}

@article{chen2025seal,
  title={Seal: Steerable reasoning calibration of large language models for free},
  author={Chen, Runjin and Zhang, Zhenyu and Hong, Junyuan and Kundu, Souvik and Wang, Zhangyang},
  journal={arXiv preprint arXiv:2504.07986},
  year={2025}
}

@article{jain2024livecodebench,
  title={Livecodebench: Holistic and contamination free evaluation of large language models for code},
  author={Jain, Naman and Han, King and Gu, Alex and Li, Wen-Ding and Yan, Fanjia and Zhang, Tianjun and Wang, Sida and Solar-Lezama, Armando and Sen, Koushik and Stoica, Ion},
  journal={arXiv preprint arXiv:2403.07974},
  year={2024}
}

@misc{maa_aime_2024,
  author       = {{MAA}},
  title        = {AIME 2024 Problems},
  year         = {2024},
  howpublished = {\url{https://artofproblemsolving.com/wiki/index.php/2024_AIME_I_Problems}},
  note         = {Accessed: 2025-08-30}
}

@article{cai2025r,
  title={R-KV: Redundancy-aware KV Cache Compression for Training-Free Reasoning Models Acceleration},
  author={Cai, Zefan and Xiao, Wen and Sun, Hanshi and Luo, Cheng and Zhang, Yikai and Wan, Ke and Li, Yucheng and Zhou, Yeyang and Chang, Li-Wen and Gu, Jiuxiang and others},
  journal={arXiv preprint arXiv:2505.24133},
  year={2025}
}

@article{recasens2025mind,
  title={Mind the memory gap: Unveiling gpu bottlenecks in large-batch llm inference},
  author={Recasens, Pol G and Agullo, Ferran and Zhu, Yue and Wang, Chen and Lee, Eun Kyung and Tardieu, Olivier and Torres, Jordi and Berral, Josep Ll},
  journal={arXiv preprint arXiv:2503.08311},
  year={2025}
}

@article{li2024snapkv,
  title={Snapkv: Llm knows what you are looking for before generation},
  author={Li, Yuhong and Huang, Yingbing and Yang, Bowen and Venkitesh, Bharat and Locatelli, Acyr and Ye, Hanchen and Cai, Tianle and Lewis, Patrick and Chen, Deming},
  journal={Advances in Neural Information Processing Systems},
  volume={37},
  pages={22947--22970},
  year={2024}
}

@article{cai2024pyramidkv,
  title={Pyramidkv: Dynamic kv cache compression based on pyramidal information funneling},
  author={Cai, Zefan and Zhang, Yichi and Gao, Bofei and Liu, Yuliang and Li, Yucheng and Liu, Tianyu and Lu, Keming and Xiong, Wayne and Dong, Yue and Hu, Junjie and others},
  journal={arXiv preprint arXiv:2406.02069},
  year={2024}
}

@article{zhang2023h2o,
  title={H2o: Heavy-hitter oracle for efficient generative inference of large language models},
  author={Zhang, Zhenyu and Sheng, Ying and Zhou, Tianyi and Chen, Tianlong and Zheng, Lianmin and Cai, Ruisi and Song, Zhao and Tian, Yuandong and R{\'e}, Christopher and Barrett, Clark and others},
  journal={Advances in Neural Information Processing Systems},
  volume={36},
  pages={34661--34710},
  year={2023}
}

@article{feng2024ada,
  title={Ada-kv: Optimizing kv cache eviction by adaptive budget allocation for efficient llm inference},
  author={Feng, Yuan and Lv, Junlin and Cao, Yukun and Xie, Xike and Zhou, S Kevin},
  journal={arXiv preprint arXiv:2407.11550},
  year={2024}
}

@article{shi2025lacache,
  title={LaCache: Ladder-Shaped KV Caching for Efficient Long-Context Modeling of Large Language Models},
  author={Shi, Dachuan and Fu, Yonggan and Yuan, Xiangchi and Yu, Zhongzhi and You, Haoran and Li, Sixu and Dong, Xin and Kautz, Jan and Molchanov, Pavlo and others},
  journal={arXiv preprint arXiv:2507.14204},
  year={2025}
}

@article{liu2024kivi,
  title={Kivi: A tuning-free asymmetric 2bit quantization for kv cache},
  author={Liu, Zirui and Yuan, Jiayi and Jin, Hongye and Zhong, Shaochen and Xu, Zhaozhuo and Braverman, Vladimir and Chen, Beidi and Hu, Xia},
  journal={arXiv preprint arXiv:2402.02750},
  year={2024}
}

@article{kang2024gear,
  title={Gear: An efficient kv cache compression recipe for near-lossless generative inference of llm},
  author={Kang, Hao and Zhang, Qingru and Kundu, Souvik and Jeong, Geonhwa and Liu, Zaoxing and Krishna, Tushar and Zhao, Tuo},
  journal={arXiv preprint arXiv:2403.05527},
  year={2024}
}

@article{ghadia2025dialogue,
  title={Dialogue without limits: Constant-sized kv caches for extended responses in llms},
  author={Ghadia, Ravi and Kumar, Avinash and Jain, Gaurav and Nair, Prashant and Das, Poulami},
  journal={arXiv preprint arXiv:2503.00979},
  year={2025}
}

@article{hu2025raas,
  title={RaaS: Reasoning-Aware Attention Sparsity for Efficient LLM Reasoning},
  author={Hu, Junhao and Huang, Wenrui and Wang, Weidong and Li, Zhenwen and Hu, Tiancheng and Liu, Zhixia and Chen, Xusheng and Xie, Tao and Shan, Yizhou},
  journal={arXiv preprint arXiv:2502.11147},
  year={2025}
}

@article{zhang2025lazyeviction,
  title={LazyEviction: Lagged KV Eviction with Attention Pattern Observation for Efficient Long Reasoning},
  author={Zhang, Haoyue and Zhang, Hualei and Ma, Xiaosong and Zhang, Jie and Guo, Song},
  journal={arXiv preprint arXiv:2506.15969},
  year={2025}
}

@inproceedings{vllm,
  title={Efficient memory management for large language model serving with pagedattention},
  author={Kwon, Woosuk and Li, Zhuohan and Zhuang, Siyuan and Sheng, Ying and Zheng, Lianmin and Yu, Cody Hao and Gonzalez, Joseph and Zhang, Hao and Stoica, Ion},
  booktitle={Proceedings of the 29th symposium on operating systems principles},
  pages={611--626},
  year={2023}
}

@misc{nvidia2025nsight,
  title        = {NVIDIA Nsight Systems},
  author       = {{Nsight}},
  year         = {2025},
  howpublished = {\url{https://developer.nvidia.com/nsight-systems}},
  note         = {Accessed: 2025-09-12}
}

@article{he20252,
  title={A2ATS: Retrieval-Based KV Cache Reduction via Windowed Rotary Position Embedding and Query-Aware Vector Quantization},
  author={He, Junhui and Xing, Junna and Wang, Nan and Xu, Rui and Wu, Shangyu and Zhou, Peng and Liu, Qiang and Xue, Chun Jason and Li, Qingan},
  journal={arXiv preprint arXiv:2502.12665},
  year={2025}
}

@article{hooper2025multipole,
  title={Multipole Attention for Efficient Long Context Reasoning},
  author={Hooper, Coleman and Zhao, Sebastian and Manolache, Luca and Kim, Sehoon and Mahoney, Michael W and Shao, Yakun Sophia and Keutzer, Kurt and Gholami, Amir},
  journal={arXiv preprint arXiv:2506.13059},
  year={2025}
}

@inproceedings{ramachandran2025microscopiq,
  title={Microscopiq: Accelerating foundational models through outlier-aware microscaling quantization},
  author={Ramachandran, Akshat and Kundu, Souvik and Krishna, Tushar},
  booktitle={Proceedings of the 52nd Annual International Symposium on Computer Architecture},
  pages={1193--1209},
  year={2025}
}

@misc{alvarez2025nvfp4,
  author       = {Eduardo Alvarez and Omri Almog and Eric Chung and Simon Layton and Dusan Stosic and Ronny Krashinsky and Kyle Aubrey},
  title        = {Introducing {NVFP4} for Efficient and Accurate Low-Precision Inference},
  howpublished = {NVIDIA Technical Blog},
  month        = jun,
  day          = 24,
  year         = {2025},
}

@article{cobbe2021training,
  title={Training verifiers to solve math word problems},
  author={Cobbe, Karl and Kosaraju, Vineet and Bavarian, Mohammad and Chen, Mark and Jun, Heewoo and Kaiser, Lukasz and Plappert, Matthias and Tworek, Jerry and Hilton, Jacob and Nakano, Reiichiro and others},
  journal={arXiv preprint arXiv:2110.14168},
  year={2021}
}

@article{lightman2023lets,
      title={Let's Verify Step by Step}, 
      author={Lightman, Hunter and Kosaraju, Vineet and Burda, Yura and Edwards, Harri and Baker, Bowen and Lee, Teddy and Leike, Jan and Schulman, John and Sutskever, Ilya and Cobbe, Karl},
      journal={arXiv preprint arXiv:2305.20050},
      year={2023}
}

@article{dao2023flashattention,
  title={Flashattention-2: Faster attention with better parallelism and work partitioning},
  author={Dao, Tri},
  journal={arXiv preprint arXiv:2307.08691},
  year={2023}
}

@misc{triton2522,
  author       = {{OpenAI}},
  title        = {Issue \#2522:},
  howpublished = {\url{https://github.com/triton-lang/triton/issues/2522}},
  note         = {Accessed: 2025-09-21},
  year = {2025}
}

@inproceedings{krulis2020kmeans,
 author = {Kruli\v{s}, Martin and Kratochv\'{\i}l, Miroslav},
 title = {Detailed Analysis and Optimization of CUDA K-Means Algorithm},
 year = {2020},
 isbn = {9781450388160},
 publisher = {Association for Computing Machinery},
 address = {New York, NY, USA},
 url = {https://doi.org/10.1145/3404397.3404426},
 doi = {10.1145/3404397.3404426},
 booktitle = {49th International Conference on Parallel Processing - ICPP},
 articleno = {69},
 numpages = {11},
 keywords = {performance optimization, clustering, cuda, k-means},
 location = {Edmonton, AB, Canada},
 series = {ICPP '20}
}

@article{muennighoff2025s1,
  title={s1: Simple test-time scaling},
  author={Muennighoff, Niklas and Yang, Zitong and Shi, Weijia and Li, Xiang Lisa and Fei-Fei, Li and Hajishirzi, Hannaneh and Zettlemoyer, Luke and Liang, Percy and Cand{\`e}s, Emmanuel and Hashimoto, Tatsunori},
  journal={arXiv preprint arXiv:2501.19393},
  year={2025}
}

@article{parzen1962estimation,
  title={On estimation of a probability density function and mode},
  author={Parzen, Emanuel},
  journal={The annals of mathematical statistics},
  volume={33},
  number={3},
  pages={1065--1076},
  year={1962},
  publisher={JSTOR}
}

@article{liu2023scissorhands,
  title={Scissorhands: Exploiting the persistence of importance hypothesis for llm kv cache compression at test time},
  author={Liu, Zichang and Desai, Aditya and Liao, Fangshuo and Wang, Weitao and Xie, Victor and Xu, Zhaozhuo and Kyrillidis, Anastasios and Shrivastava, Anshumali},
  journal={Advances in Neural Information Processing Systems},
  volume={36},
  pages={52342--52364},
  year={2023}
}

@article{hooper2024kvquant,
  title={Kvquant: Towards 10 million context length llm inference with kv cache quantization},
  author={Hooper, Coleman and Kim, Sehoon and Mohammadzadeh, Hiva and Mahoney, Michael W and Shao, Yakun S and Keutzer, Kurt and Gholami, Amir},
  journal={Advances in Neural Information Processing Systems},
  volume={37},
  pages={1270--1303},
  year={2024}
}

@article{venhoff2025understanding,
  title={Understanding reasoning in thinking language models via steering vectors},
  author={Venhoff, Constantin and Arcuschin, Iv{\'a}n and Torr, Philip and Conmy, Arthur and Nanda, Neel},
  journal={arXiv preprint arXiv:2506.18167},
  year={2025}
}

@article{bogdan2025thought,
  title={Thought Anchors: Which LLM Reasoning Steps Matter?},
  author={Bogdan, Paul C and Macar, Uzay and Nanda, Neel and Conmy, Arthur},
  journal={arXiv preprint arXiv:2506.19143},
  year={2025}
}

@article{nawrot2024dynamic,
  title={Dynamic memory compression: Retrofitting llms for accelerated inference},
  author={Nawrot, Piotr and {\L}a{\'n}cucki, Adrian and Chochowski, Marcin and Tarjan, David and Ponti, Edoardo M},
  journal={arXiv preprint arXiv:2403.09636},
  year={2024}
}

@article{wang2024model,
  title={Model tells you where to merge: Adaptive kv cache merging for llms on long-context tasks},
  author={Wang, Zheng and Jin, Boxiao and Yu, Zhongzhi and Zhang, Minjia},
  journal={arXiv preprint arXiv:2407.08454},
  year={2024}
}

@article{liu2024minicache,
  title={Minicache: Kv cache compression in depth dimension for large language models},
  author={Liu, Akide and Liu, Jing and Pan, Zizheng and He, Yefei and Haffari, Gholamreza and Zhuang, Bohan},
  journal={Advances in Neural Information Processing Systems},
  volume={37},
  pages={139997--140031},
  year={2024}
}

@article{sun2024shadowkv,
  title={Shadowkv: Kv cache in shadows for high-throughput long-context llm inference},
  author={Sun, Hanshi and Chang, Li-Wen and Bao, Wenlei and Zheng, Size and Zheng, Ningxin and Liu, Xin and Dong, Harry and Chi, Yuejie and Chen, Beidi},
  journal={arXiv preprint arXiv:2410.21465},
  year={2024}
}

@article{fu2024not,
  title={Not all heads matter: A head-level kv cache compression method with integrated retrieval and reasoning},
  author={Fu, Yu and Cai, Zefan and Asi, Abedelkadir and Xiong, Wayne and Dong, Yue and Xiao, Wen},
  journal={arXiv preprint arXiv:2410.19258},
  year={2024}
}

@article{xiao2023efficient,
  title={Efficient streaming language models with attention sinks},
  author={Xiao, Guangxuan and Tian, Yuandong and Chen, Beidi and Han, Song and Lewis, Mike},
  journal={arXiv preprint arXiv:2309.17453},
  year={2023}
}

@article{zhang2024q,
  title={Q-hitter: A better token oracle for efficient llm inference via sparse-quantized kv cache},
  author={Zhang, Zhenyu and Liu, Shiwei and Chen, Runjin and Kailkhura, Bhavya and Chen, Beidi and Wang, Zhangyang},
  journal={Proceedings of Machine Learning and Systems},
  volume={6},
  pages={381--394},
  year={2024}
}

@inproceedings{sharma2025minikv,
  title={MiniKV: Pushing the Limits of 2-Bit KV Cache via Compression and System Co-Design for Efficient Long Context Inference},
  author={Sharma, Akshat and Ding, Hangliang and Li, Jianping and Dani, Neel and Zhang, Minjia},
  booktitle={Findings of the Association for Computational Linguistics: ACL 2025},
  pages={18506--18523},
  year={2025}
}

@article{zhang2024kv,
  title={Kv cache is 1 bit per channel: Efficient large language model inference with coupled quantization},
  author={Zhang, Tianyi and Yi, Jonah and Xu, Zhaozhuo and Shrivastava, Anshumali},
  journal={Advances in Neural Information Processing Systems},
  volume={37},
  pages={3304--3331},
  year={2024}
}

@article{chang2024palu,
  title={Palu: Compressing kv-cache with low-rank projection},
  author={Chang, Chi-Chih and Lin, Wei-Cheng and Lin, Chien-Yu and Chen, Chong-Yan and Hu, Yu-Fang and Wang, Pei-Shuo and Huang, Ning-Chi and Ceze, Luis and Abdelfattah, Mohamed S and Wu, Kai-Chiang},
  journal={arXiv preprint arXiv:2407.21118},
  year={2024}
}

@article{zhang2024lorc,
  title={Lorc: Low-rank compression for llms kv cache with a progressive compression strategy},
  author={Zhang, Rongzhi and Wang, Kuang and Liu, Liyuan and Wang, Shuohang and Cheng, Hao and Zhang, Chao and Shen, Yelong},
  journal={arXiv preprint arXiv:2410.03111},
  year={2024}
}

@article{chen2024nacl,
  title={Nacl: A general and effective kv cache eviction framework for llms at inference time},
  author={Chen, Yilong and Wang, Guoxia and Shang, Junyuan and Cui, Shiyao and Zhang, Zhenyu and Liu, Tingwen and Wang, Shuohuan and Sun, Yu and Yu, Dianhai and Wu, Hua},
  journal={arXiv preprint arXiv:2408.03675},
  year={2024}
}

@article{han2025calibquant,
  title={CalibQuant: 1-Bit KV Cache Quantization for Multimodal LLMs},
  author={Han, Insu and Zhang, Zeliang and Wang, Zhiyuan and Zhu, Yifan and Liang, Susan and Liu, Jiani and Lin, Haiting and Zhao, Mingjie and Xu, Chenliang and Wan, Kun and others},
  journal={arXiv preprint arXiv:2502.14882},
  year={2025}
}

@article{kim2025kvzip,
  title={KVzip: Query-Agnostic KV Cache Compression with Context Reconstruction},
  author={Kim, Jang-Hyun and Kim, Jinuk and Kwon, Sangwoo and Lee, Jae W and Yun, Sangdoo and Song, Hyun Oh},
  journal={arXiv preprint arXiv:2505.23416},
  year={2025}
}

@inproceedings{hao2025omnikv,
  title={Omnikv: Dynamic context selection for efficient long-context llms},
  author={Hao, Jitai and Zhu, Yuke and Wang, Tian and Yu, Jun and Xin, Xin and Zheng, Bo and Ren, Zhaochun and Guo, Sheng},
  booktitle={The Thirteenth International Conference on Learning Representations},
  year={2025}
}

@article{tang2024quest,
  title={Quest: Query-aware sparsity for efficient long-context llm inference},
  author={Tang, Jiaming and Zhao, Yilong and Zhu, Kan and Xiao, Guangxuan and Kasikci, Baris and Han, Song},
  journal={arXiv preprint arXiv:2406.10774},
  year={2024}
}

@article{chen2025memshare,
  title={MemShare: Memory Efficient Inference for Large Reasoning Models through KV Cache Reuse},
  author={Chen, Kaiwen and Tan, Xin and Yu, Minchen and Xu, Hong},
  journal={arXiv preprint arXiv:2507.21433},
  year={2025}
}

@article{han2024token,
  title={Token-budget-aware llm reasoning},
  author={Han, Tingxu and Wang, Zhenting and Fang, Chunrong and Zhao, Shiyu and Ma, Shiqing and Chen, Zhenyu},
  journal={arXiv preprint arXiv:2412.18547},
  year={2024}
}

@article{aytes2025sketch,
  title={Sketch-of-thought: Efficient llm reasoning with adaptive cognitive-inspired sketching},
  author={Aytes, Simon A and Baek, Jinheon and Hwang, Sung Ju},
  journal={arXiv preprint arXiv:2503.05179},
  year={2025}
}

@article{song2025reasoning,
  title={Reasoning Path Compression: Compressing Generation Trajectories for Efficient LLM Reasoning},
  author={Song, Jiwon and Jo, Dongwon and Kim, Yulhwa and Kim, Jae-Joon},
  journal={arXiv preprint arXiv:2505.13866},
  year={2025}
}

@article{xia2025tokenskip,
  title={Tokenskip: Controllable chain-of-thought compression in llms},
  author={Xia, Heming and Leong, Chak Tou and Wang, Wenjie and Li, Yongqi and Li, Wenjie},
  journal={arXiv preprint arXiv:2502.12067},
  year={2025}
}

@article{yan2025inftythink,
  title={Inftythink: Breaking the length limits of long-context reasoning in large language models},
  author={Yan, Yuchen and Shen, Yongliang and Liu, Yang and Jiang, Jin and Zhang, Mengdi and Shao, Jian and Zhuang, Yueting},
  journal={arXiv preprint arXiv:2503.06692},
  year={2025}
}

@article{zhang2025lightthinker,
  title={Lightthinker: Thinking step-by-step compression},
  author={Zhang, Jintian and Zhu, Yuqi and Sun, Mengshu and Luo, Yujie and Qiao, Shuofei and Du, Lun and Zheng, Da and Chen, Huajun and Zhang, Ningyu},
  journal={arXiv preprint arXiv:2502.15589},
  year={2025}
}

@article{xu2025softcot,
  title={Softcot: Soft chain-of-thought for efficient reasoning with llms},
  author={Xu, Yige and Guo, Xu and Zeng, Zhiwei and Miao, Chunyan},
  journal={arXiv preprint arXiv:2502.12134},
  year={2025}
}

@article{cheng2024compressed,
  title={Compressed chain of thought: Efficient reasoning through dense representations},
  author={Cheng, Jeffrey and Van Durme, Benjamin},
  journal={arXiv preprint arXiv:2412.13171},
  year={2024}
}
\bibliographystyle{iclr2026_conference}

\appendix
\startcontents[appendix]

% Print an appendix-only ToC (depth = subsections here; change as you like)
\printcontents[appendix]{l}{1}{%
  \section*{\textbf{\huge{Appendix}}}
  \setcounter{tocdepth}{2} % 1=sections, 2=subsections, 3=subsubsections
}
\newpage

% Requires: \usepackage{booktabs,makecell}
\begingroup
\begin{table}[t]\centering
\caption{Summary of notation used in the paper.}
\renewcommand*{\arraystretch}{1.05}
\setlength\tabcolsep{5pt}
\resizebox{0.85\linewidth}{!}{%
\begin{tabular}{ll}
\toprule
\textbf{Symbol} & \textbf{Description} \\
\midrule
$A$ & Final answer produced after reasoning \\
$L$ & Number of layers in the LRM \\
$y_i$ & Token generated at step $i$ \\
$Y_i$ & A thought segment consisting of multiple discrete tokens \\
$Y_0, \dots, Y_{N-1}$ & Sequence of thought segments in a CoT output \\
$S_i^{\ell}$ & KV cache of layer $\ell$ after decoding step $i$ with associated thought type \\
$S_i^{\ell*}$ & Retained KV cache of layer $\ell$ after eviction \\
$(K_i^{\ell}, V_i^{\ell})$ & Key and value vectors of token $y_i$ at layer $\ell$ \\
$\tilde{K}_i^{\ell}, \tilde{V}_i^{\ell}$ & Quantized key and value representations \\
$\mathcal{T} = \{c_0,\dots,c_{T-1}\}$ & Set of $T$ thought categories \\
$\theta_1, \dots, \theta_{T-1}$ & Sparsity thresholds separating thought categories \\
$\mathcal{L}^*$ & Optimal subset of layers \\
$\tau$ & Refresh interval for thought categorization \\
$\mathcal{B} = \{b_0,\dots,b_{T-1}\}$ & Set of available quantization precisions \\
$\rho$ & Importance score function for thought categories \\
$\psi$ & Mapping from thought types to quantization precisions \\
$k$ & KV cache budget  \\
$\pi$ & Eviction policy \\
$\mathcal{R}$ & Annealing schedule \\
\bottomrule
\end{tabular}}
\vspace{-6pt}
\label{tab:notation}
\end{table}
\endgroup

\section{Overview of Mathematic Notation}
\label{sec:appendix_notations}
\autoref{tab:notation} summarizes the key notations used throughout the paper.

\section{Extended Related Works}
\label{sec:appendix_related_works}

\noindent
\textbf{Pre-LRM KV Cache Compression. }As LLMs began to support increasingly long contexts, the KV cache emerged as a primary target for optimization. Early work primarily addressed long input–context tasks by compressing the prefill KV cache. SnapKV (\cite{li2024snapkv}), AdaKV (\cite{feng2024ada}), and HeadKV (\cite{fu2024not}) prune tokens using attention statistics—via feature clustering or per-head budget allocation—while PyramidKV (\cite{cai2024pyramidkv}) applies a pyramidal strategy, preserving more tokens in lower layers and compressing higher ones. These methods effectively reduce prompt memory but are ill-suited for LRMs, where the challenge lies in managing long outputs. To manage cache growth during decoding, methods such as StreamingLLM (\cite{xiao2023efficient}), ScissorHands (\cite{liu2023scissorhands}), H2O (\cite{zhang2023h2o}), MorphKV (\cite{ghadia2025dialogue}), and KIVI (\cite{liu2024kivi}) reduce memory through attention sinks, probabilistic retention, heavy-hitter selection, sliding windows, and uniform quantization, respectively. More recent works, including Q-Hitter (\cite{zhang2024q}) and MiniKV (\cite{sharma2025minikv}), demonstrate that eviction and quantization can be co-designed, pointing toward hybrid strategies that maximize compression and throughput. While effective for extending traditional LLM outputs, these decode-time approaches often degrade accuracy on LRMs, as strategies driven by token recency or uniform compression fail to capture the reasoning progression and token importance characteristic of LRMs.

Compression approaches generally fall into four categories—eviction (\cite{li2024snapkv, ghadia2025dialogue, zhang2023h2o, liu2023scissorhands}), quantization (\cite{liu2024kivi, hooper2024kvquant}), merging (\cite{nawrot2024dynamic, wang2024model, liu2024minicache}), and low-rank decomposition (\cite{kang2024gear, sun2024shadowkv}).

\noindent
\emph{Eviction: } StreamingLLM (\cite{xiao2023efficient}) retains a fixed-size sliding window together with a few attention sink tokens. MorphKV (\cite{ghadia2025dialogue}) maintains a small set of recent tokens and selectively preserves older ones most correlated with the current context, providing constant-sized caches suitable for extended responses. LaCache (\cite{shi2025lacache}) introduces a ladder-shaped KV cache that preserves early tokens in shallow layers and later tokens in deeper layers, combined with iterative compaction of older caches, thereby supporting continuous long-context generation.

\noindent
\emph{Quantization: }Several works reduce KV cache memory by lowering precision while keeping all tokens. KVQuant (\cite{hooper2024kvquant}) explores ultra-low precision by quantizing keys pre-RoPE, applying sensitivity-aware non-uniform formats, and mixing dense/sparse quantization. More aggressive approaches investigate 1-bit quantization: methods such as Coupled Quantization (CQ) (\cite{zhang2024kv}) exploit inter-channel correlations to encode KV states with just 1 bit per channel, while calibration-based schemes (\cite{han2025calibquant}) introduce scaling and correction factors to preserve accuracy.

\noindent
\emph{Merging: }Several works compress by consolidating semantically similar tokens. MiniCache (\cite{liu2024minicache}) merges redundant prompt tokens into compact representations, while NACL (\cite{chen2024nacl}) prunes and merges tokens in a one-shot prefill step. These strategies reduce redundancy without per-step eviction but can blur token-level distinctions in reasoning tasks.

\noindent
\emph{Low-rank Decomposition: }Several works compress KV caches by factorizing them into low-rank representations to reduce memory and transfer costs. GEAR (\cite{kang2024gear}) couples low-rank approximation with sparse correction to mitigate quantization errors. ShadowKV (\cite{sun2024shadowkv}) stores low-rank keys on the GPU while offloading values to CPU, reconstructing minimal sparse KV blocks on the fly. Other approaches such as LoRC (\cite{zhang2024lorc}) and Palu (\cite{chang2024palu}) apply progressive or layer-sensitive low-rank factorization of KV matrices, often in combination with quantization, to cut cache size and accelerate attention.

\noindent
\textbf{Long Reasoning Compression. }A complementary line of work focuses on compressing the reasoning path rather than only the KV cache. Several approaches shorten chains-of-thought (CoT) at the output level: TALE (\cite{han2024token}) and SoT (\cite{aytes2025sketch}) guide models through prompt engineering to generate more concise explanations, while TokenSkip (\cite{xia2025tokenskip}) fine-tunes on condensed CoT datasets to reduce redundancy in multi-step reasoning. Other methods equip models with summarization capabilities, such as InftyThink (\cite{yan2025inftythink}) and LightThinker (\cite{zhang2025lightthinker}), which compress intermediate reasoning into summaries to save tokens. A different direction operates in latent space, with approaches like CCoT (\cite{cheng2024compressed}) and SoftCoT (\cite{xu2025softcot}) enabling reasoning directly on compressed internal representations rather than verbose token sequences. Most recently, RPC (\cite{song2025reasoning}) adaptively prunes, merges, or reorders reasoning trajectories while preserving correctness.

\textbf{System-Level Optimizations. }System-level methods complement algorithmic compression by managing KV storage at runtime. Quest (\cite{tang2024quest}) loads only query-relevant KV pages, while OmniKV (\cite{hao2025omnikv}) streams KV from CPU in small chunks to reduce GPU memory pressure—though both retain $O(N)$ complexity in sequence length $N$. MiniKV (\cite{sharma2025minikv}) introduces FlashAttention-compatible kernels for compressed KV, and Q-Hitter (\cite{zhang2024q}) unifies eviction and quantization to reduce GPU I/O overhead. H2O (\cite{zhang2023h2o}) and KVZip (\cite{kim2025kvzip}) avoid costly gather operations with ring-buffered caches, while MemShare (\cite{chen2025memshare}) enables block-level KV reuse across reasoning segments.

\section{Supplementary Background}
\subsection{LRM Inference Stages}
\label{sec:lrm_inference_stages}
The inference process of an $L$-layer LRM proceeds in two distinct phases: the \emph{prefill stage}, which processes the input prompt, and the \emph{decode stage}, which generates the output autoregressively. These phases differ fundamentally in their parallelism and computational bottlenecks.

\noindent
\textbf{Prefill.}  
Given a prompt of length $l_{\text{prompt}}$, the model embeds the input into hidden representations $X \in \mathbb{R}^{b \times l_{\text{prompt}} \times d}$, where $b$ is the batch size and $d$ the hidden dimension. For each layer $\ell$, keys and values are computed as
\[
X_K = X W_K^{\ell}, \quad X_V = X W_V^{\ell},
\]
with $W_K^{\ell}, W_V^{\ell} \in \mathbb{R}^{d \times d}$ denoting the projection matrices. The resulting KV tensors $\{(K_j^{\ell},V_j^{\ell})\}_{j=0}^{l_{\text{prompt}}-1}$ are stored in $S_{l_{\text{prompt}}}^{\ell}$ for subsequent use. Since all prompt tokens are processed in parallel, the prefill stage is dominated by quadratic attention cost in $l_{\text{prompt}}$ and is typically \emph{latency-bound}.  

\noindent
\textbf{Decode.}  
Once the cache has been initialized, generation proceeds autoregressively. At decode step $i$, the current token embedding $y_i$ produces
\[
K_i^{\ell} = y_i W_K^{\ell}, \quad V_i^{\ell} = y_i W_V^{\ell}, \quad q_i^{\ell} = y_i W_Q^{\ell},
\]
which are appended to the existing cache:
\[
S_i^{\ell} \leftarrow S_{i-1}^{\ell} \cup \{(K_i^{\ell}, V_i^{\ell})\}.
\]
Attention is then computed against all cached keys:
\[
A_i^{\ell} = \operatorname{softmax}\!\left(\frac{q_i^{\ell} (K_{0:i}^{\ell})^\top}{\sqrt{d}} \right), 
\qquad O_i^{\ell} = A_i^{\ell} V_{0:i}^{\ell}.
\]
This process repeats for $l_{\text{gen}}$ output tokens. Unlike prefill, decoding reuses the cache and extends it one token at a time, making the stage inherently \emph{throughput-bound} due to repeated KV cache lookups and memory traffic.

\noindent
In summary, prefill amortizes computation across the entire prompt to initialize the cache, while decode iteratively expands the cache to produce the final output sequence.

\subsection{Attention Mechanisms}
\label{sec:attention_mechanisms}
We briefly summarize two widely adopted attention variants: \emph{Multi-Head Attention (MHA)} and \emph{Grouped-Query Attention (GQA)}. ThinKV is applicable to both attention variants.

\noindent
\textbf{Multi-Head Attention (MHA). }In the autoregressive setting, each decode step produces a single query vector $q_h \in \mathbb{R}^{1 \times d}$ for head $h$, which attends over the stored key vectors $K_h \in \mathbb{R}^{n \times d}$ from the $n$ past tokens. The attention matrix is given by,
\begin{equation}
    a_h = \text{softmax}\!\left( \frac{q_h K_h^{\top}}{\sqrt{d}} \right) \in \mathbb{R}^{1 \times n}.
\end{equation}
The attention weights are then applied to the value states $V_h \in \mathbb{R}^{n \times d}$, and the outputs from all heads are concatenated and projected back to the hidden dimension. For sparsity analysis, attention scores are averaged across all heads.

\noindent
\textbf{Grouped-Query Attention (GQA). }In GQA, several query heads share a common set of key and value states. For a head group indexed by $h$, the cached keys and values are $(K_h, V_h) \in \mathbb{R}^{n \times d}$, while $G$ distinct query vectors $\{q_{h,g}\}_{g=0}^{G-1}$ are produced within the group. The attention score for query head $g$ is given by
\begin{equation}
    a_{h,g} = \frac{q_{h,g} K_h^{\top}}{\sqrt{d}} \in \mathbb{R}^{1 \times n}.
\end{equation}
These per-query matrices are aggregated element-wise across the group using max pooling:
\begin{equation}
    a_h^{\text{group}} = \operatorname{maxpool}\!\big( a_{h,0}, \ldots, a_{h,G-1} \big) \in \mathbb{R}^{1 \times n}.
\end{equation}
Finally, the consolidated scores are renormalized along the key dimension to obtain the final attention weight $a_h$ for the group,
\begin{equation}
    a_h = \text{softmax}\!\left(a_h^{\text{group}}\right) \in \mathbb{R}^{1 \times n}.
\end{equation}
For sparsity analysis, attention scores are averaged across groups.

\subsection{KV Permutation Invariance of Attention}
\label{sec:permutation_invariance}
\begin{theorem}[KV Permutation Invariance of Attention]
Given $q\in\mathbb{R}^{1\times d}$, $K\in\mathbb{R}^{n\times d}$, $V\in\mathbb{R}^{n\times d}$, define
\[
o \;=\; \softmax\!\Big(\tfrac{qK^\top}{\sqrt d}\Big)\,V \;\in\; \mathbb{R}^{1\times d}.
\]
For any permutation matrix $\Pi\in\mathbb{R}^{n\times n}$,
\[
\softmax\!\Big(\tfrac{q(\Pi K)^\top}{\sqrt d}\Big)\,(\Pi V)
\;=\;
\softmax\!\Big(\tfrac{qK^\top}{\sqrt d}\Big)\,V.
\]
\end{theorem}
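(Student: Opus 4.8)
The plan is to exploit the fact that a permutation matrix $\Pi$ acts on the $n$ stored tokens by reindexing them, and that every operation in the attention map—row-wise $\softmax$ and the weighted sum against $V$—depends on the token ordering only through a consistent relabeling. First I would write $\Pi$ in terms of the underlying permutation $\pi$ of $\{1,\dots,n\}$, so that $(\Pi K)_j = K_{\pi^{-1}(j)}$ and likewise $(\Pi V)_j = V_{\pi^{-1}(j)}$. Then the logit vector transforms as $q(\Pi K)^\top = (qK^\top)\Pi^\top$, i.e. the $j$-th logit of the permuted problem equals the $\pi^{-1}(j)$-th logit of the original. Since $\softmax$ is applied along the key dimension and is \emph{equivariant} under permutations of its argument—because both the exponentials and the normalizing denominator $\sum_k \exp(\cdot)$ are computed over the same reordered index set—we get $\softmax\!\big(q(\Pi K)^\top/\sqrt d\big) = \softmax\!\big(qK^\top/\sqrt d\big)\,\Pi^\top$.

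Next I would multiply by $\Pi V$ and use $\Pi^\top \Pi = I$ (equivalently $\Pi^{-1}=\Pi^\top$, since $\Pi$ is orthogonal):
\[
\softmax\!\Big(\tfrac{q(\Pi K)^\top}{\sqrt d}\Big)(\Pi V)
= \softmax\!\Big(\tfrac{qK^\top}{\sqrt d}\Big)\Pi^\top \Pi\, V
= \softmax\!\Big(\tfrac{qK^\top}{\sqrt d}\Big) V,
\]
which is exactly $o$. So the whole argument reduces to two facts: $\softmax$ is permutation-equivariant, and permutation matrices are orthogonal. I would state the equivariance of $\softmax$ as a one-line lemma (it is immediate from the definition, since a permutation of the entries permutes the exponentiated numerators identically and leaves the sum in the denominator unchanged) and then the main identity is a three-line chain of equalities.

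The only point requiring a little care—and the closest thing to an obstacle—is keeping the bookkeeping of $\Pi$ versus $\Pi^\top$ straight: one must be consistent about whether $\Pi K$ permutes rows by $\pi$ or by $\pi^{-1}$, and correspondingly whether the $\softmax$ row vector picks up $\Pi^\top$ or $\Pi$ on the right. Since permutation matrices satisfy $\Pi^\top = \Pi^{-1}$, any consistent convention closes the loop via $\Pi^\top\Pi=I$, so the result is robust to that choice; I would simply fix one convention at the start and carry it through. An alternative, fully index-free route is to observe that $o$ is a function of the \emph{multiset} $\{(K_j,V_j)\}_{j=1}^n$ together with $q$—the $\softmax$-weighted average does not see the order—and permutation leaves the multiset unchanged; I would mention this as a remark but present the matrix computation as the formal proof since it is self-contained and mechanical.
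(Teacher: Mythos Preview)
Your proposal is correct and matches the paper's proof essentially step for step: both note that $q(\Pi K)^\top = (qK^\top)\Pi^\top$, invoke the permutation-equivariance of $\softmax$ to pull $\Pi^\top$ outside, and then cancel $\Pi^\top\Pi = I$ against $\Pi V$. Your additional remarks on $\Pi$ versus $\Pi^\top$ bookkeeping and the multiset interpretation are helpful context but not present in the paper's more terse version.
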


\begin{proof}
Let $s=\tfrac{1}{\sqrt d}\,qK^\top\in\mathbb{R}^{1\times n}$.  
Since $\Pi$ is a permutation matrix, $\Pi^\top\Pi=I$, and for any $u\in\mathbb{R}^{1\times n}$ we have
\[
\softmax(u\Pi^\top)=\softmax(u)\Pi^\top \textit{(Equivariance Property)}
\]
Applying this with $u=s$ yields
\begin{align*}
\softmax\!\Big(\tfrac{1}{\sqrt d}\,q(\Pi K)^\top\Big)(\Pi V)
&= \softmax(s\Pi^\top)(\Pi V) \\
&= (\softmax(s)\Pi^\top)(\Pi V) \\
&= \softmax(s)(\Pi^\top\Pi)V \\
&= \softmax(s)V
\end{align*}
\end{proof}

\begin{remark}
The same invariance holds for GQA: for any group $h$ with shared $(K_h,V_h)$, a joint permutation of their rows leaves the group attention output unchanged.
\end{remark}

\begin{remark}
This permutation invariance explains why ThinKV can avoid reordering the KV cache during attention computation.
\end{remark}

\begin{algorithm}[t]
\caption{Calibration Process for Thought Decomposition}
\label{alg:calibration}
\begin{algorithmic}[1]
\STATE \textbf{Input:} Pre-trained LRM $\mathcal{M}$ with $L$ layers, calibration dataset $\mathcal{D}$ of $P$ prompts, number of thought types $T$, optimal number of layers $\ell^*$
\STATE \textbf{Output:} Optimal layer subset $\mathcal{L}^*$, sparsity threshold set $\Theta=$$\{\theta_1, \dots, \theta_{|\mathcal{T}|-1}\}$

\STATE Initialize $\mathcal{U}_\ell$ for each layer $\ell$
\FOR{each prompt $p \in \mathcal{D}$}
    \STATE Run $\mathcal{M}$ on $p$ and generate sequence of length $M_p$
    \FOR{each decoding step $t \in [M_p]$}
        \FOR{each layer $\ell \in [L]$}
            \STATE Compute sparsity $u$ from attention scores
            \STATE Append $u$ to $\mathcal{U}_\ell[p][t]$
        \ENDFOR
    \ENDFOR
\ENDFOR

\STATE Initialize $\mathcal{L}^{*} \gets \emptyset$
\FOR{each prompt $p$}
    \STATE Initialize ${L}^{*}[p] \gets \emptyset$
    \FOR{each layer $\ell$}
        \STATE Apply KDE $\hat{f}_h(x) = \frac{1}{Mh} 
  \sum_{m=1}^{M} 
  K\!\left(\frac{x - x_m}{h}\right)$ on  $\mathcal{U}_\ell[p]$
        \STATE Estimate modes $\Omega_\ell^{(p)} = \{x \mid \hat{f}_h'(x)=0, \hat{f}_h''(x)<0\}$
        \IF{$|\Omega_\ell| = T$}
            \STATE Add $\ell$ to ${L}^{*}[p]$
        \ENDIF
    \ENDFOR
\ENDFOR

\STATE $\mathcal{L}^* \gets \bigcap_{p=1}^P {L}^{*}[p]$

\FOR{each layer $\ell \in \mathcal{L}^*$}
    \FOR{each prompt $p \in [P]$}
        \STATE Identify local minima of the KDE and record thresholds $\{\theta^{(\ell,p)}_1, \dots, \theta^{(\ell,p)}_{|\mathcal{T}|-1}\}$
    \ENDFOR
\ENDFOR

\STATE Compute final thresholds 
$\theta_j = \tfrac{1}{|\mathcal{L}^*| P} 
\sum_{\ell \in \mathcal{L}^*} \sum_{p=1}^P \theta^{(\ell,p)}_j 
\quad \forall j \in [{|\mathcal{T}|}-1]$

\RETURN $\mathcal{L}^*$, $\{\theta_1, \dots, \theta_{|\mathcal{T}|-1}\}$
\end{algorithmic}
\end{algorithm}

\subsection{Group Quantization}
\label{sec:group_quant}

Group quantization \cite{ramachandran2025ouromamba} reduces precision by partitioning tensors into fixed-size groups and sharing a scale (and optionally zero-point) within each group. Given a tensor $X \in \mathbb{R}^{n \times d}$ and group size $g$, the entries are divided into groups $X_{G_i}$ of length $g$. Each group is quantized as
\[
\hat{X}_{G_i} = \mathrm{round}\!\left(\frac{X_{G_i}}{\Delta_i}\right), 
\qquad 
\Delta_i = \frac{\max(X_{G_i})}{2^b-1},
\]
where $b$ is the target bit-width and $\Delta_i$ is the group-specific scale.  

Smaller group sizes yield tighter ranges and lower error, while larger groups reduce metadata overhead. Group quantization thus provides a flexible trade-off between accuracy and efficiency, and serves as the default scheme for low-bit KV cache quantization in LRMs.

\subsection{Paged Attention}
\label{sec:paged_attention}
PagedAttention is an attention algorithm introduced in vLLM to address the inefficiencies of managing key–value (KV) cache memory during large language model serving. Traditional systems store each request’s KV cache in contiguous memory, leading to severe internal and external fragmentation as output lengths vary, and preventing memory sharing across sequences. Inspired by virtual memory paging, PagedAttention partitions the KV cache into fixed-size blocks that can be stored non-contiguously in GPU memory. Logical blocks are dynamically mapped to physical blocks through block tables.

\begin{figure*}[t]
  \centering \includegraphics[width = 0.8\linewidth, keepaspectratio]{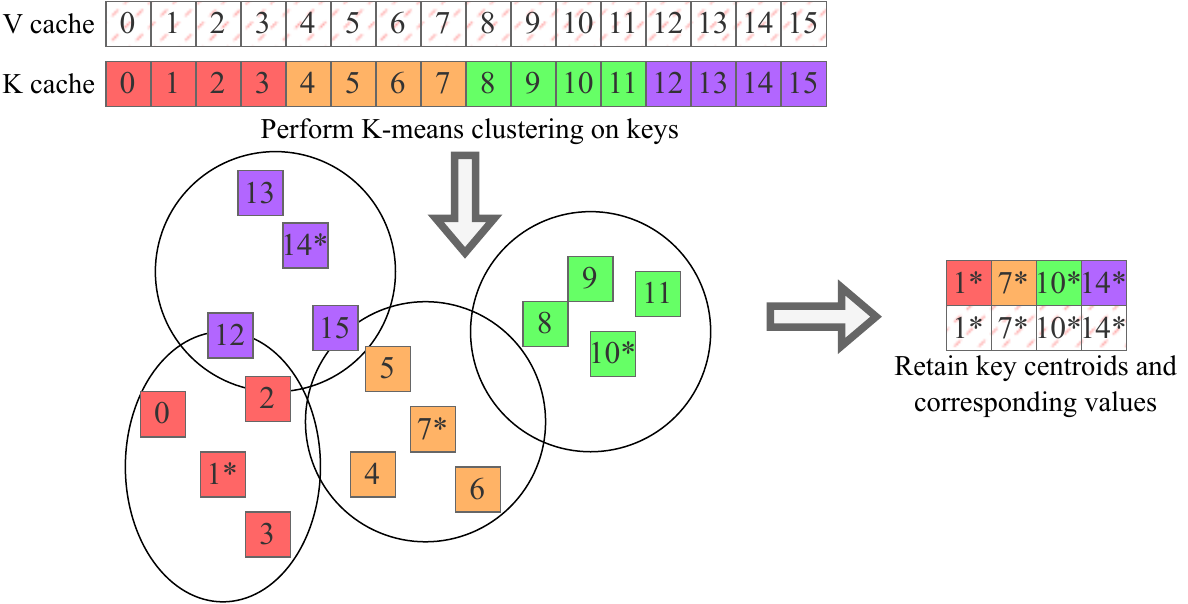}
  % \vspace{-4mm}
  \caption{Illustration of eviction policy $\pi$'s k-means-based eviction mechanism. }
  % \vspace{-5mm}
  \label{fig:kmeans}
\end{figure*}

\section{Supplementary Details on ThinKV}
\subsection{Thought Decomposition Calibration Process}
\label{sec:appendix_calibration}

\Cref{alg:calibration} depicts the algorithm for the offline calibration stage. This process estimates the sparsity thresholds that separate different thought categories by analyzing layer-wise attention sparsity distributions over a calibration dataset.

\begin{table}[t]
\centering
\caption{Keyword list to interpret different thought types.}
\renewcommand*{\arraystretch}{1.0}
\setlength\tabcolsep{4pt}
\resizebox{0.8\linewidth}{!}{%
\begin{tabular}{l|p{0.7\linewidth}}
\Xhline{2\arrayrulewidth}
\textbf{Reasoning} & Think, Approach, Remember, Find, Okay, Suppose, Verify \\
\hline
\textbf{Transition} & Wait, Hmm, Wait no, Alternatively, But wait, Earlier I said that \\
\hline
\textbf{Execution} & Now, The steps are, Mathematical equations, Code syntax \\
\Xhline{2\arrayrulewidth}
\end{tabular}}
\label{tab:thought_keywords}
\end{table}

\subsection{Thought Keyword List}
\label{sec:appendix_keywords}
To aid interpretation of sparsity regions, we provide representative keywords for the three thought types in \autoref{tab:thought_keywords}. These keywords are illustrative and only serve to map sparsity regions to reasoning, execution, and transition thoughts. They are not used for thought identification during inference.

\subsection{Quantization Data Formats}
\label{sec:quant_formats}
We employ three element formats of different precision levels:

\noindent
\textbf{FP8 (E4M3).}  
This is an 8-bit floating-point format with 1 sign bit, 4 exponent bits, and 3 mantissa bits. It provides a balance between dynamic range and accuracy and serves as the highest-precision option for thought-adaptive quantization, used primarily for reasoning tokens. This format only uses a per-tensor FP32 scale factor.

\noindent
\textbf{NVFP4.}  
NVIDIA’s recently introduced 4-bit floating-point format, NVFP4 \citep{alvarez2025nvfp4}, combines 1 sign bit, 2 exponent bits, and 1 mantissa bit optimized for inference workloads. NVFP4 employs a group-wise scale factor \citep{ramachandran2025microscopiq} with FP8 (E4M3) representation and a group size of 16. Execution and reasoning tokens are stored in NVFP4 to reduce memory footprint while retaining sufficient accuracy.

\noindent
\textbf{Ternary (2-bit).}  
This format encodes each element with two bits, covering three distinct values $\{-1,0,+1\}$. Of the four possible codes, one corresponds to \(-0\), which is redundant and simply mapped to \(0\).Similar to above, ternary also employs a group-wise scale factor with FP8 (E4M3) representation and a group size of 16. In our design, ternary quantization is applied exclusively to transition thoughts, where lower precision can be tolerated with minimal impact on overall accuracy.

\noindent
Together, these formats enable a precision hierarchy (FP8 $>$ NVFP4 $>$ Ternary) aligned with the observed importance of reasoning, execution, and transition thoughts.

\subsection{TBE Eviction Policy}
\label{sec:appendix_kmeans}
\autoref{fig:kmeans} illustrates the K-means eviction process. When a thought segment is selected for eviction, we cluster the post-RoPE key embeddings into a target number of groups, determined by the annealing schedule $\mathcal{R}$. Each cluster is replaced by its centroid key, and the corresponding value entry is retained. As shown, color-coded blocks indicate tokens that are close in the embedding space; centroids (marked with a star) are selected from each cluster, and only these representative key–value pairs are preserved in the cache.

While prior work \citep{hooper2025multipole} has highlighted that RoPE can induce token drift, thereby complicating the clustering of keys, we observe that this effect is negligible when clustering is restricted to tokens within a single thought segment. Each thought segment spans only 128 tokens, and the limited span ensures that RoPE-induced drift remains minimal, in contrast to clustering over the entire chain of thought (CoT) as done in \citep{hooper2025multipole}, where the drift accumulates more substantially. Furthermore, if future evidence suggests that drift becomes noticeable even within a thought segment, the Windowed RoPE strategy \citep{he20252} can be readily employed as a complementary technique to mitigate this issue.

\subsection{ThinKV Pseudocode}
\begin{lstlisting}[language=Python, caption=ThinKV generation loop.]

def generation_loop(prompt, max_gen_len, L, params):
    # Prologue
    init_block_tables()
    init_kv_cache()
    thresholds = (theta_low, theta_high)
    refresh_period = params.refresh
    group_size = params.group_size
    budget = params.token_budget

    # Generate
    for i in range(max_gen_len):
        for l in range(L):
            # Forward attention
            q, k_fp, v_fp = project_qkv(h[l])
            
            # Thought refresh: 0=transition, 1=execution, 2=reasoning
            if i % refresh_period == 0:
                spars = measure_sparsity(l)
                prev_thought[l] = thought[l]
                thought[l] = classify(spars, thresholds)

            # TBQ: group quantization
            buffer_add(l, k_fp, v_fp)
            if buffer_size(l) >= group_size:
                k_grp, v_grp = buffer_take(l, group_size)
                if thought[l] == 2:
                    kq, vq = Q4(k_grp, v_grp)   # NVFP4
                elif thought[l] == 1:
                    kq, vq = Q4(k_grp, v_grp)   # NVFP4
                else:
                    kq, vq = Q2(k_grp, v_grp)   # ternary
                kv_cache_update(l, kq, vq)

            # TBE: anneal at end of each transition segment
            if i % refresh_period == 0 and prev_thought[l] == 0:
                prev_segments = find_segments_before(l, step=i)
                for seg in prev_segments:
                    t = seg.type
                    keep = anneal_size(t)
                    ids = kmeans_select(l, seg, keep)
                    mark_evicted(l, seg, ids)

            # TBE: budget-constrained eviction
            if kv_size(l) > budget:
                candidates = active_thought_types(l)          
                t = argmin_importance(candidates)            
                oldest = find_oldest_segment(l, t)            
                keep = anneal_size(t)
                ids = kmeans_select(l, oldest, keep)
                mark_evicted(l, oldest, ids)

            # Attention computation
            h[l+1] = attend(q, K[l], V[l])

    # Epilogue
    return decode_tokens()

\end{lstlisting}

\subsection{ThinKV Walkthrough Example}
\label{sec:thinkv_walkthrough}
We provide a detailed walkthrough of ThinKV using the illustration in \autoref{fig:thinkv}.

\noindent
\textbf{TBQ Quantization. }During decoding, tokens are first appended to $B_{\text{buf}}$ in full precision. Once the group size is reached, they undergo group quantization. In the illustration, we highlight reasoning (\textbf{R}) tokens, which are quantized into the NVFP4 format. It is important to note that the block table indexes only quantized tokens i.e., the block table updates at group-size granularity.

\noindent
\textbf{Step a. }Following quantization, CT kernel queries the block table to determine whether a physical block of type-2 (reasoning) tokens has available capacity. Since the table is initially empty, a new entry is created with thought type 2, and a physical block is allocated. The start index of this reasoning segment is recorded as 0. Because the block currently stores only a single segment, the segment mask is initialized to all $1$s, while the eviction mask remains all $0$s.

\noindent
\textbf{Step b. }When token `D' is generated, a refresh occurs, switching to a type-1 (execution) thought. Execution tokens are likewise group quantized to NVFP4. CT then allocates a new entry for the execution thought type. Importantly, CT enforces thought-aware paging: execution tokens are never placed into partially filled blocks of other thoughts, even if capacity remains.

\noindent
\textbf{Step c. }Beginning with token `I', the decode refreshes to type-0 (transition) tokens. As defined in \cref{sec:tbe}, the end of this transition segment (the `L' token) triggers the TBE kernel. The kernel scans the block table, identifies all prior segments via their start indices, and applies the eviction policy. Instead of physically removing tokens, the eviction mask is updated to mark evicted positions, deferring eviction.

\noindent
\textbf{Step d. }After the next refresh, decoding returns to reasoning. CT inspects the eviction mask to identify available slots in existing reasoning blocks. For tokens `M' and `N', it locates two free slots in physical block 4, places the tokens there, and resets the eviction mask to all $0$s once the slots are filled. In parallel, it appends the start index of the new reasoning segment and updates the segment mask to indicate the token positions for each segment. By reusing evicted slots in this way, ThinKV achieves efficient memory utilization without introducing additional HBM bandwidth pressure or stalling the inference critical path. For tokens `O' and `P' since there are no empty slots available, a new block is allocated.

\begin{figure*}[t]
  \centering \includegraphics[width = 0.85\linewidth, keepaspectratio]{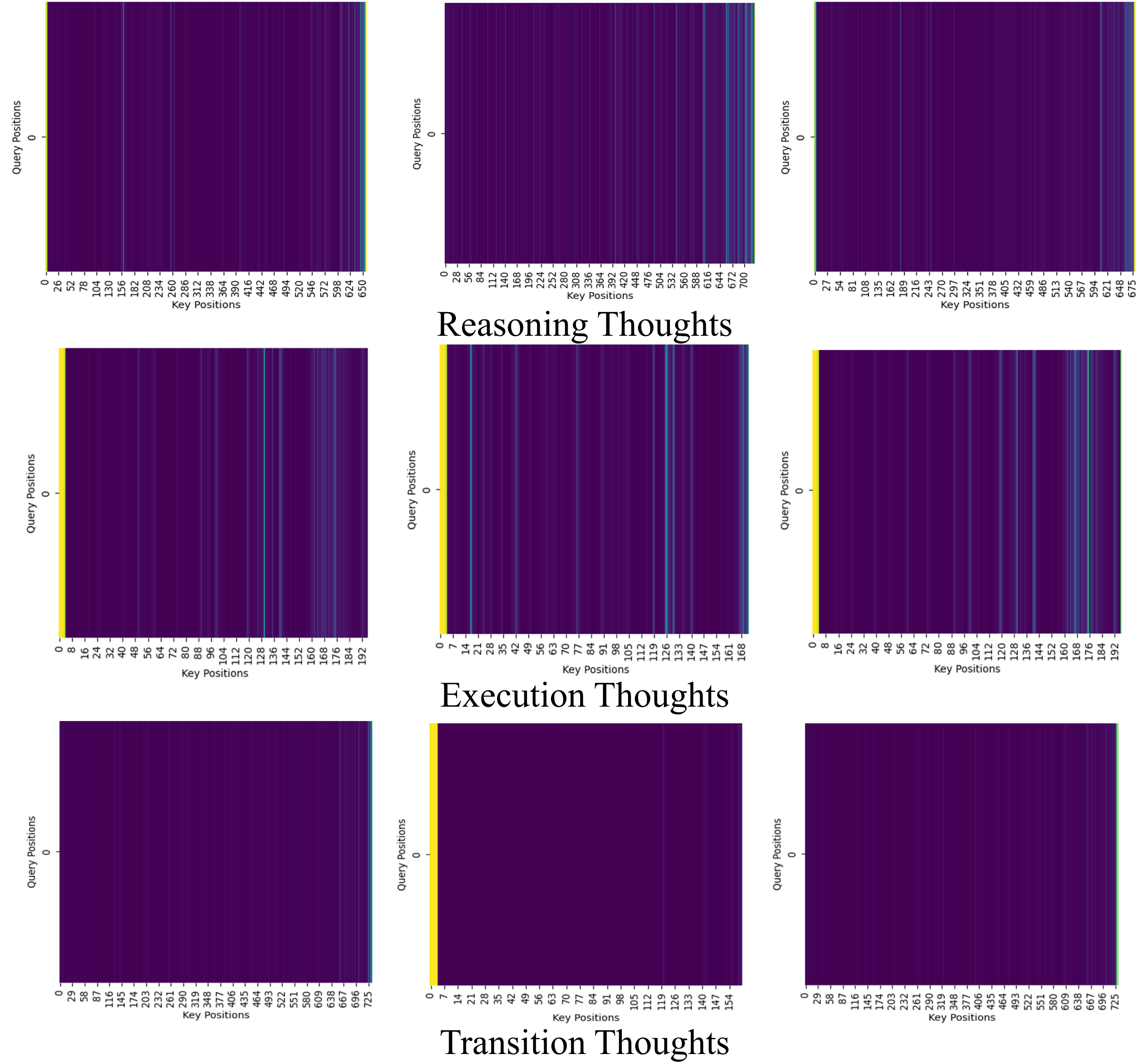}
  % \vspace{-4mm}
  \caption{Visualization of attention maps across different thought types. At decode time only a single query is present; maps are broadcasted for clarity of visualization.}
  % \vspace{-5mm}
  \label{fig:attention_maps}
\end{figure*}

\section{Extended Evaluations}
\label{sec:appendix_experiments}
\subsection{Dataset Details: AIME}
Following \citet{cai2025r, liu2025pm}, we construct an AIME benchmark of 30 prompts, comprising 15 prompts sampled from AIME 2024 and 15 from AIME 2025.

\subsection{Evaluation Setup Details}
\label{appendix_evaluation_setup}
We use the latest model checkpoints available on Hugging Face for all evaluations. We build on the Hugging Face Transformers codebase and implement the ThinKV algorithm by modifying it. The Hugging Face Transformers codebase employs the FlashAttention-2 kernel as its default attention backend, which we leverage for all baseline comparisons. In addition, we modify a Triton implementation of PagedAttention and integrate it into the Hugging Face Transformers stack; this baseline PagedAttention supports all features present in vLLM’s implementation. This integration was carried out as a proof of concept to quickly evaluate ThinKV’s performance. This proof-of-concept serves as a stepping stone toward full integration with optimized inference engines. Although this stack is not the most optimized, we still expect commensurate improvements when running on frameworks such as vLLM, as ThinKV’s modifications are orthogonal to specific kernel implementations. {To validate this, we integrate ThinKV inspired by this PR in vLLM \citet{vllm_pr_16160}. Our integration targets only the vLLM v1 version. Specifically, our major modifications are centered around `block\_table.py', `flash\_attn.py' and `csrc/attention'. By adjusting the flags in `envs.py', we can seamlessly toggle between R-KV, ThinKV, and a no-compression (Full-KV) baseline, enabling comparisons within the same vLLM framework.}

For measuring gather overhead, we profile this behavior on A100 and H200 GPUs using NVIDIA Nsight \citep{nvidia2025nsight}.

\subsection{Visualization of Attention Maps}
\autoref{fig:attention_maps} shows the attention weight matrices at different decoding steps, each corresponding to a single query. The visualization reveals that transition thoughts exhibit the highest sparsity, followed by reasoning, and then execution.

\begin{figure*}[t]
  \centering \includegraphics[width = \linewidth, keepaspectratio]{figures/sparsity_appendix.pdf}
  % \vspace{-4mm}
  \caption{{Layer-wise attention sparsity across decode steps for different models and datasets.}}
  % \vspace{-5mm}
  \label{fig:appendix_thought_association}
\end{figure*}

\begin{figure*}[t]
  \centering \includegraphics[width = 0.8\linewidth, keepaspectratio]{figures/thought_association_appendix.pdf}
  \vspace{-4mm}
  \caption{{Additional visualization of pairwise thought associations for different input prompts from different datasets (AIME and LiveCodeBench).}}
  \vspace{-5mm}
  \label{fig:thought_association_appendix}
\end{figure*}

\subsection{Attention Sparsity Plots}
\label{sec:attention_sparsity_appendix}
In \autoref{fig:appendix_thought_association}, we present attention sparsity across decode steps for several model families. For GPT-OSS-20B in particular, we highlight layers where the sparsity structure is difficult to distinguish, leading to ambiguous or poorly defined boundaries between thought categories.

\subsection{Pairwise Thought Association Maps}
\label{sec:appendix_associations}
In \autoref{fig:thought_association_appendix}, we show the inter-thought dynamics for additional prompts drawn from AIME and LiveCodeBench.

\begin{wraptable}{r}{0.35\linewidth}
\vspace{-10mm}
\centering
\caption{Comparison of ThinKV and R-KV on GSM8K using MobileLLM-R1-950M.}
\renewcommand*{\arraystretch}{1.0}
\setlength\tabcolsep{3pt}
\resizebox{\linewidth}{!}{%
\begin{tabular}{l|c|c}
\Xhline{2\arrayrulewidth}
Method & Compression & GSM8K \\
\Xhline{2\arrayrulewidth}
\cellcolor[HTML]{D3D3D3}FullKV 
& \cellcolor[HTML]{D3D3D3}1 
& \cellcolor[HTML]{D3D3D3}67.5 \\
R-KV & 6 & 60.8 \\
\cellcolor[HTML]{D5E8D4}\textbf{ThinKV} 
& \cellcolor[HTML]{D5E8D4}\textbf{24} 
& \cellcolor[HTML]{D5E8D4}\textbf{60.1} \\
\Xhline{2\arrayrulewidth}
\end{tabular}}
\label{tab:gsm8k_compression}
\vspace{-4mm}
\end{wraptable}

\subsection{Results on MobileLLM-R1 950M (GSM8K)}
For GSM8K, we set the KV cache budget to 256 tokens for an average generation length of $\sim$1500. Under this setting, ThinKV operates at an average precision of 3.9 bits and achieves a $24\times$ compression ratio while maintaining accuracy comparable to R-KV, which compresses at only $6\times$. This demonstrates ThinKV’s effectiveness in sustaining reasoning quality under high compression even for lightweight models such as MobileLLM-R1 950M.

\begin{wraptable}{r}{0.35\linewidth}
\vspace{-10mm}
\centering
\caption{{Accuracy of ThinKV vs FullKV across reasoning effort levels for GPT-OSS-120B on LiveCodeBench.}}
\renewcommand*{\arraystretch}{1.0}
\setlength\tabcolsep{3pt}
\resizebox{\linewidth}{!}{%
\begin{tabular}{l|c|c}
\Xhline{2\arrayrulewidth}
Method & Reasoning Effort & Accuracy \\
\Xhline{2\arrayrulewidth}
\cellcolor[HTML]{D3D3D3}FullKV 
& \cellcolor[HTML]{D3D3D3}High 
& \cellcolor[HTML]{D3D3D3}69.4 \\
\cellcolor[HTML]{D5E8D4}\textbf{ThinKV} 
& \cellcolor[HTML]{D5E8D4}\textbf{High} 
& \cellcolor[HTML]{D5E8D4}\textbf{67.5} \\ \cline{2-3}
\cellcolor[HTML]{D3D3D3}FullKV 
& \cellcolor[HTML]{D3D3D3}Medium 
& \cellcolor[HTML]{D3D3D3}61.8 \\
\cellcolor[HTML]{D5E8D4}\textbf{ThinKV} 
& \cellcolor[HTML]{D5E8D4}\textbf{Medium} 
& \cellcolor[HTML]{D5E8D4}\textbf{59.3} \\
\Xhline{2\arrayrulewidth}
\end{tabular}}
\label{tab:reasoning_effort}
\vspace{-4mm}
\end{wraptable}

\subsection{Results on GPT\mbox{-}OSS 120B (LiveCodeBench)}
We evaluate ThinKV on GPT-OSS 120B using LiveCodeBench under a fixed KV budget of $k=1024$ tokens. GPT-OSS exposes a \emph{reasoning effort} knob (low/medium/high) that controls the model’s reasoning budget; we sweep medium and high settings in our study. Across both effort levels, ThinKV tracks FullKV closely: at \emph{high} effort, ThinKV attains 67.5 vs. 69.4 for FullKV ($-1.9$ points); at \emph{medium}, 59.3 vs. 61.8 ($-2.5$ points). Higher effort predictably yields better accuracy but longer generations, increasing KV stress; ThinKV sustains accuracy under this regime despite the the 1024-token cache. Across both reasoning efforts ThinKV maintains an average precision of 3.6-bits.

\begin{wraptable}{r}{0.36\textwidth}
\vspace{-6mm}
\centering
\caption{{Impact of data format choices on accuracy for R1-Llama-8B.}}
\renewcommand*{\arraystretch}{1.0}
\setlength\tabcolsep{3pt}
\resizebox{\linewidth}{!}{%
\begin{tabular}{l|c|c}
\Xhline{2\arrayrulewidth}
Method & AIME & LiveCodeBench \\
\Xhline{2\arrayrulewidth}
\cellcolor[HTML]{D3D3D3}Baseline 
& \cellcolor[HTML]{D3D3D3}50 
& \cellcolor[HTML]{D3D3D3}32.14 \\
ThinKV w/ INT & 46.7 & 28.5 \\
\cellcolor[HTML]{D5E8D4}\textbf{ThinKV} 
& \cellcolor[HTML]{D5E8D4}\textbf{50} 
& \cellcolor[HTML]{D5E8D4}\textbf{32.14} \\
\Xhline{2\arrayrulewidth}
\end{tabular}}
\label{tab:thinkv_int}
\vspace{-4mm}
\end{wraptable}

\subsection{Ablation on Data Formats}
We further investigate the impact of different data formats on ThinKV. Specifically, we ablate the use of conventional integer quantization, where we employ INT4 and INT2 representations with same scaling as described in \Cref{sec:quant_formats}. This allows us to isolate the effect of the number representation from the scaling strategy. As shown in \autoref{tab:thinkv_int}, ThinKV with INT4/INT2 suffers notable accuracy degradation on both AIME and LiveCodeBench. This demonstrates the combination of NVFP4 and ternary data format as the better choice.  
\begin{figure*}[t]
  \centering \includegraphics[width = 0.8\linewidth, keepaspectratio]{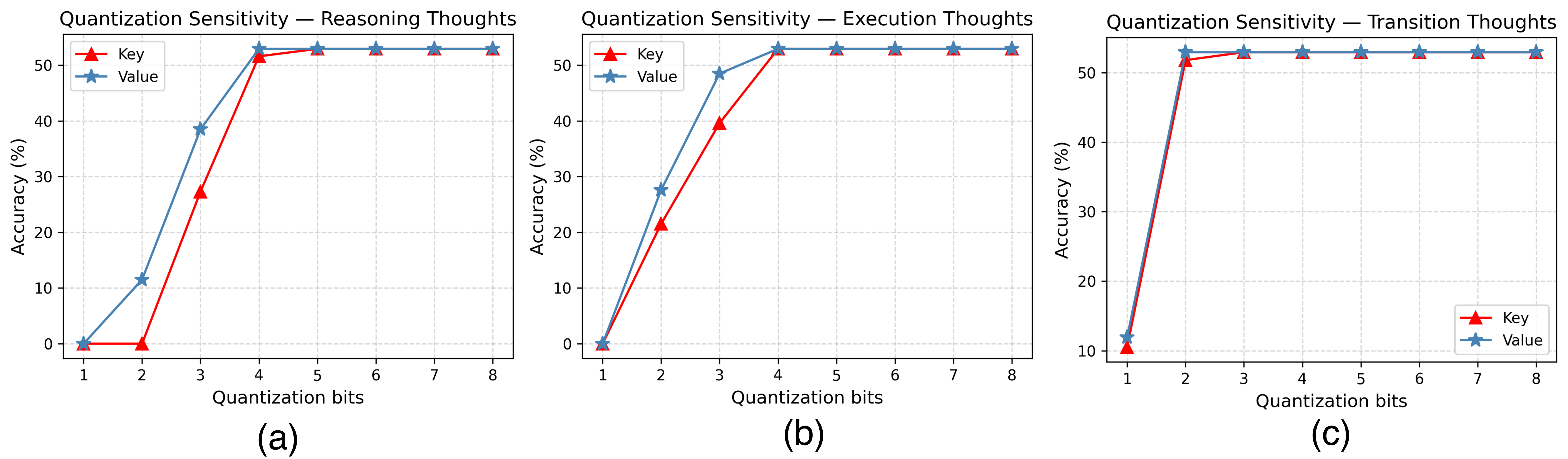}
  \vspace{-4mm}
  \caption{{Quantization sensitivity analysis of KV cache for (a) reasoning, (b) execution and (c) transition thoughts.}}
  \vspace{-5mm}
  \label{fig:quantization_sensitivity}
\end{figure*}

\subsection{{Quantization Sensitivity Analysis}}
{Following \citet{cheng2025qaq}, we analyze the quantization sensitivity of the KV cache across reasoning, execution, and transition thoughts in \autoref{fig:quantization_sensitivity}. Using INT quantization on R1-Llama-70B (LiveCodeBench), we sweep the precision of either K or V within a single thought type while fixing all remaining KV entries to 8-bit. The results show that transition thoughts are highly robust—both K and V tolerate aggressive quantization—supporting our use of 2-bit precision. Execution thoughts similarly remain stable down to 4 bits. In contrast, the K cache of Reasoning thoughts is significantly more sensitive, consistent with the K/V asymmetry observed in \citet{cheng2025qaq}, while the corresponding V cache remains resilient. These findings directly validate the precision assignments adopted in ThinKV.} 

\subsection{Generalization to LLMs}
\label{sec:appendix_llm}
\begin{wrapfigure}{R}{0.36\textwidth}
\vspace{-8mm}
\centering

\begin{minipage}{\linewidth}
\centering
\captionof{table}{LLM accuracy comparison on LongWriter task.}
\renewcommand*{\arraystretch}{1.0}
\setlength\tabcolsep{3pt}
\resizebox{\linewidth}{!}{%
\begin{tabular}{l|c|c}
\Xhline{2\arrayrulewidth}
Method & Llama-8B & Phi-14B \\
\Xhline{2\arrayrulewidth}
\cellcolor[HTML]{D3D3D3}FullKV 
& \cellcolor[HTML]{D3D3D3}66.5 
& \cellcolor[HTML]{D3D3D3}62.9 \\
H2O (5\%) & 68.1 & 61.5 \\
\cellcolor[HTML]{D5E8D4}\textbf{ThinKV (3.75\%)} 
& \cellcolor[HTML]{D5E8D4}\textbf{67.9} 
& \cellcolor[HTML]{D5E8D4}\textbf{63.8} \\
\Xhline{2\arrayrulewidth}
\end{tabular}}
\label{tab:5pct_accuracy}
\end{minipage}

\vspace{-4mm}
\end{wrapfigure}

To evaluate ThinKV’s generalizability beyond LRMs, we test it on the long-response benchmark LongWriter \citep{bai2024longwriter}, which includes 60 prompts across domains such as emails, blogs, essays, and novels, with response lengths ranging from 100 to 12K words.
Following \citet{zhang2023h2o}, we constrain the KV cache budget to $5\%$ of decode tokens. Unlike LRMs, LLMs do not exhibit distinct thought types; hence, we set $|\mathcal{T}|=1$ with $\mathcal{B}={4}$, treating all tokens as a single category. In this setting, eviction occurs only when the cache budget is reached, after which prior tokens are annealed to the nearest power of two. For evaluation, we follow \citet{ghadia2025dialogue} and use an LLM-based judge (Mistral-Large-123B) to score responses across multiple criteria. As shown in \autoref{tab:5pct_accuracy}, ThinKV generalizes effectively to LLMs, matching or even surpassing H2O while delivering higher compression through its hybrid scheme.

\vspace{4mm}
\subsection{{Pareto-front Analysis}}
\label{sec:appendix_pareto_front}
\begin{wrapfigure}{R}{0.35\textwidth}
  \centering \includegraphics[width = 0.34\textwidth, keepaspectratio]{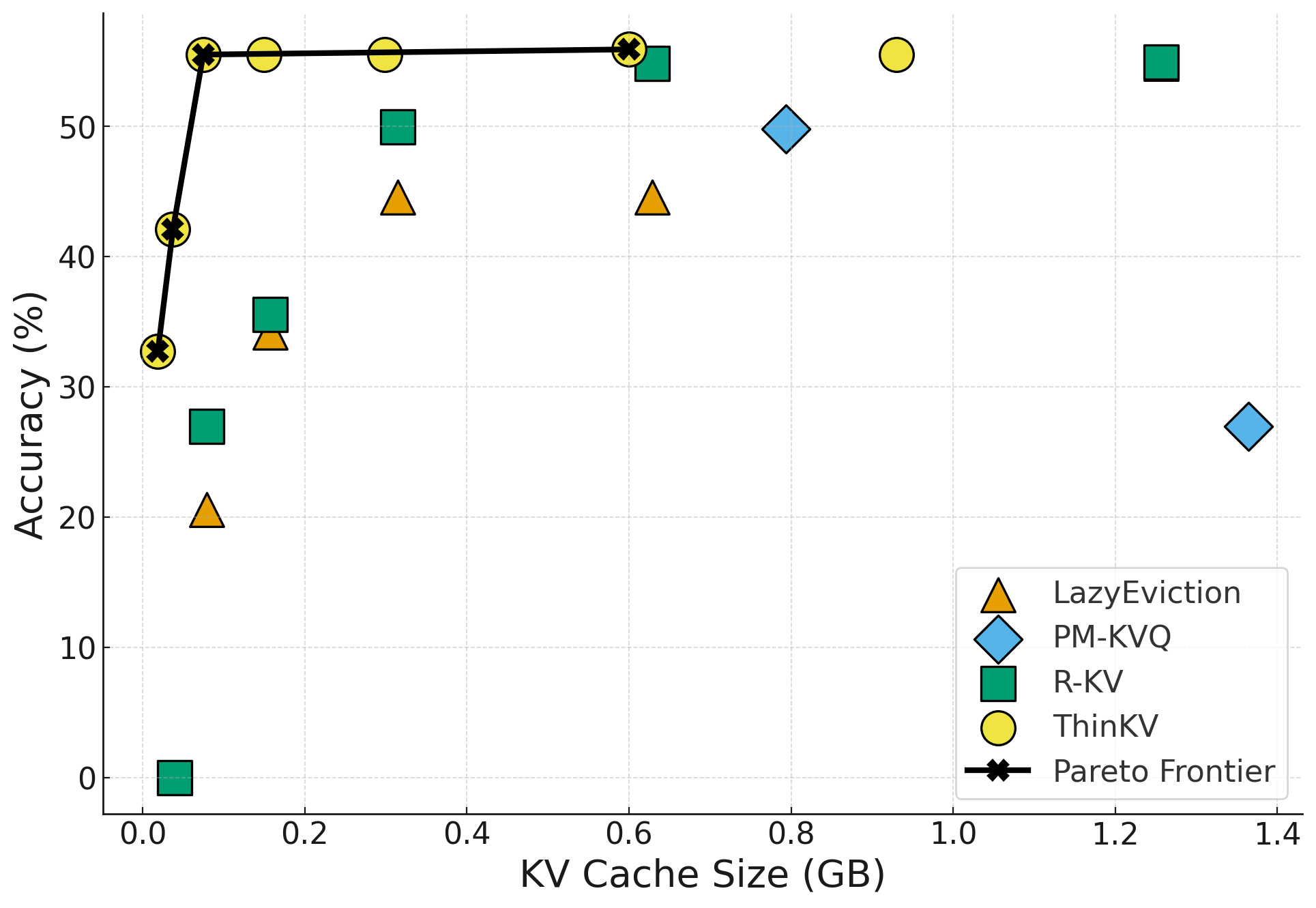}
  \vspace{-5mm}
  \caption{{Accuracy vs KV cache size comparison of ThinKV against SoTA baselines for R1-Llama-70B on LiveCodeBench.}}
  % Make it normalized TPOT
  \vspace{-1mm}
  \label{fig:pareto_front}
\end{wrapfigure}

\autoref{fig:pareto_front} illustrates the relationship between KV-cache size and accuracy across several SoTA compression and eviction baselines for R1-Llama-70B on LiveCodeBench. For this analysis, inspired by \citep{sharma2025minikv}, we sweep different configurations (token budget, quantization precision) for each of the evaluated methods. Methods such as LazyEviction, PM-KVQ, and R-KV achieve moderate compression but suffer significant accuracy degradation, while high-accuracy configurations require substantially larger KV budgets. In contrast, ThinKV consistently delivers near–FullKV accuracy at dramatically smaller KV-cache sizes, tracing a dominant curve that establishes the new Pareto frontier. Specifically, most ThinKV configurations lie strictly above competing methods at equivalent or smaller memory footprints. This frontier shift highlights ThinKV’s ability to achieve the best possible trade-off between accuracy and memory, outperforming both quantization-only and eviction-only approaches and confirming its strong scalability across compression regimes.

\begin{wraptable}{R}{0.40\textwidth}
\vspace{-8mm}
\begin{minipage}{\linewidth}
\centering
\caption{{Throughput comparison under different batch sizes implemented in vLLM.}}
\renewcommand*{\arraystretch}{1.0}
\setlength\tabcolsep{4pt}
\resizebox{\linewidth}{!}{
\begin{tabular}{l|c|c|c}
\Xhline{2\arrayrulewidth}
Method & Batch Size & Budget & Throughput \\
\Xhline{2\arrayrulewidth}

\cellcolor[HTML]{D3D3D3}FullKV 
& \cellcolor[HTML]{D3D3D3}8
& \cellcolor[HTML]{D3D3D3}--
& \cellcolor[HTML]{D3D3D3}228.5 \\

R-KV (ovl) 
& 8 
& 1024 
& 331.9 \\

\cellcolor[HTML]{D5E8D4}\textbf{ThinKV}
& \cellcolor[HTML]{D5E8D4}\textbf{8}
& \cellcolor[HTML]{D5E8D4}\textbf{1024}
& \cellcolor[HTML]{D5E8D4}\textbf{346.9} \\

\Xhline{2\arrayrulewidth}

R-KV (ovl) 
& 256 
& 1024 
& 4883.3 \\

\cellcolor[HTML]{D5E8D4}\textbf{ThinKV}
& \cellcolor[HTML]{D5E8D4}\textbf{256}
& \cellcolor[HTML]{D5E8D4}\textbf{1024}
& \cellcolor[HTML]{D5E8D4}\textbf{6622.4} \\

\Xhline{2\arrayrulewidth}
\end{tabular}}
\label{tab:vllm_throughput}
\end{minipage}
\vspace{-4mm}
\end{wraptable}

\subsection{Throughput Evaluation of ThinKV in vLLM}
As shown in \autoref{tab:vllm_throughput}, we report throughput under two iso-batch comparisons: (i) batch size = 8 against FullKV and R-KV (ovl), and (ii) batch size = 256 against R-KV (ovl). All methods have been implemented in vLLM for a fair comparison and measurements conducted on an A100-80GB GPU. At a batch size of 8, ThinKV delivers higher throughput than both FullKV and R-KV (ovl), improving over FullKV by more than 50\%. At a larger batch size of 256, ThinKV’s advantage becomes more pronounced: it achieves a substantial throughput increase over R-KV (ovl) of up to $1.35\times$. ThinKV demonstrates superior scalability by eliminating gather-based compaction and achieving higher KV-cache compression, both of which translate directly into faster model execution.

\begin{wraptable}{R}{0.38\textwidth}
\vspace{-9mm}
\begin{minipage}{\linewidth}
\centering
\caption{{Accuracy comparison between thinking, non-thinking, and ThinKV-enabled thinking modes on Qwen3-8B evaluated on LiveCodeBench.}}
\renewcommand*{\arraystretch}{1.05}
\setlength\tabcolsep{3pt}
\resizebox{\linewidth}{!}{%
\begin{tabular}{l|c|c|c}
\Xhline{2\arrayrulewidth}
Method & Mode & \shortstack{Avg. Precision\\ / Eviction Budget} & Accuracy (\%) \\
\Xhline{2\arrayrulewidth}

\cellcolor[HTML]{D3D3D3}FullKV 
& \cellcolor[HTML]{D3D3D3}Non-Thinking 
& \cellcolor[HTML]{D3D3D3}-- 
& \cellcolor[HTML]{D3D3D3}21.8 \\

\cellcolor[HTML]{D3D3D3}FullKV 
& \cellcolor[HTML]{D3D3D3}Thinking 
& \cellcolor[HTML]{D3D3D3}-- 
& \cellcolor[HTML]{D3D3D3}55.6 \\

\cellcolor[HTML]{D5E8D4}\textbf{ThinKV} 
& \cellcolor[HTML]{D5E8D4}\textbf{Thinking} 
& \cellcolor[HTML]{D5E8D4}\textbf{3.6 / 1024} 
& \cellcolor[HTML]{D5E8D4}\textbf{53.4} \\

\cellcolor[HTML]{D5E8D4}\textbf{ThinKV} 
& \cellcolor[HTML]{D5E8D4}\textbf{Thinking} 
& \cellcolor[HTML]{D5E8D4}\textbf{3.7 / 2048} 
& \cellcolor[HTML]{D5E8D4}\textbf{55.2} \\
\Xhline{2\arrayrulewidth}
\end{tabular}}
\label{tab:qwen3}
\end{minipage}
\vspace{-4mm}
\end{wraptable}

\subsection{Experiments on Qwen3 Models}
The Qwen3 model family \citep{yang2025qwen3} enables seamless switching between thinking and non-thinking modes via flags. Using a representative Qwen3-8B model, we compare its non-thinking mode against ThinKV-enabled thinking mode. ThinKV achieves $<2.2\%$ accuracy drop across eviction budgets while using $<6.87\%$ of FullKV memory. In contrast, the non-thinking mode exhibits a drastic $>33\%$ accuracy degradation. This highlights that reasoning-augmented decoding is essential for correctness.

\begin{wrapfigure}{R}{0.30\textwidth}
\vspace{-8mm}
  \centering \includegraphics[width = 0.28\textwidth, keepaspectratio]{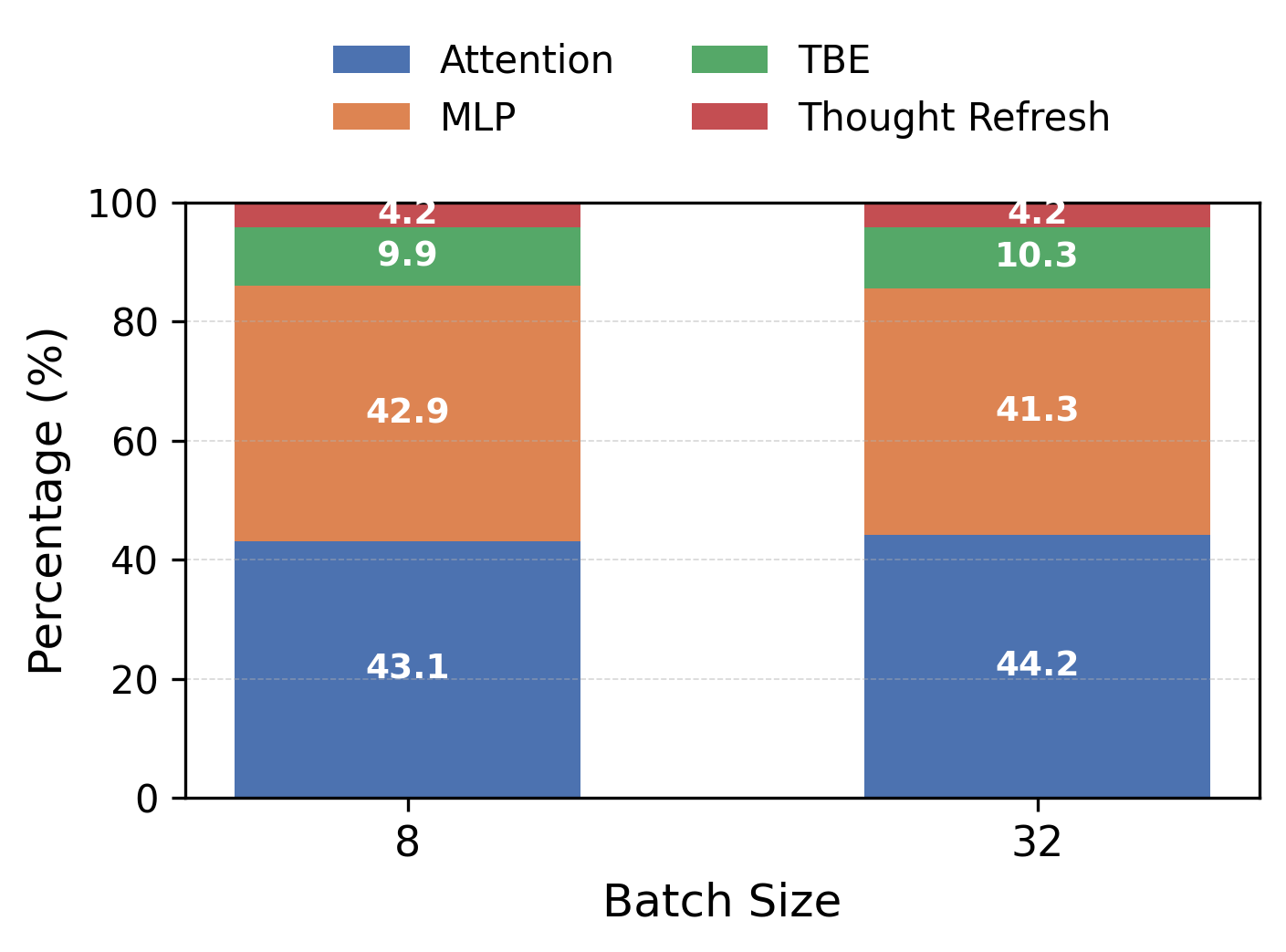}
  \vspace{-5mm}
  \caption{{Latency breakdown across different batch sizes.}}
  % Make it normalized TPOT
  \vspace{-1mm}
  \label{fig:latency_breakdown_batch}
\end{wrapfigure}

\subsection{Latency Breakdown Across Batch Sizes}
This experiment is conducted to better understand how the performance of ThinKV's components scale across batch sizes. For this analysis, we focus on a representative decode step that includes all mechanisms in action. \autoref{fig:latency_breakdown_batch} measurements show that ThinKV’s overhead (TBE eviction + thought refresh) remains minimal across batch sizes, consistently accounting for only $\sim$14\% of the total latency, while Attention and MLP operations dominate with more than 80–85\% of the runtime. As batch size increases, the proportion of time spent in core model execution (attention, MLP) grows, confirming that ThinKV scales efficiently with increasing batch size.

\begin{wraptable}{R}{0.45\textwidth}
\begin{minipage}{\linewidth}
\centering
\vspace{-8mm}
\caption{{Comparison of Time-per-Request (TPR), Accuracy, and Intelligence/Watt (Intel./Watt).}}
\renewcommand*{\arraystretch}{1.0}
\setlength\tabcolsep{4pt}
\resizebox{\linewidth}{!}{
\begin{tabular}{l|c|c|c|c}
\Xhline{2\arrayrulewidth}
Method & Token Budget & TPR (s) & Accuracy (\%) & Intel./Watt \\
\Xhline{2\arrayrulewidth}

\cellcolor[HTML]{D3D3D3}FullKV
& \cellcolor[HTML]{D3D3D3}--
& \cellcolor[HTML]{D3D3D3}259.6
& \cellcolor[HTML]{D3D3D3}50.0
& \cellcolor[HTML]{D3D3D3}0.20 \\

R-KV (seq) & 512 & 242.6 & 40.0 & 0.17 \\
R-KV (ovl) & 512 & 240.8 & 40.0 & 0.17 \\

\cellcolor[HTML]{D5E8D4}\textbf{ThinKV}
& \cellcolor[HTML]{D5E8D4}\textbf{512}
& \cellcolor[HTML]{D5E8D4}\textbf{237.5}
& \cellcolor[HTML]{D5E8D4}\textbf{46.7}
& \cellcolor[HTML]{D5E8D4}\textbf{0.21} \\

\Xhline{2\arrayrulewidth}

R-KV (seq) & 1024 & 247.8 & 46.7 & 0.20 \\
R-KV (ovl) & 1024 & 246.0 & 46.7 & 0.20 \\

\cellcolor[HTML]{D5E8D4}\textbf{ThinKV}
& \cellcolor[HTML]{D5E8D4}\textbf{1024}
& \cellcolor[HTML]{D5E8D4}\textbf{243.6}
& \cellcolor[HTML]{D5E8D4}\textbf{50.0}
& \cellcolor[HTML]{D5E8D4}\textbf{0.22} \\

\Xhline{2\arrayrulewidth}

R-KV (seq) & 2048 & 254.2 & 50.0 & 0.20 \\
R-KV (ovl) & 2048 & 253.7 & 50.0 & 0.20 \\

\cellcolor[HTML]{D5E8D4}\textbf{ThinKV}
& \cellcolor[HTML]{D5E8D4}\textbf{2048}
& \cellcolor[HTML]{D5E8D4}\textbf{251.0}
& \cellcolor[HTML]{D5E8D4}\textbf{50.0}
& \cellcolor[HTML]{D5E8D4}\textbf{0.21} \\

\Xhline{2\arrayrulewidth}
\end{tabular}}
\label{tab:latency_analysis}
\end{minipage}
\vspace{-4mm}
\end{wraptable}

\subsection{Time-per-Request Analysis}
\autoref{tab:latency_analysis} reports the average end-to-end request latency (Time-per-Request, TPR), accuracy, and Intelligence/Watt \citep{saad2025intelligence} for various KV-compression strategies evaluated on the AIME benchmark using R1-Llama-8B. ThinKV at a token budget of 1024, while simultaneously achieving lossless compression, is able to achieve up to 6\% lower latency on average per request as compared to the FullKV baseline. These gains extend beyond what a highly optimized framework like vLLM already provides, and ThinKV’s benefits become especially pronounced at larger batch sizes. Recent works have demonstrated that Intelligence/Watt \citep{saad2025intelligence} offers a unified view of both capability and efficiency, making it a principled metric for comparing compression strategies. As shown in \autoref{tab:latency_analysis}, these latency improvements materially increase ThinKV’s Intelligence/Watt over FullKV and R-KV.

% \vspace{15mm}
\begin{wraptable}{R}{0.38\textwidth}
\vspace{-8mm}
\begin{minipage}{\linewidth}
\centering
\caption{{Ablation of Prefill and Decode Settings for Hybrid (SnapKV + ThinKV) compression.}}
\renewcommand*{\arraystretch}{1.05}
\setlength\tabcolsep{3pt}
\resizebox{\linewidth}{!}{
\begin{tabular}{l|c|c|c}
\Xhline{2\arrayrulewidth}
Method &
\shortstack{Prefill Precision \\ / Eviction Budget} &
\shortstack{Decode Precision \\ / Eviction Budget} &
Accuracy \\
\Xhline{2\arrayrulewidth}

\cellcolor[HTML]{D3D3D3}FullKV &
\cellcolor[HTML]{D3D3D3}-- &
\cellcolor[HTML]{D3D3D3}-- &
\cellcolor[HTML]{D3D3D3}30 \\

\cellcolor[HTML]{D5E8D4}\textbf{ThinKV} &
\cellcolor[HTML]{D5E8D4}4-bits &
\cellcolor[HTML]{D5E8D4}3.8 / 512 &
\cellcolor[HTML]{D5E8D4}28 \\

\cellcolor[HTML]{D5E8D4}\textbf{SnapKV + ThinKV} &
\cellcolor[HTML]{D5E8D4}4-bits / 2048 &
\cellcolor[HTML]{D5E8D4}3.7 / 512 &
\cellcolor[HTML]{D5E8D4}24 \\

\cellcolor[HTML]{D5E8D4}\textbf{SnapKV + ThinKV} &
\cellcolor[HTML]{D5E8D4}4-bits / 4096 &
\cellcolor[HTML]{D5E8D4}3.8 / 512 &
\cellcolor[HTML]{D5E8D4}28 \\

\Xhline{2\arrayrulewidth}
\end{tabular}}
\label{tab:prefill}
\end{minipage}
\vspace{-4mm}
\end{wraptable}

\subsection{Integration with SnapKV}
ThinKV is orthogonal to other prefill-optimization techniques, and can be combined with them seamlessly. To illustrate this, we evaluate ThinKV integrated with SnapKV on LongBench v2 \cite{bai2025longbench}. LongBench v2 is a suite of 
long-context evaluation tasks characterized by both large input contexts and substantial output lengths. For this study, we evaluate on R1-LLama-8B and select the Code Repo QA task from LongBench v2, which features 16K–32K average input lengths and output sequences up to 8K tokens, making it a representative long-input/long-output setting. 
Prefill tokens are quantized to 4 bits using TBQ and evicted according to SnapKV’s eviction policy under varying budgets, while the decode stage employs ThinKV with a fixed KV-cache budget of 512 tokens. Our results are summarized in \autoref{tab:prefill}. ThinKV incurs only a small accuracy drop, despite achieving 4$\times$ prefill compression and up to 66$\times$ decode compression. When combined with SnapKV, a 4096 prefill token budget results in only $2\%$ accuracy degradation.

% \subsection{Personal Notes: To be Removed}
% Since attention sparsity is only used sporadically we can still maintain compatibility with FlashAttention......
% We can say existing techniques focussed on improving accuracy on reasoning but not improve speed
% MiniKV paper proposes a modification to FlashAttention that can enable using ThinKV, R-KV with Flash attention despite using attention weight score

% Also check Q-Hitter for motivation of quantization and eviction and also MiniKV
% Throughput gains when combining quantization with Eviction

% eviction alone without quantization does not cause increase in number of tokens, however accuracy is a bit lower than applying only quantization 0.5-0.75% accuracy decrease on average across tasks.
% Others noted this phenomena: MiKV. The conclusion I have arrived at is, when we evict after quantization we are able to retain better tokens because quantization modifies the representation of the elements in the KV cache as is its nature. This is also confirmed by some works like Q-Hitter

% Since we have a high recall rate sparsity pattern does not change even with small token budget

\subsection{LRM Example Reasoning Trace}
\label{sec:lrm_output_example}
\begin{tcolorbox}[colback=gray!10, colframe=black, boxrule=0.5pt, arc=2pt, left=0pt, right=0pt, top=0pt, bottom=0pt]
{\textbf{Input Prompt 1:} Let $ABCD$ be a rhombus whose vertices all lie on the hyperbola $\tfrac{x^2}{20}-\tfrac{y^2}{24}=1$ and are in that order. If its diagonals intersect at the origin, find the largest number less than $BD^2$ for all rhombuses $ABCD$.}
\end{tcolorbox}

\textbf{Annotated GPT-OSS-20B Model Output: } (Reasoning Thought)\Reason{``We have rhombus ABCD with vertices on hyperbola x\^{}2/20 - y\^{}2/24 = 1. Diagonals intersect at origin. So origin is intersection of diagonals, meaning the diagonals are perpendicular? In a rhombus, diagonals are perpendicular only if it's a square? Wait rhombus: diagonals perpendicular if it's a kite? Actually in rhombus diagonals are perpendicular only if rhombus is a square? Let's recall: In a rhombus, diagonals are perpendicular if rhombus is a kite? So a rhombus is a kite with equal sides. For a rhombus, diagonals are perpendicular if it's a square? No, square diagonals are perpendicular? In a square, diagonals are perpendicular? Actually square diagonals are perpendicular? Yes, square diagonals are perpendicular? Are square diagonals are perpendicular? In a square, diagonals are perpendicular? Yes they are perpendicular? Let's check:}

\Reason{{In a square, diagonals are perpendicular? Actually in a square, diagonals intersect at right angles? I think yes they are perpendicular? Wait think: In a square, the diagonals are perpendicular? For a square, the diagonals are perpendicular? I recall that in a square, diagonals are perpendicular? Let's compute: square of side s, diagonals length s$\sqrt{2}$. The angle between diagonals is 90 degrees? In a square, the diagonals are perpendicular? Let's draw: coordinates (0,0),(s,0),(s,s),(0,s). Diagonals: (0,0)-(s,s) line slope 1; other diagonal (s,0)-(0,s) slope -1. They are perpendicular. So yes.\textbackslash\{\}n\textbackslash\{\}nBut rhombus general: diagonals are perpendicular only if rhombus is a square? Actually rhombus with all sides equal but not necessarily right angles. In general rhombus, diagonals are perpendicular if rhombus is a kite? There is also a rhombus where diagonals are perpendicular but not equal sides?}}

(Transition Thought)\T{ Wait property: In a rhombus, diagonals are perpendicular iff rhombus is a square? Let's confirm: In a rhombus, diagonals are perpendicular if rhombus is a kite? Actually a kite has perpendicular diagonals. But a rhombus is a special kite with equal sides. For a rhombus, diagonals are perpendicular only if rhombus is a square? Let's test: Consider rhombus with sides length 1, angles 60 and 120: diagonals lengths: d1 = 2*1*cos(30)=? Wait formula: In rhombus with side a, angle \textbackslash\{\}u03b8 between adjacent sides. Diagonals lengths: d1 = a$\sqrt{}$(2+2cos\textbackslash\{\}u03b8) = 2a cos(\textbackslash\{\}u03b8/2). d2 = a$\sqrt{}$(2-2cos\textbackslash\{\}u03b8) = 2a sin(\textbackslash\{\}u03b8/2). They are perpendicular if product of slopes? Actually if d1*d2? For perpendicular, product of slopes? Wait slope of d1 relative to d2? In rhombus,}

(Execution Thought)\Execution{diagonals are perpendicular if d1\^{}2 + d2\^{}2 = 4a\^{}2? Wait maybe property: In rhombus, diagonals are perpendicular iff rhombus is a square? Let's check: For rhombus with \textbackslash\{\}u03b8=60°, d1=2a cos(30)=2a*($\sqrt{}$3/2)=a$\sqrt{}$3. d2=2a sin(30)=a. Are they perpendicular? In a square, \textbackslash\{\}u03b8=90°, d1=d2=a$\sqrt{}$2. Are they perpendicular? Let's compute slopes: For \textbackslash\{\}u03b8=60°, coordinates: place A at (0,0), B at (a,0). Next side AB to BC with angle 60: C = B + (a cos60, a sin60) = (a + a/2, a*$\sqrt{}$3/2). D = A + (a cos60, a sin60) = (a/2, a*$\sqrt{}$3/2). Diagonals: AC from (0,0) to (a + a/2, a*$\sqrt{}$3/2) slope = (a*$\sqrt{}$3/2)/(a+ a/2)= ($\sqrt{}$3/2)/(3/2)=$\sqrt{}$3/3. BD from (a,0) to (a/2, a*$\sqrt{}$3/2) slope = (a*$\sqrt{}$3/2 - 0)/(a/2 - a) = ($\sqrt{}$3/2)/(-a/2)= -$\sqrt{}$3. Product slopes = -3? Not -1. So not perpendicular. So}

\Execution{rhombus with \textbackslash\{\}u03b8=60 not perpendicular. So only square? Let's test rhombus with \textbackslash\{\}u03b8=90: square. Diagonals slopes 1 and -1, product -1. So yes.\textbackslash\{\}n\textbackslash\{\}nThus rhombus with diagonals perpendicular is a square. But we don't require perpendicular. We just need intersection at origin. So origin is intersection of diagonals; rhombus centered at origin. So vertices are opposite pairs symmetrical about origin. So if A at (x1,y1), then C at (-x1,-y1). Similarly B at (x2,y2), D at (-x2,-y2). All four points lie on hyperbola.\textbackslash\{\}n\textbackslash\{\}nThus we have points (x1,y1), (x2,y2) such that both satisfy hyperbola equation: x\^{}2/20 - y\^{}2/24 = 1.\textbackslash\{\}n\textbackslash\{\}nWe need to find maximum possible BD\^{}2?}

\T{Wait we need largest number less than BD\^{}2 for all rhombus? Wait question: \textbackslash\{\}"find the largest number less than BD\^{}2 for all rhombus ABCD\textbackslash\{\}". Means we need minimal possible BD\^{}2 Wait \textbackslash\{\}"largest number less than BD\^{}2 for all rhombus\textbackslash\{\}" ambiguous. Let's parse: We want the largest number less than BD\^{}2 for all rhombus? Means we consider all possible rhombuses, compute BD\^{}2, then find the maximum possible BD\^{}2? But they ask \textbackslash\{\}"largest number less than BD\^{}2 for all rhombus\textbackslash\{\}" maybe they want supremum of BD\^{}2? Actually \textbackslash\{\}"largest number less than BD\^{}2 for all rhombus\textbackslash\{\}" could mean find maximum possible BD\^{}2? Wait \textbackslash\{\}"largest number less than BD\^{}2 for all rhombus\textbackslash\{\}" could mean find the greatest lower bound? Let's read: \textbackslash\{\}"find the largest number less than BD\^{}2 for all rhombus ABCD\textbackslash\{\}". Might mean find the maximum possible value of BD\^{}2? But phrase \textbackslash\{\}"less than BD\^{}2\textbackslash\{\}" maybe they want the largest value that is less than any}

\T{BD\^{}2? That would be infimum? Wait \textbackslash\{\}"largest number less than BD\^{}2 for all rhombus\textbackslash\{\}" maybe they want the maximum of BD\^{}2? Let's analyze.\textbackslash\{\}n\textbackslash\{\}nWe need to interpret: \textbackslash\{\}"largest number less than BD\^{}2 for all rhombus ABCD\textbackslash\{\}". Suppose we want to find a number L such that for all rhombus, BD\^{}2 > L. And L is as large as possible. That is essentially the infimum of BD\^{}2 over all rhombus? Wait we want largest lower bound? Actually \textbackslash\{\}"less than BD\^{}2\textbackslash\{\}" meaning L < BD\^{}2 for all rhombus. So L is a lower bound of the set of BD\^{}2 values. Largest such L is the infimum (greatest lower bound). So they ask for infimum of BD\^{}2? Wait \textbackslash\{\}"largest number less than BD\^{}2\textbackslash\{\}" might mean the greatest number that is less than}

\Reason{BD\^{}2 for all rhombus? That is indeed infimum. But they might ask for maximum possible BD\^{}2? Let's check: Suppose BD\^{}2 can vary. They want largest number less than BD\^{}2? If BD\^{}2 can be arbitrarily large, then largest number less than BD\^{}2 is infinite. But maybe BD\^{}2 has a maximum? Let's compute BD\^{}2 formula maybe depends on angles; maybe BD\^{}2 has maximum. But they ask \textbackslash\{\}"largest number less than BD\^{}2 for all rhombus\textbackslash\{\}". That seems weird. Let's compute BD\^{}2 as a function of something. Perhaps BD\^{}2 can be arbitrarily large? Let's check: Points on hyperbola; as x large, y small. But y must satisfy hyperbola. For given x, y\^{}2 = 24*(x\^{}2/20 -1). So y \~{} sqrt(24*(x\^{}2/20 -1). For large |x|, y grows roughly sqrt(24/20) |x| = sqrt(6/5)|x| \~{} 1.095|x|.}

\Execution{So as |x| large, y \~{} 1.095|x|. So coordinates far away produce large distances. BD\^{}2 between points B and D? B at (x2,y2), D at (-x2,-y2). So BD vector = (-x2 - x2, -y2 - y2) = (-2x2, -2y2). So BD\^{}2 = 4(x2\^{}2 + y2\^{}2). That's 4*(x2\^{}2 + y2\^{}2). Since x2,y2 satisfy hyperbola, we can compute x2\^{}2 + y2\^{}2 variable. As |x2| large, y2 also large, so BD\^{}2 large. So no maximum; so maybe they want minimal BD\^{}2? Let's read again: \textbackslash\{\}"find the largest number less than BD\^{}2 for all rhombus ABCD.\textbackslash\{\}" Maybe they mean find the largest number that is less than every BD\^{}2. That is the greatest lower bound? That would be minimal BD\^{}2?}

\Reason{The largest number less than all BD\^{}2 would be the maximum lower bound = infimum. So they ask for infimum of BD\^{}2 across all rhombus. That is the minimal possible BD\^{}2. So we need to find minimal BD\^{}2. So they want largest number less than BD\^{}2 for all rhombus, i.e., the greatest lower bound. Equivalent to minimal BD\^{}2? If BD\^{}2 has minimum, then that minimum is greater than all smaller numbers; but largest number less than BD\^{}2 for all rhombus would be the minimum BD\^{}2 itself? Actually if BD\^{}2 has a minimum value m, then for all rhombus, BD\^{}2 >= m. Then m is less than or equal to all BD\^{}2. But largest number less than BD\^{}2 for all rhombus would be m? But m is not less than BD\^{}2 for rhombus that achieve m; it's equal. But phrase}

\Reason{\textbackslash\{\}"less than BD\^{}2\textbackslash\{\}" maybe they mean \textbackslash\{\}"less than or equal to\textbackslash\{\}"? ambiguous.\textbackslash\{\}n\textbackslash\{\}nLet's analyze: If we want minimal BD\^{}2, we can compute BD\^{}2 = 4(x\^{}2 + y\^{}2). For point (x,y) on hyperbola. So BD\^{}2 = 4(x\^{}2 + y\^{}2). We need to minimize this subject to hyperbola equation. So we need to find minimal value of x\^{}2 + y\^{}2 given x\^{}2/20 - y\^{}2/24 = 1. That is a constrained optimization.\textbackslash\{\}n\textbackslash\{\}nSet x\^{}2 = 20(1 + y\^{}2/24). Hyperbola: x\^{}2/20 - y\^{}2/24 = 1 => x\^{}2 = 20 + (20/24) y\^{}2 = 20 + (5/6) y\^{}2. So x\^{}2 + y\^{}2 = 20 + (5/6)y\^{}2 + y\^{}2 = 20 + (11/6)y\^{}2. So BD\^{}2 = 4(20 + (11/6)y\^{}2) = 80 + (44/6) y\^{}2 = 80 + (22/3) y\^{}2.\textbackslash\{\}n\textbackslash\{\}nThus BD\^{}2 depends on y\^{}2. Since y\^{}2}

\Execution{>= 0, BD\^{}2 >= 80. So minimal BD\^{}2 is 80 when y=0. But point with y=0 lies on hyperbola? Plug y=0: x\^{}2/20 =1 => x\^{}2=20 => x=\textbackslash\{\}u00b1√20. So vertices at (\textbackslash\{\}u00b1√20,0). Then A and C at (√20,0) and (-√20,0). B and D at (x2,y2) and (-x2,-y2). But if y=0, then B and D at (x2,0) and (-x2,0). But then all four points are collinear? If y=0 for B and D, then B and D on same horizontal line as A and C? Actually A at (√20,0), C at (-√20,0). So all four vertices would be on same line, degenerating rhombus? That would not be a rhombus. But we must have non-zero y2 to have rhombus shape? Let's think: For a rhombus, we need four distinct points not collinear.}

\Reason{If B and D share same y=0, then B and D are on same horizontal line as A and C; but then the rhombus would have zero area? Actually if all points lie on same line, then shape degenerates. So we need y2 ≠ 0. So minimal BD\^{}2 might be > 80.\textbackslash\{\}n\textbackslash\{\}nWe need to ensure that A,B,C,D form a rhombus: all sides equal. But we only used that diagonals intersect at origin; we need to ensure rhombus condition: AB=BC=CD=DA. We used only that A and C are opposite, B and D opposite. But need side lengths equal. So we need to impose side equality.\textbackslash\{\}n\textbackslash\{\}nLet coordinates: A=(x1,y1), C=(-x1,-y1). B=(x2,y2), D=(-x2,-y2). Then side AB: vector B-A = (x2 - x1, y2 - y1). Length squared L1 = (x2 -}

\Execution{x1)\^{}2 + (y2 - y1)\^{}2.\textbackslash\{\}n\textbackslash\{\}nBC: C-B = (-x1 - x2, -y1 - y2). Length squared L2 = (-x1 - x2)\^{}2 + (-y1 - y2)\^{}2 = (x1 + x2)\^{}2 + (y1 + y2)\^{}2.\textbackslash\{\}n\textbackslash\{\}nCD: D-C = (-x2 + x1, -y2 + y1) = (x1 - x2, y1 - y2). Length squared L3 = (x1 - x2)\^{}2 + (y1 - y2)\^{}2 = same as L1?   L3 equals L1? Actually AB and CD are opposite sides; they should be equal. Similarly BC and DA equal.\textbackslash\{\}n\textbackslash\{\}nBut for rhombus, all four side lengths equal: L1 = L2 = L3 = L4. But due to symmetry, L1 = L3 automatically? Let's check: L1 = (x2 - x1)\^{}2 + (y2 - y1)\^{}2. L3 = (x1 - x2)\^{}2 + (y1 - y2)\^{}2 = same as L1.}

\Execution{So L1 = L3. Similarly, L2 = (x1 + x2)\^{}2 + (y1 + y2)\^{}2. L4 = (x2 + x1)\^{}2 + (y2 + y1)\^{}2 = same as L2. So we need L1 = L2.\textbackslash\{\}n\textbackslash\{\}nThus condition: (x2 - x1)\^{}2 + (y2 - y1)\^{}2 = (x1 + x2)\^{}2 + (y1 + y2)\^{}2).\textbackslash\{\}n\textbackslash\{\}nExpand: LHS = (x2 - x1)\^{}2 + (y2 - y1)\^{}2 = x2\^{}2 - 2x1x2 + x1\^{}2 + y2\^{}2 - 2y1y2 + y1\^{}2.\textbackslash\{\}n\textbackslash\{\}nRHS = (x1 + x2)\^{}2 + (y1 + y2)\^{}2 = x1\^{}2 + 2x1x2 + x2\^{}2 + y1\^{}2 + 2y1y2 + y2\^{}2.\textbackslash\{\}n\textbackslash\{\}nSet equal: LHS = RHS.\textbackslash\{\}n\textbackslash\{\}nSimplify: LHS minus RHS = (x2\^{}2 - 2x1x2 + x1\^{}2 + y2\^{}2 - 2y1y2 + y1\^{}2) - (x1\^{}2 + 2x1x2 + x2\^{}2 + y1\^{}2 + 2y1y2 + y2\^{}2) = 0.\textbackslash\{\}n\textbackslash\{\}nCancel x2\^{}2, x1\^{}2, y2\^{}2, y1\^{}2. Left}

\Execution{with (-2x1x2 - 2y1y2) - (2x1x2 + 2y1y2)?? Compute:\textbackslash\{\}n\textbackslash\{\}n-2x1x2 - 2y1y2 - (2x1x2 + 2y1y2) = -2x1x2 - 2y1y2 - 2x1x2 - 2y1y2 = -4x1x2 - 4y1y2.\textbackslash\{\}n\textbackslash\{\}nSet equal to 0: -4x1x2 - 4y1y2 = 0 => x1x2 + y1y2 = 0.\textbackslash\{\}n\textbackslash\{\}nThus rhombus condition: x1x2 + y1y2 = 0.\textbackslash\{\}n\textbackslash\{\}nBut we also have that all points on hyperbola: x1\^{}2/20 - y1\^{}2/24 = 1; x2\^{}2/20 - y2\^{}2/24 = 1.\textbackslash\{\}n\textbackslash\{\}nWe need to find solutions (x1,y1),(x2,y2) satisfying x1x2 + y1y2 = 0.\textbackslash\{\}n\textbackslash\{\}nWe also want BD\^{}2 = 4(x2\^{}2 + y2\^{}2). We need to minimize BD\^{}2 subject to constraints.\textbackslash\{\}n\textbackslash\{\}nLet’s denote variables: Let coordinates for A: (x1,y1). For B: (x2,y2). Condition: x1x2 + y1y2 = 0.\textbackslash\{\}n\textbackslash\{\}nWe can param using angles? Since hyperbola: paramization: x = $\sqrt{}$20 cosh t, y = $\sqrt{}$24 sinh t? Hyperbola}

\Execution{standard: x\^{}2/20 - y\^{}2/24 = 1. Use param: x = $\sqrt{}$20 cosh u, y = $\sqrt{}$24 sinh u. Because cosh\^{}2 - sinh\^{}2 =1. So indeed x\^{}2/20 = cosh\^{}2 u, y\^{}2/24 = sinh\^{}2 u. So coordinates: ($\sqrt{}$20 cosh u, $\sqrt{}$24 sinh u). Similarly for second point: ($\sqrt{}$20 cosh v, $\sqrt{}$24 sinh v). But we also need x2\^{}2/20 - y2\^{}2/24 =1. So both points satisfy.\textbackslash\{\}n\textbackslash\{\}nThus x1 = $\sqrt{}$20 cosh u, y1 = $\sqrt{}$24 sinh u. x2 = $\sqrt{}$20 cosh v, y2 = $\sqrt{}$24 sinh v.\textbackslash\{\}n\textbackslash\{\}nNow rhombus condition: x1x2 + y1y2 = 0.\textbackslash\{\}n\textbackslash\{\}nCompute: x1x2 + y1y2 = $\sqrt{}$20 cosh u * $\sqrt{}$20 cosh v + $\sqrt{}$24 sinh u * $\sqrt{}$24 sinh v = 20 cosh u cosh v + 24 sinh u sinh v.\textbackslash\{\}n\textbackslash\{\}nSet equal to 0: 20 cosh u cosh}

\Execution{v + 24 sinh u sinh v = 0.\textbackslash\{\}n\textbackslash\{\}nDivide by 4: 5 cosh u cosh v + 6 sinh u sinh v = 0.\textbackslash\{\}n\textbackslash\{\}nWe can use identity: cosh(u+v) = cosh u cosh v + sinh u sinh v. But coefficients differ. Maybe use hyperbolic identity: cosh(u+v) = cosh u cosh v + sinh u sinh v. Similarly for v. Then compute:\textbackslash\{\}n\textbackslash\{\}ncosh u cosh v = 1/$\sqrt{}$((1-a\^{}2)(1-b\^{}2)). sinh u sinh v = ab / $\sqrt{}$((1-a\^{}2)(1-b\^{}2). So equation:\textbackslash\{\}n\textbackslash\{\}n5 * 1/$\sqrt{}$((1-a\^{}2)(1-b\^{}2))}

\Execution{+ 6 * ab / $\sqrt{}$((1-a\^{}2)(1-b\^{}2)) = 0.\textbackslash\{\}n\textbackslash\{\}nMultiply both sides by $\sqrt{}$((1-a\^{}2)(1-b\^{}2)):\textbackslash\{\}n\textbackslash\{\}n5 + 6ab = 0 => ab = -5/6.\textbackslash\{\}n\textbackslash\{\}nThus tanh u * tanh v = -5/6.\textbackslash\{\}n\textbackslash\{\}nThus we have relation between u and v. Since tanh u in (-1,1). So product negative implies u and v of opposite signs. So we can pick u>0, v<0 etc.\textbackslash\{\}n\textbackslash\{\}nThus we have relation: tanh u * tanh v = -5/6.\textbackslash\{\}n\textbackslash\{\}nNow we need BD\^{}2 = 4(x2\^{}2 + y2\^{}2) = 4(20 cosh\^{}2 v + 24 sinh\^{}2 v) = 4(20 cosh\^{}2 v + 24 sinh\^{}2 v). But cosh\^{}2 - sinh\^{}2 =1. So cosh\^{}2 = 1 + sinh\^{}2. So 20 cosh\^{}2 v + 24 sinh\^{}2 v = 20(1 + sinh\^{}2 v) + 24 sinh\^{}2 v = 20 + (20+24) sinh\^{}2 v = 20 + 44 sinh\^{}2}

\Execution{v.\textbackslash\{\}n\textbackslash\{\}nThus BD\^{}2 = 4(20 + 44 sinh\^{}2 v) = 80 + 176 sinh\^{}2 v.\textbackslash\{\}n\textbackslash\{\}nSimilarly we could express in terms of tanh v. Since sinh\^{}2 v = tanh\^{}2 v * cosh\^{}2 v. But maybe easier: Use identity: sinh\^{}2 v = (cosh\^{}2 v -1). But we can express cosh\^{}2 v in terms of tanh\^{}2: cosh\^{}2 = 1/(1 - tanh\^{}2). So sinh\^{}2 = cosh\^{}2 -1 = (1/(1 - t\^{}2) -1) = (1 - (1 - t\^{}2))/ (1 - t\^{}2) = t\^{}2/(1 - t\^{}2). Where t = tanh v.\textbackslash\{\}n\textbackslash\{\}nThus BD\^{}2 = 4(20 + 44 * t\^{}2/(1 - t\^{}2)). So BD\^{}2 = 80 + 176 * t\^{}2/(1 - t\^{}2).\textbackslash\{\}n\textbackslash\{\}nNow we need to minimize BD\^{}2 over possible t satisfying relation with u: tanh u * tanh v = -5/6. But t = tanh v.}

\T{Let a = tanh u. Then a*b = -5/6 where b = tanh v = t. So t = -5/(6a). Since a in (-1,1). So t must satisfy |t| <1. So need | -5/(6a) | < 1 => |5/(6a)| < 1 => |a| > 5/6. So a must be between 5/6 and 1 or -1 and -5/6. But a is tanh u; so u large magnitude >? tanh >5/6 implies |u| > artanh(5/6). So a in (5/6,1) or (-1,-5/6). But since a and t have opposite signs? Because product negative, if a>0 then t<0, etc. So we can consider a>5/6, t negative.\textbackslash\{\}n\textbackslash\{\}nNow we need to compute BD\^{}2 in terms of a? Because t expressed in a. But BD\^{}2 depends only on t (since BD\^{}2 formula). So we need}

\T{to minimize BD\^{}2 over t satisfying relation? Wait BD\^{}2 depends only on t. But t = tanh v. But v's relation to u ensures t = -5/(6a). So we can vary a in (5/6,1). For each a, we have t = -5/(6a). Then BD\^{}2 = 80 + 176 * t\^{}2/(1 - t\^{}2). Let's compute expression in a.\textbackslash\{\}n\textbackslash\{\}nLet t = -5/(6a). Then t\^{}2 = 25/(36 a\^{}2). Compute 1 - t\^{}2 = 1 - 25/(36 a\^{}2) = (36 a\^{}2 - 25)/(36 a\^{}2). Then t\^{}2/(1 - t\^{}2) = (25/(36 a\^{}2)) / ((36 a\^{}2 -25)/(36 a\^{}2)) = 25/(36 a\^{}2) * (36 a\^{}2)/(36 a\^{}2 -25) = 25/(36 a\^{}2 -25).\textbackslash\{\}n\textbackslash\{\}nThus BD\^{}2 = 80 + 176 * 25/(36 a\^{}2 - 25) = 80 + (176*25)/(36 a\^{}2 - 25). Simplify: 176*25 = 4400. So BD\^{}2 =}

\Execution{80 + 4400/(36 a\^{}2 - 25).\textbackslash\{\}n\textbackslash\{\}nNow a in (5/6,1). We need to minimize BD\^{}2? Actually BD\^{}2 increases as denominator decreases. Denominator 36 a\^{}2 -25. Since a\^{}2 <1, denominator ranges? For a close to 1, denominator = 36*1 -25 = 11. For a close to 5/6, a\^{}2 = 25/36, denominator = 36*(25/36)-25 = 25-25=0. So denominator goes to 0 from positive side? So a\^{}2 = 25/36 => 36 a\^{}2 =25 => denominator = 0. So as a approaches 5/6, denominator tends to 0+, so BD\^{}2 tends to infinity. So BD\^{}2 minimal occurs at largest denominator, i.e., a=1. So minimal BD\^{}2 occurs when a=1 (tanh u =1). But tanh u cannot be exactly 1? As u\textbackslash\{\}u2192\textbackslash\{\}u221e, tanh u\textbackslash\{\}u21921. So a can approach 1 from below. Then denominator=36*1 -25=11. So BD\^{}2}

\Execution{= 80 + 4400/11 = 80 + 400 = 480. 4400/11 = 400. So BD\^{}2 tends to 480 as a\textbackslash\{\}u21921. So minimal BD\^{}2 = 480? But we need to confirm that a=1 corresponds to u infinite, but we still need to satisfy rhombus condition with finite coordinates? As u\textbackslash\{\}u2192\textbackslash\{\}u221e, point A tends to (√20 cosh u, √24 sinh u). As u\textbackslash\{\}u2192\textbackslash\{\}u221e, cosh u \~{} (e\^{}u)/2 large. So coordinates huge. But we also have v such that tanh v = -5/(6a) \~{} -5/6. So v finite negative such that tanh v = -5/6. So B coordinates finite. So rhombus with one vertex at infinity? A at infinity, B finite. But rhombus with infinite vertex? Not allowed. So we cannot let a\textbackslash\{\}u21921 exactly? Actually a=1 means u\textbackslash\{\}u2192\textbackslash\{\}u221e, so}

\T{A's coordinates infinite. So not allowed. But we can approach arbitrarily close to 1, giving arbitrarily large coordinates for A. But we need finite coordinates for all vertices. So minimal BD\^{}2 may not be achievable but we can approach 480 from above. So largest number less than BD\^{}2 for all rhombus is 480? Wait we need largest number less than BD\^{}2 for all rhombus. Since BD\^{}2 > 480 for all finite rhombus? Let's test: For any finite u, a <1, BD\^{}2 > 480? Since BD\^{}2 = 80 + 4400/(36 a\^{}2 - 25). Denominator > 0. As a increases, denominator increases, BD\^{}2 decreases. So BD\^{}2 minimal when denominator maximum. Denominator increases with a\^{}2. So maximum a\^{}2 <1. So minimal BD\^{}2 > 80 + 4400/(11)?? Wait compute at a close}

\Execution{to 1: a=0.999? Let's compute approximate: a\^{}2=0.998001, denom = 36*0.998001 -25 = 35.928036 -25=10.928036. BD\^{}2 = 80 + 4400/10.928036 = 80 + 402.9 = 482.9. So BD\^{}2 > 480. So minimal BD\^{}2 > 480. So largest number less than BD\^{}2 for all rhombus is 480? But BD\^{}2 always > 480? Actually at a close to 1, BD\^{}2 \~{} 482.9 > 480. So BD\^{}2 > 480 for all finite rhombus. So the greatest lower bound is 480? But BD\^{}2 > 480 for all rhombus. So 480 is less than BD\^{}2 for all rhombus. But is there any rhombus with BD\^{}2 arbitrarily close to 480? As a\textbackslash\{\}u21921, BD\^{}2 tends to 480 from above. So 480 is the greatest lower bound. So answer: 480.\textbackslash\{\}n\textbackslash\{\}nBut they ask \textbackslash\{\}"largest number less than}

\Reason{BD\^{}2 for all rhombus\textbackslash\{\}" meaning the supremum of numbers less than all BD\^{}2. That is 480. So answer: 480.\textbackslash\{\}n\textbackslash\{\}nLet's double-check with example: choose u large but finite, say u=5, tanh u \~{} 0.9999? Actually tanh 5 \~{} 0.999909. a\^{}2 \~{} 0.9998. Denominator = 36*0.9999 -25 = 35.996 -25=10.996. BD\^{}2 = 80 + 4400/10.996 = 80 + 400.6 = 480.6. So BD\^{}2 \~{} 480.6 > 480. So indeed.\textbackslash\{\}n\textbackslash\{\}nThus answer: 480.\textbackslash\{\}n\textbackslash\{\}nWe should confirm rhombus condition: x1x2 + y1y2 = 0. Let's test with u large, v finite such that tanh v = -5/(6 tanh u). For u=5, a=0.9999, t= -5/(6*0.9999) ≈ -0.8334. Then v = artanh(-0.8334) ≈ -1.0? Actually tanh(-1) = -0.761. Not -0.833. So need v such that tanh v = -0.833. v ≈ -1.2? Let's compute tanh(-1.2) ≈}

\Execution{-0.833. Yes. So coordinates: A at (x1,y1) huge, B at finite. So rhombus vertices: A large, B finite, C opposite of A large negative, D opposite of B. So rhombus large, but still finite? Actually A coordinates huge, but still finite. So rhombus defined. So BD\^{}2 = 4*(x2\^{}2 + y2\^{}2). x2 finite, y2 finite. So BD\^{}2 finite. So rhombus exists. So BD\^{}2 > 480. So answer 480.\textbackslash\{\}n\textbackslash\{\}nThus final answer: \textbackslash\{\}\textbackslash\{\}boxed\{480\}.assistantfinalLet the vertices of the rhombus be \textbackslash\{\}n\textbackslash\{\}n\textbackslash\{\}\textbackslash\{\}[\textbackslash\{\}nA=(x\_1,y\_1),\textbackslash\{\}\textbackslash\{\}quad B=(x\_2,y\_2),\textbackslash\{\}\textbackslash\{\}quad C=(-x\_1,-y\_1),\textbackslash\{\}\textbackslash\{\}quad D=(-x\_2,-y\_2)\textbackslash\{\}n\textbackslash\{\}\textbackslash\{\}]\textbackslash\{\}n\textbackslash\{\}nAll four points lie on the hyperbola\textbackslash\{\}n\textbackslash\{\}n\textbackslash\{\}\textbackslash\{\}[\textbackslash\{\}n\textbackslash\{\}\textbackslash\{\}frac\{x\^{}\{2\}\}\{20\}-\textbackslash\{\}\textbackslash\{\}frac\{y\^{}\{2\}\}\{24\}=1 .\textbackslash\{\}n\textbackslash\{\}\textbackslash\{\}]\textbackslash\{\}n\textbackslash\{\}n\textbackslash\{\}n\#\#\# 1. Parameterisation of the hyperbola\textbackslash\{\}n\textbackslash\{\}nWrite\textbackslash\{\}n\textbackslash\{\}n\textbackslash\{\}\textbackslash\{\}[\textbackslash\{\}nx=\textbackslash\{\}\textbackslash\{\}sqrt\{20\}\textbackslash\{\}\textbackslash\{\}cosh t,\textbackslash\{\}\textbackslash\{\}qquad y=\textbackslash\{\}\textbackslash\{\}sqrt\{24\}\textbackslash\{\}\textbackslash\{\}sinh t\textbackslash\{\}n\textbackslash\{\}\textbackslash\{\}]\textbackslash\{\}n\textbackslash\{\}nfor a real parameter \textbackslash\{\}\textbackslash\{\}(t\textbackslash\{\}\textbackslash\{\}). \textbackslash\{\}nThus\textbackslash\{\}n\textbackslash\{\}n\textbackslash\{\}\textbackslash\{\}[\textbackslash\{\}nA=(\textbackslash\{\}\textbackslash\{\}sqrt\{20\}\textbackslash\{\}\textbackslash\{\}cosh u,\textbackslash\{\}\textbackslash\{\};\textbackslash\{\}\textbackslash\{\}sqrt\{24\}\textbackslash\{\}\textbackslash\{\}sinh u),\textbackslash\{\}\textbackslash\{\}qquad \textbackslash\{\}nB=(\textbackslash\{\}\textbackslash\{\}sqrt\{20\}\textbackslash\{\}\textbackslash\{\}cosh v,\textbackslash\{\}\textbackslash\{\};\textbackslash\{\}\textbackslash\{\}sqrt\{24\}\textbackslash\{\}\textbackslash\{\}sinh v).\textbackslash\{\}n\textbackslash\{\}\textbackslash\{\}]\textbackslash\{\}n\textbackslash\{\}n\textbackslash\{\}n\#\#\# 2. Rhombus condition\textbackslash\{\}n\textbackslash\{\}nThe sides of a rhombus are equal, so\textbackslash\{\}n\textbackslash\{\}n\textbackslash\{\}\textbackslash\{\}[\textbackslash\{\}n|(x\_2-x\_1)\^{}2+(y\_2-y\_1)\^{}2|=(x\_1+x\_2)\^{}2+(y\_1+y\_2)\^{}2 .\textbackslash\{\}n\textbackslash\{\}\textbackslash\{\}]\textbackslash\{\}n\textbackslash\{\}nThis simplifies to\textbackslash\{\}n\textbackslash\{\}n\textbackslash\{\}\textbackslash\{\}[\textbackslash\{\}nx\_1x\_2+y\_1y\_2=0. \textbackslash\{\}\textbackslash\{\}tag\{1\}\textbackslash\{\}n\textbackslash\{\}\textbackslash\{\}]\textbackslash\{\}n\textbackslash\{\}nSubstituting the hyperbolic parametrisation gives\textbackslash\{\}n\textbackslash\{\}n\textbackslash\{\}\textbackslash\{\}[\textbackslash\{\}n\textbackslash\{\}\textbackslash\{\}sqrt\{20\}\textbackslash\{\}\textbackslash\{\}cosh u\textbackslash\{\}\textbackslash\{\};\textbackslash\{\}\textbackslash\{\}sqrt\{20\}\textbackslash\{\}\textbackslash\{\}cosh v+\textbackslash\{\}\textbackslash\{\}sqrt\{24\}\textbackslash\{\}\textbackslash\{\}sinh u\textbackslash\{\}\textbackslash\{\};\textbackslash\{\}\textbackslash\{\}sqrt\{24\}\textbackslash\{\}\textbackslash\{\}sinh v=0\textbackslash\{\}n\textbackslash\{\}\textbackslash\{\}]\textbackslash\{\}n\textbackslash\{\}nor\textbackslash\{\}n\textbackslash\{\}n\textbackslash\{\}\textbackslash\{\}[\textbackslash\{\}n5\textbackslash\{\}\textbackslash\{\}cosh u\textbackslash\{\}\textbackslash\{\}cosh}

\Execution{{v+6\textbackslash\{\}\textbackslash\{\}sinh u\textbackslash\{\}\textbackslash\{\}sinh v=0. \textbackslash\{\}\textbackslash\{\}tag\{2\}\textbackslash\{\}n\textbackslash\{\}\textbackslash\{\}]\textbackslash\{\}n\textbackslash\{\}nUsing \textbackslash\{\}\textbackslash\{\}(\textbackslash\{\}\textbackslash\{\}tanh u=\textbackslash\{\}\textbackslash\{\}dfrac\{\textbackslash\{\}\textbackslash\{\}sinh u\}\{\textbackslash\{\}\textbackslash\{\}cosh u\}\textbackslash\{\}\textbackslash\{\}) and \textbackslash\{\}\textbackslash\{\}(\textbackslash\{\}\textbackslash\{\}tanh v=\textbackslash\{\}\textbackslash\{\}dfrac\{\textbackslash\{\}\textbackslash\{\}sinh v\}\{\textbackslash\{\}\textbackslash\{\}cosh v\}\textbackslash\{\}\textbackslash\{\}),\textbackslash\{\}nequation (2) becomes\textbackslash\{\}n\textbackslash\{\}n\textbackslash\{\}\textbackslash\{\}[\textbackslash\{\}n5\textbackslash\{\}\textbackslash\{\}tanh u\textbackslash\{\}\textbackslash\{\},\textbackslash\{\}\textbackslash\{\}tanh v+6=0
\textbackslash\{\}n\textbackslash\{\}\textbackslash\{\}quad\textbackslash\{\}\textbackslash\{\}Longrightarrow\textbackslash\{\}\textbackslash\{\}quad\textbackslash\{\}n\textbackslash\{\}\textbackslash\{\}tanh u\textbackslash\{\}\textbackslash\{\},\textbackslash\{\}\textbackslash\{\}tanh v=-\textbackslash\{\}\textbackslash\{\}frac56 .\textbackslash\{\}n\textbackslash\{\}\textbackslash\{\}tag\{3\}\textbackslash\{\}n\textbackslash\{\}\textbackslash\{\}]\textbackslash\{\}n\textbackslash\{\}\textbackslash\{\}n\#\#\# 3. Diagonal length \textbackslash\{\}\textbackslash\{\}(BD\textbackslash\{\}\textbackslash\{\})\textbackslash\{\}n\textbackslash\{\}nThe diagonal \textbackslash\{\}\textbackslash\{\}(BD\textbackslash\{\}\textbackslash\{\}) joins \textbackslash\{\}\textbackslash\{\}(B\textbackslash\{\}\textbackslash\{\}) and \textbackslash\{\}\textbackslash\{\}(D\textbackslash\{\}\textbackslash\{\}), so\textbackslash\{\}n\textbackslash\{\}n\textbackslash\{\}\textbackslash\{\}[\textbackslash\{\}nBD\^{}2=4(x\_2\^{}2+y\_2\^{}2)\textbackslash\{\}n =4\textbackslash\{\}\textbackslash\{\}bigl(20\textbackslash\{\}\textbackslash\{\}cosh\^{}2 v+24\textbackslash\{\}\textbackslash\{\}sinh\^{}2 v\textbackslash\{\}\textbackslash\{\}bigr)\textbackslash\{\}n =4\textbackslash\{\}\textbackslash\{\}Bigl(20+44\textbackslash\{\}\textbackslash\{\}sinh\^{}2 v\textbackslash\{\}\textbackslash\{\}Bigr).\textbackslash\{\}n\textbackslash\{\}\textbackslash\{\}]\textbackslash\{\}n\textbackslash\{\}nLet \textbackslash\{\}\textbackslash\{\}(t=\textbackslash\{\}\textbackslash\{\}tanh v\textbackslash\{\}\textbackslash\{\}). \textbackslash\{\}nThen \textbackslash\{\}\textbackslash\{\}(\textbackslash\{\}\textbackslash\{\}sinh\^{}2 v=\textbackslash\{\}\textbackslash\{\}dfrac\{t\^{}2\}\{1-t\^{}2\}\textbackslash\{\}\textbackslash\{\}), and\textbackslash\{\}n\textbackslash\{\}n\textbackslash\{\}\textbackslash\{\}[\textbackslash\{\}nBD\^{}2=4\textbackslash\{\}\textbackslash\{\}Bigl(20+44\textbackslash\{\}\textbackslash\{\}frac\{t\^{}2\}\{1-t\^{}2\}\textbackslash\{\}\textbackslash\{\}Bigr)\textbackslash\{\}n =80+\textbackslash\{\}\textbackslash\{\}frac\{176\textbackslash\{\}\textbackslash\{\},t\^{}2\}\{1-t\^{}2\}. \textbackslash\{\}\textbackslash\{\}tag\{3a\}\textbackslash\{\}n\textbackslash\{\}\textbackslash\{\}]\textbackslash\{\}n\textbackslash\{\}\textbackslash\{\}n\#\#\# 4. Express \textbackslash\{\}\textbackslash\{\}(t\textbackslash\{\}\textbackslash\{\}) through \textbackslash\{\}\textbackslash\{\}(\textbackslash\{\}\textbackslash\{\}tanh u\textbackslash\{\}\textbackslash\{\})\textbackslash\{\}n\textbackslash\{\}nFrom (3) we have \textbackslash\{\}\textbackslash\{\}(t=-\textbackslash\{\}\textbackslash\{\}dfrac\{5\}\{6\textbackslash\{\}\textbackslash\{\},\textbackslash\{\}\textbackslash\{\}tanh u\}\textbackslash\{\}\textbackslash\{\}). \textbackslash\{\}nLet \textbackslash\{\}\textbackslash\{\}(a=\textbackslash\{\}\textbackslash\{\}tanh u\textbackslash\{\}\textbackslash\{\};(0<a<1)\textbackslash\{\}\textbackslash\{\}). Then\textbackslash\{\}n\textbackslash\{\}n\textbackslash\{\}\textbackslash\{\}[\textbackslash\{\}nt\^{}2=\textbackslash\{\}\textbackslash\{\}frac\{25\}\{36a\^{}2\} \textbackslash\{\}\textbackslash\{\}quad\textbackslash\{\}\textbackslash\{\}Longrightarrow\textbackslash\{\}\textbackslash\{\}quad\textbackslash\{\}n\textbackslash\{\}\textbackslash\{\}frac\{t\^{}2\}\{1-t\^{}2\}=\textbackslash\{\}\textbackslash\{\}frac\{25\}\{36a\^{}2-25\}.\textbackslash\{\}n\textbackslash\{\}\textbackslash\{\}]\textbackslash\{\}n\textbackslash\{\}nSubstituting into (3a),\textbackslash\{\}n\textbackslash\{\}n\textbackslash\{\}\textbackslash\{\}[\textbackslash\{\}nBD\^{}2=80+\textbackslash\{\}\textbackslash\{\}frac\{4400\}\{36a\^{}2-25\}. \textbackslash\{\}\textbackslash\{\}tag\{4\}\textbackslash\{\}n\textbackslash\{\}\textbackslash\{\}]\textbackslash\{\}n\textbackslash\{\}\textbackslash\{\}n\#\#\# 5. Minimising \textbackslash\{\}\textbackslash\{\}(BD\^{}2\textbackslash\{\}\textbackslash\{\})\textbackslash\{\}n\textbackslash\{\}nFor a rhombus we need \textbackslash\{\}\textbackslash\{\}(0<a<1\textbackslash\{\}\textbackslash\{\}). \textbackslash\{\}nThe denominator \textbackslash\{\}\textbackslash\{\}(36a\^{}2-25\textbackslash\{\}\textbackslash\{\}) increases with \textbackslash\{\}\textbackslash\{\}(a\textbackslash\{\}\textbackslash\{\}), so \textbackslash\{\}\textbackslash\{\}(BD\^{}2\textbackslash\{\}\textbackslash\{\}) is\textbackslash\{\}n**minimal** when \textbackslash\{\}\textbackslash\{\}(a\textbackslash\{\}\textbackslash\{\}) is as large as possible, i.e. as \textbackslash\{\}\textbackslash\{\}(a\textbackslash\{\}\textbackslash\{\}to 1\^{}\{-\}\textbackslash\{\}\textbackslash\{\}).\textbackslash\{\}n\textbackslash\{\}nTaking the limit \textbackslash\{\}\textbackslash\{\}(a\textbackslash\{\}\textbackslash\{\}to 1\textbackslash\{\}\textbackslash\{\}),\textbackslash\{\}n\textbackslash\{\}n\textbackslash\{\}\textbackslash\{\}[\textbackslash\{\}n\textbackslash\{\}\textbackslash\{\}lim\_\{a\textbackslash\{\}\textbackslash\{\}to1\}BD\^{}2=80+\textbackslash\{\}\textbackslash\{\}frac\{4400\}\{11\}=480 .\textbackslash\{\}n\textbackslash\{\}\textbackslash\{\}]\textbackslash\{\}n\textbackslash\{\}nFor every finite rhombus \textbackslash\{\}\textbackslash\{\}(a<1\textbackslash\{\}\textbackslash\{\}),\textbackslash\{\}\textbackslash\{\};BD\^{}2>480\textbackslash\{\}\textbackslash\{\}); hence\textbackslash\{\}n\textbackslash\{\}\textbackslash\{\}(480\textbackslash\{\}\textbackslash\{\}) is the greatest lower bound of \textbackslash\{\}\textbackslash\{\}(BD\^{}2\textbackslash\{\}\textbackslash\{\}).\textbackslash\{\}n\textbackslash\{\}n\textbackslash\{\}n\textbackslash\{\}\textbackslash\{\}[\textbackslash\{\}n\textbackslash\{\}\textbackslash\{\}boxed\{480\}]",}}
\begin{tcolorbox}[colback=gray!10, colframe=black, boxrule=0.5pt, arc=2pt, left=0pt, right=0pt, top=0pt, bottom=0pt]
{\textbf{Input Prompt 2:} Every morning Aya goes for a $9$-kilometer-long walk and stops at a coffee shop afterwards. When she walks at a constant speed of $s$ kilometers per hour, the walk takes her $4$ hours, including $t$ minutes spent in the coffee shop. When she walks at $s + 2$ kilometers per hour, the walk takes her $2$ hours and $24$ minutes, including $t$ minutes spent in the coffee shop. Suppose Aya walks at $s + \frac{1}{2}$ kilometers per hour. Find the number of minutes the walk takes her, including the $t$ minutes spent in the coffee shop.
}
\end{tcolorbox}

\textbf{GPT-OSS-20B Model Output: }\emph{We have a problem: Aya walks 9 km, stops at coffee shop for $t$ minutes. When walking at speed $s$ km/h, time walking $= 9/s$ hours. Total time including coffee shop $= 9/s + t/60$ hours $= 4$ hours. So equation (1): $9/s + t/60 = 4$. When walking at speed $s+2$ km/h, walking time $= 9/(s+2)$. Total time $= 9/(s+2) + t/60 = 2$ hours $24$ minutes $= 2 + 24/60 = 2 + 0.4 = 2.4$ hours. So equation (2): $9/(s+2) + t/60 = 2.4$.  
We need $t$ and $s$. Solve.  
Let’s denote $t_{\text{min}} = t$ (minutes). Convert $t/60$ to hours. Use equations.  
Equation (1): $9/s = 4 - t/60$.  
Equation (2): $9/(s+2) = 2.4 - t/60$.  
Let’s denote $A = t/60$. Then:  
$9/s = 4 - A \;\;\Rightarrow\;\; s = 9/(4 - A)$. (Equation A)  
$9/(s+2) = 2.4 - A \;\;\Rightarrow\;\; s+2 = 9/(2.4 - A) \;\;\Rightarrow\;\; s = 9/(2.4 - A) - 2$. (Equation B)  
Set them equal:  
$9/(4 - A) = 9/(2.4 - A) - 2$.  
Solve for $A$.  
Let’s compute. Multiply both sides by denominators. Let's let $x = A$.  
Equation: $9/(4 - x) = 9/(2.4 - x) - 2$.  
Bring terms: $9/(4 - x) + 2 = 9/(2.4 - x)$.  
Compute LHS: $9/(4 - x) + 2 = [9 + 2(4 - x)]/(4 - x) = [9 + 8 - 2x]/(4 - x) = (17 - 2x)/(4 - x)$.  
Thus RHS: $9/(2.4 - x)$.  
So $(17 - 2x)/(4 - x) = 9/(2.4 - x)$.  
Cross-multiply: $(17 - 2x)(2.4 - x) = 9(4 - x)$.  
Compute left side: Expand.  
Let’s compute $(17 - 2x)(2.4 - x)$. Multiply: $17 \cdot 2.4 = 40.8$. $17 \cdot (-x) = -17x$. $(-2x)\cdot 2.4 = -4.8x$. $(-2x)(-x) = 2x^2$. Sum: $40.8 - 17x - 4.8x + 2x^2 = 40.8 - 21.8x + 2x^2$.  
Right side: $9(4 - x) = 36 - 9x$.  
Set equal: $40.8 - 21.8x + 2x^2 = 36 - 9x$.  
Bring all to left: $40.8 - 21.8x + 2x^2 - 36 + 9x = 0 \;\;\Rightarrow\;\; (40.8 - 36) = 4.8$. Combine $x$ terms: $-21.8x + 9x = -12.8x$. So equation: $2x^2 - 12.8x + 4.8 = 0$. Divide by $2$: $x^2 - 6.4x + 2.4 = 0$. Let's just solve: $x^2 - 6.4?$ Let's just solve $9/(4?$ I'm stuck. Let's just solve: $x^2 - 6.4?$ I'm stuck.  
Let's just solve the equation: $x^2 - 6.4?$ Eh. 
Ok, let's solve: $x^2 - 6.4?$ I'm stuck. Let's just compute: $x^2 - 6.4?$ I'm messing.  
Ok, just do step by direct solving.  
Equation: $x^2 - 6.4?$ I'm messing. Let's just solve the quadratic: $x^2 - 6.4?$ Let's solve:  
$x^2 - 6?$ I'm stuck….  
I'm failing.  
Ok I'm messing. I'm stuck. I'm failing. I realize I'm messing. I'm stuck. I'm failing.  
Ok, I'm failing. I'm stuck. I'm stuck.  
Ok, just compute: $x^2 - 6?$ I'm failing. I'm stuck. I'm stuck. This is failing.  
Ok, I'm going to solve: $x^2 - 6?$ I'm stuck. This is failing. I'm stuck. I'm stuck. I'm failing.  
}

\textbf{Note:} The example above illustrates that if a transition thought is fully evicted or quantized to 0 bits, the model enters an endless loop, underscoring that the notional presence of transition thoughts is critical for stable generation.

\section{Limitations}
While ThinKV demonstrates strong performance for long-output reasoning tasks, it is not directly applicable to settings dominated by long input contexts. Should future LRMs place greater emphasis on long-input contexts, additional exploration will be required.   

\section{Impact Statement}
This work improves the generation efficiency of large reasoning models (LRMs) by compressing the KV cache, substantially reducing memory overhead while preserving reasoning accuracy. This enables continuous long-output generation without out of memory (OOM) failures and supports larger batch sizes, yielding higher throughput. Beyond reducing memory, our method maximizes efficiency, contributing to more sustainable AI deployment and expanding accessibility to commodity hardware. As LRMs scale to produce longer outputs, KV cache compression remains an underexplored yet critical direction; our framework offers a generalizable solution that may inspire future algorithm–system co-design. Importantly, while enhancing efficiency, our method introduces no additional societal risks beyond those inherent to LRMs.

\section{LLM Usage Statement}
Portions of this paper were refined with the assistance of a large language model (LLM), specifically ChatGPT 5, used exclusively to polish writing and help reduce verbosity to meet page limit. All technical content, methods, and results were conceived and developed entirely by the authors, without influence from any AI tool.

\end{document}